\def\tr{\mathop{\text{tr}}\kern.2ex}
\DeclareMathOperator*{\softmax}{softmax}
\long\def\comment#1{}
\def\vec{\mathop{\text{vec}}}
\def\tr{\mathop{\text{Tr}}}
\def\cS{{\mathcal{S}}}
\def\cX{{\mathcal{X}}}
\def\cP{{\mathcal{P}}}
\def\cT{{\mathcal{T}}}
\def\tr{{\text{Tr}}}
\def\tf{{\text{F}}}
\def\pie{{\pi_{\text{exp}}}}
\def\mue{{\mu_{\text{exp}}}}
\newcommand{\bel}{\begin{eqnarray}\label}
\newcommand{\eel}{\end{eqnarray}}
\newcommand{\bes}{\begin{eqnarray*}}
\newcommand{\ees}{\end{eqnarray*}}
\def\real{{\mathbb{R}}}
\newcommand{\la}{\langle}
\newcommand{\ra}{\rangle}
\def\##1\#{\begin{align}#1\end{align}}
\def\$#1\${\begin{align*}#1\end{align*}}
\begin{document}

% !TEX root = neural_TD.tex
\title{ Neural Temporal-Difference and Q-Learning\\ Provably Converge to Global Optima}
\author{
	Qi Cai
	\thanks{Department of Industrial Engineering and Management Sciences, Northwestern University}
	\qquad
	Zhuoran Yang
	\thanks{Department of Operations Research and Financial Engineering, Princeton University}
	\qquad
	Jason D. Lee
	\thanks{Department of Electrical Engineering, Princeton University}
	\qquad
	Zhaoran Wang
	\footnotemark[1]
}
\date{}
\maketitle 

\begin{abstract}
%\begin{flushleft}
Temporal-difference learning (TD), coupled with neural networks, is among the most fundamental building blocks of deep reinforcement learning. However, due to the nonlinearity in value function approximation, such a coupling leads to nonconvexity and even divergence in optimization. As a result, the global convergence of neural TD remains unclear. In this paper, we prove for the first time that neural TD converges at a sublinear rate to the global optimum of the mean-squared projected Bellman error for policy evaluation. In particular, we show how such global convergence is enabled by the overparametrization of neural networks, which also plays a vital role in the empirical success of neural TD. Beyond policy evaluation, we establish the global convergence of neural (soft) Q-learning, which is further connected to that of policy gradient algorithms. 
%\end{flushleft}
\end{abstract}
% !TEX root = neural_TD.tex

\section{Introduction}
%\begin{flushleft}
Given a policy, temporal-different learning (TD) \citep{sutton1988learning} aims to learn the corresponding (action-)value function by following the semigradients of the mean-squared Bellman error in an online manner. As the most-used policy evaluation algorithm, TD serves as the ``critic'' component of many reinforcement learning algorithms, such as the actor-critic algorithm \citep{konda2000actor} and trust-region policy optimization \citep{schulman2015trust}. In particular, in deep reinforcement learning, TD is often applied to learn value functions parametrized by neural networks \citep{lillicrap2015continuous, mnih2016asynchronous, haarnoja2018soft}, which gives rise to neural TD. As policy improvement relies crucially on policy evaluation, the optimization efficiency and statistical accuracy of neural TD are critical to the performance of deep reinforcement learning. Towards theoretically understanding deep reinforcement learning, the goal of this paper is to characterize the convergence of neural TD.

Despite the broad applications of neural TD, its convergence remains rarely understood. Even with linear value function approximation, the nonasymptotic convergence of TD remains open until recently \citep{bhandari2018finite, lakshminarayanan2018linear, dalal2018finite, srikant2019finite}, although its asymptotic convergence is well understood \citep{jaakkola1994convergence, tsitsiklis1997analysis, borkar2000ode, kushner2003stochastic, borkar2009stochastic}. Meanwhile, with nonlinear value function approximation, TD is known to diverge in general \citep{baird1995residual, boyan1995generalization, tsitsiklis1997analysis, chung2018two, achiam2019towards}. To remedy such an issue, \cite{bhatnagar2009convergent} propose nonlinear (gradient) TD, which uses the tangent vectors of nonlinear value functions in place of the feature vectors in linear TD. Unlike linear TD, which converges to the global optimum of the mean-squared projected Bellman error (MSPBE), nonlinear TD is only guaranteed to converge to a local optimum asymptotically. As a result, the statistical accuracy of the value function learned by nonlinear TD remains unclear. In contrast to such conservative theory, neural TD, which straightforwardly combines TD with neural networks without the explicit local linearization in nonlinear TD, often learns a desired value function that generalizes well to unseen states in practice \citep{duan2016benchmarking, amiranashvili2018td, henderson2018deep}. Hence, a gap separates theory from practice. %schulman2015high

There exist three obstacles towards closing such a theory-practice gap: (i) MSPBE has an expectation with respect to the transition dynamics within the squared loss, which forbids the construction of unbiased stochastic gradients \citep{sutton2018reinforcement}. As a result, even with linear value function approximation, TD largely eludes the classical optimization framework, as it follows biased stochastic semigradients. (ii) When the value function is parametrized by a neural network, MSPBE is nonconvex in the weights of the neural network, which may introduce undesired stationary points such as local optima and saddle points \citep{jain2017non}. As a result, even an ideal algorithm that follows the population gradients of MSPBE may get trapped. (iii) Due to the interplay between the bias in stochastic semigradients and the nonlinearity in value function approximation, neural TD may even diverge \citep{baird1995residual, boyan1995generalization, tsitsiklis1997analysis}, instead of converging to an undesired stationary point, as it lacks the explicit local linearization in nonlinear TD \citep{bhatnagar2009convergent}. Such divergence is also not captured by the classical optimization framework.

\vskip4pt
\noindent{\bf Contribution:} Towards bridging theory and practice, we establish the first nonasymptotic global rate of convergence of neural TD. In detail, we prove that randomly initialized neural TD converges to the global optimum of MSPBE at the rate of $1/T$ with population semigradients and at the rate of $1/\sqrt{T}$ with stochastic semigradients. Here $T$ is the number of iterations and the (action-)value function is parametrized by a sufficiently wide multi-layer neural network. Moreover, we prove that the projection in MSPBE allows for a sufficiently rich class of functions, which has the same representation power of a reproducing kernel Hilbert space associated with the random initialization. As a result, for a broad class of reinforcement learning problems, neural TD attains zero MSPBE. Beyond policy evaluation, we further establish the global convergence of neural (soft) Q-learning, which allows for policy improvement. In particular, we prove that, under stronger regularity conditions, neural (soft) Q-learning converges at the same rate of neural TD to the global optimum of MSPBE for policy optimization. Also, by exploiting the connection between (soft) Q-learning and policy gradient algorithms \citep{schulman2017equivalence, haarnoja2018soft}, we establish the global convergence of a variant of the policy gradient algorithm \citep{williams1992simple, szepesvari2010algorithms, sutton2018reinforcement}. 

At the core of our analysis is the overparametrization of the multi-layer neural network for value function approximation, which enables us to circumvent the three obstacles above. In particular, overparametrization leads to an implicit local linearization that varies smoothly along the solution path, which mirrors the explicit one in nonlinear TD \citep{bhatnagar2009convergent}. Such an implicit local linearization enables us to circumvent the third obstacle of possible divergence. Moreover, overparametrization allows us to establish a notion of one-point monotonicity \citep{harker1990finite, facchinei2007finite} for the semigradients followed by neural TD, which ensures its evolution towards the global optimum of MSPBE along the solution path. Such a notion of monotonicity enables us to circumvent the first and second obstacles of bias and nonconvexity. Broadly speaking, our theory backs the empirical success of overparametrized neural networks in deep reinforcement learning. In particular, we show that instead of being a curse, overparametrization is indeed a blessing for minimizing MSPBE in the presence of bias, nonconvexity, and even divergence. 

\vskip4pt
\noindent{\bf More Related Work:} There is a large body of literature on the convergence of linear TD under both asymptotic \citep{jaakkola1994convergence, tsitsiklis1997analysis, borkar2000ode, kushner2003stochastic, borkar2009stochastic} and nonasymptotic \citep{bhandari2018finite, lakshminarayanan2018linear, dalal2018finite, srikant2019finite} regimes. See \cite{dann2014policy} for a detailed survey. In particular, our analysis is based on the recent breakthrough in the nonasymptotic analysis of linear TD \citep{bhandari2018finite} and its extension to linear Q-learning \citep{zou2019finite}. An essential step of our analysis is bridging the evolution of linear TD and neural TD through the implicit local linearization induced by overparametrization. See also the concurrent work of \cite{brandfonbrener2019geometric, brandfonbrener2019expected, agazzi2019temporal} on neural TD, which however requires the state space to be finite. 

To incorporate nonlinear value function approximation into TD, \cite{bhatnagar2009convergent} propose the first convergent nonlinear TD based on explicit local linearization, which however only converges to a local optimum of MSPBE. See \cite{geist2013algorithmic, bertsekas2019feature} for a detailed survey. In contrast, we prove that, with the implicit local linearization induced by overparametrization, neural TD, which is simpler to implement and more widely used in deep reinforcement learning than nonlinear TD, provably converges to the global optimum of MSPBE. 

There exist various extensions of TD, including least-squares TD \citep{bradtke1996linear, boyan1999least, lazaric2010finite, ghavamzadeh2010lstd, tu2017least} and gradient TD \citep{sutton2009fast, sutton2009convergent, bhatnagar2009convergent, liu2015finite, du2017stochastic, wang2017finite, touati2017convergent}. In detail, least-squares TD is based on batch update, which loses the computational and statistical efficiency of the online update in TD. Meanwhile, gradient TD follows unbiased stochastic gradients, but at the cost of introducing another optimization variable. Such a reformulation leads to bilevel optimization, which is less stable in practice when combined with neural networks \citep{pfau2016connecting}. As a result, both extensions of TD are less widely used in deep reinforcement learning \citep{duan2016benchmarking, amiranashvili2018td, henderson2018deep}. Moreover, when using neural networks for value function approximation, the convergence to the global optimum of MSPBE remains unclear for both extensions of TD.  

Our work is also related to the recent breakthrough in understanding overparametrized neural networks, especially their generalization error \citep{zhang2016understanding, neyshabur2018towards, li2018learning, allen2018learninga, allen2018learningb, allen2018convergence, zou2018stochastic, arora2019fine, cao2019bounds, cao2019generalization}. See \cite{fan2019selective} for a detailed survey. In particular, \cite{daniely2017sgd, chizat2018note, jacot2018neural, li2018learning, allen2018learninga, allen2018learningb, allen2018convergence, zou2018stochastic, arora2019fine, cao2019bounds, cao2019generalization, lee2019wide} characterize the implicit local linearization in the context of supervised learning, where we train an overparametrized neural network by following the stochastic gradients of the mean-squared error. In contrast, neural TD does not follow the stochastic gradients of any objective function, hence leading to possible divergence, which makes the convergence analysis more challenging. 
%\end{flushleft}

%ref: Policy Evaluation with Temporal Differences: A Survey and Comparison
%learning to predict by the methods of temporal differences
%data-efficiency (online)
%abeel's empirical paper actor-critic
%Algorithms for reinforcement learning

% !TEX root = neural_TD.tex

\section{Background}
%\begin{flushleft} 
In Section \ref{peirl}, we briefly review policy evaluation in reinforcement learning. In Section \ref{onl}, we introduce the corresponding optimization formulations.
\subsection{Policy Evaluation}\label{peirl}
We consider a Markov decision process $(\cS,\cA,\cP,r,\gamma)$, in which an agent interacts with the environment to learn the optimal policy that maximizes the expected total reward. At the $t$-th time step, the agent has a state $s_t\in\cS$ and takes an action $a_t\in\cA$. Upon taking the action, the agent enters the next state $s_{t+1}\in\cS$ according to the transition probability $\cP(\cdot\,|\,s_t,a_t)$ and receives a random reward $r_t=r(s_t,a_t)$ from the environment. The action that the agent takes at each state is decided by a policy $\pi:\cS\rightarrow\Delta$, where $\Delta$ is the set of all probability distributions over $\cA$. The performance of policy $\pi$ is measured by the expected total reward, $J(\pi)=\EE[ \sum_{t=0}^\infty \gamma^t r_t \,|\,a_t\sim\pi(s_t) ]$,
where $\gamma < 1$ is the discount factor. 

Given policy $\pi$, policy evaluation aims to learn the following two functions, the value function $V^{\pi}(s)=\EE[ \sum_{t=0}^\infty \gamma^t r_t \,|\,s_0=s, a_t\sim\pi(s_t)]$ and the action-value function (Q-function) $Q^{\pi}(s,a)=\EE[ \sum_{t=0}^\infty \gamma^t r_t \,|\,s_0=s,a_0=a, a_t\sim\pi(s_t)]$. Both functions form the basis for policy improvement. Without loss of generality, we focus on learning the Q-function in this paper. We define the Bellman evaluation operator, 
\#\label{bmeqn}
\cT^{\pi}Q(s,a)=\EE[r(s,a)+\gamma Q(s',a')\,|\,s'\sim\cP(\cdot\,|\,s,a), a'\sim \pi(s')],
\#
for which $Q^{\pi}$ is the fixed point, that is, the solution to the Bellman equation $Q = \cT^{\pi} Q$.

\subsection{Optimization Formulation}\label{onl}
Corresponding to \eqref{bmeqn}, we aim to learn $Q^\pi$ by minimizing the mean-squared Bellman error (MSBE), 
\#\label{513940}
\min_{\theta}\text{MSBE}(\theta)=\EE_{(s,a)\sim\mu}\bigl[ \bigl(\hat{Q}_{\theta}(s,a)-\cT^{\pi}\hat{Q}_{\theta}(s,a)\bigr)^2\bigr],
\#
where the Q-function is parametrized by $\hat{Q}_{\theta}$ with parameter $\theta$. Here $\mu$ is the stationary distribution of $(s, a)$ corresponding to policy $\pi$. Due to Q-function approximation, we focus on minimizing the following surrogate of MSBE, namely the projected mean-squared Bellman error (MSPBE), 
\#\label{5131118}
\min_{\theta}\text{MSPBE}(\theta)=\EE_{(s,a)\sim\mu}\bigl[ \bigl(\hat{Q}_{\theta}(s,a)-\Pi_{\cF}\cT^{\pi}\hat{Q}_{\theta}(s,a)\bigr)^2\bigr].
\#
Here $\Pi_{\cF}$ is the projection onto a function class $\cF$. For example, for linear Q-function approximation \citep{sutton1988learning}, $\cF$ takes the form $\{\hat{Q}_{\theta'}:\theta'\in\Theta\}$, where $\hat{Q}_{\theta'}$ is linear in $\theta'$ and $\Theta$ is the set of feasible parameters. As another example, for nonlinear Q-function approximation \citep{bhatnagar2009convergent}, $\cF$ takes the form $\{\hat{Q}_\theta+\nabla_\theta\hat{Q}_\theta^\top(\theta'-\theta):\theta'\in\Theta\}$, which consists of the local linearization of $\hat{Q}_{\theta'}$ at $\theta$.

Throughout Sections \ref{secalgo}-\ref{qlearning}, we assume that we are able to sample tuples in the form of $(s,a,r,s',a')$ from the stationary distribution of policy $\pi$ in an independent and identically distributed manner. Our analysis is extended to handle temporal dependence in Appendix \ref{secmarkov} using the proof techniques of \cite{bhandari2018finite}. With a slight abuse of notation, we use $\mu$ to denote the stationary distribution of $(s,a,r,s',a')$ corresponding to policy $\pi$ and any of its marginal distributions. 
%\end{flushleft}

\section{Neural Temporal-Difference Learning}\label{secalgo}
%\begin{flushleft}
TD updates the parameter $\theta$ of the Q-function by taking the stochastic semigradient descent step \citep{sutton1988learning, szepesvari2010algorithms, sutton2018reinforcement}, 
\#\label{5131033}
\theta'\leftarrow\theta-\eta\cdot \big( \hat{Q}_{\theta}(s,a)-r(s,a)-\gamma\hat{Q}_{\theta}(s',a') \bigr)\cdot \nabla_{\theta}\hat{Q}_{\theta}(s,a),
\#
which corresponds to the MSBE in \eqref{513940}. Here $(s,a,r,s',a') \sim \mu$ and $\eta > 0$ is the stepsize. In a more general context, \eqref{5131033} is referred to as TD(0). In this paper, we focus on TD(0), which is abbreviated as TD, and leave the extension to TD($\lambda$) to future work. 

In the sequel, we consider $\cS$ to be continuous and $\cA$ to be finite. We represent the state-action pair $(s,a)\in\cS\times\cA$ by a vector $x=\psi(s,a)\in\cX\subseteq\real^d$ with $d>2$, where $\psi$ is a given one-to-one feature map. With a slight abuse of notation, we use $(s,a)$ and $x$ interchangeably. Without loss of generality, we assume that $\|x\|_2=1$ and $|r(x)|$ is upper bounded by a constant $\overline{r}>0$ for any $x\in\cX$. We use a two-layer neural network 
\#\label{nnpara}
\hat{Q}(x;W)=\frac{1}{\sqrt{m}}\sum_{r=1}^m b_r\sigma(W_r^\top x)
\#
to parametrize the Q-function, which is extended to a multi-layer neural network in Appendix \ref{deep}. Here $\sigma$ is the rectified linear unit (ReLU) activation function $\sigma(y)=\max\{0,y\}$ and the parameter $\theta = (b_1,\ldots, b_m, W_1,\ldots,W_m)$ are initialized as $b_r\sim\text{Unif}(\{-1,1\})$ and $W_r\sim N(0,I_d/d)$ for any $r\in[m]$ independently. During training, we only update $W=(W_1,\ldots,W_m) \in \RR^{md}$, while keeping $b = (b_1,\ldots, b_m) \in \RR^m$ fixed as the random initialization. To ensure global convergence, we incorporate an additional projection step with respect to $W$. See Algorithm \ref{td0} for a detailed description.

\begin{algorithm}
\caption{Neural TD}
\begin{algorithmic}[1]
\STATE \textbf{Initialization:} $b_r\sim\text{Unif}(\{-1,1\})$, $W_r(0)\sim N(0,I_d/d)$ $(r\in[m])$, $\overline{W}=W(0)$,
 \label{Istep}\\
{\color{white}\textbf{Initialization:}} $S_B=\{W\in\real^{md}:\|W-W(0)\|_2\le B\}$ $(B>0)$
\STATE \textbf{For} {$t=0$ to $T-2$}:
\STATE \hspace{0.15in} Sample a tuple $(s,a,r,s',a')$ from the stationary distribution $\mu$ of policy $\pi$
\STATE \hspace{0.15in} Let $x=(s,a)$, $x'=(s',a')$
\STATE \hspace{0.15in} Bellman residual calculation: $\delta \leftarrow \hat{Q}(x;W(t))-r-\gamma\hat{Q}(x';W(t))$
\STATE \hspace{0.15in} TD update: $\tilde{W}(t+1)\leftarrow W(t)-\eta\delta\cdot\nabla_{W}\hat{Q}(x;W(t))$  \label{algo_td}\\
\STATE \hspace{0.15in} Projection: $W(t+1)\leftarrow \argmin_{W\in S_B}\|W-\tilde{W}(t+1)\|_2$
\STATE \hspace{0.15in} Averaging: $\overline{W}\leftarrow \frac{t+1}{t+2}\cdot \overline{W}+\frac{1}{t+2}\cdot W(t+1)$
\STATE \textbf{End For} 
\STATE \textbf{Output:}  $\hat{Q}_\text{out}(\cdot)\leftarrow \hat{Q}(\cdot\,;\overline{W})$
\end{algorithmic}\label{td0}
\end{algorithm} 

To understand the intuition behind the global convergence of neural TD, note that for the TD update in \eqref{5131033}, we have from \eqref{bmeqn} that
\# \label{55311}
&\EE_{(s, a, r, s', a') \sim \mu}\bigl[
\bigl(
\hat{Q}_\theta(s,a)-r(s,a)-\gamma\hat{Q}_{\theta}(s',a')\bigr) \cdot \nabla_{\theta}  \hat{Q}_\theta(s,a)
\bigr]\notag\\
&\quad = \EE_{(s, a) \sim \mu}\bigl[
\bigl(
\hat{Q}_\theta(s,a)- \EE[r(s,a)+\gamma Q(s',a')\,|\,s'\sim\cP(\cdot\,|\,s,a), a'\sim \pi(s')]\bigr) \cdot \nabla_{\theta}  \hat{Q}_\theta(s,a)
\bigr]\notag\\
&\quad = \EE_{(s, a) \sim \mu}\bigl[
\underbrace{\bigl(
\hat{Q}_\theta(s,a)-\cT^{\pi}\hat{Q}_{\theta}(s,a)\bigr)}_{\displaystyle\text{(i)}} \cdot 
\underbrace{\nabla_{\theta}\hat{Q}_\theta(s,a)}_{\displaystyle\text{(ii)}}
\bigr].
\#
Here (i) is the Bellman residual at $(s,a)$, while (ii) is the gradient of the first term in (i). Although the TD update in \eqref{5131033} resembles the stochastic gradient descent step for minimizing a mean-squared error, it is not an unbiased stochastic gradient of any objective function. However, we show that the TD update yields a descent direction towards the global optimum of the MSPBE in \eqref{5131118}. Moreover, as the neural network becomes wider, the function class $\cF$ that $\Pi_{\cF}$ projects onto in \eqref{5131118} becomes richer. Correspondingly, the MSPBE reduces to the MSBE in \eqref{513940} as the projection becomes closer to identity, which implies the recovery of the desired Q-function $Q^\pi$ such that $Q^\pi = \cT^{\pi} Q^\pi$. See Section \ref{tr} for a more rigorous characterization. 
%\end{flushleft}

\section{Main Results}\label{tr}
%\begin{flushleft}
In Section \ref{5131049}, we characterize the global optimality of the stationary point attained by Algorithm \ref{td0} in terms of minimizing the MSPBE in \eqref{5131118} and its other properties. In Section \ref{5131051}, we establish the nonasymptotic global rates of convergence of neural TD to the global optimum of the MSPBE when following the population semigradients in \eqref{55311} and the stochastic semigradients in \eqref{5131033}, respectively. Throughout Section \ref{tr}, we focus on two-layer neural networks. In Appendix \ref{deep}, we present the extension to multi-layer neural networks.

We use the subscript $\EE_{\mu}[\cdot]$ to denote the expectation with respect to the randomness of the tuple $(s,a,r,s,a')$ (or its concise form $(x,r,x')$) conditional on all other randomness, e.g., the random initialization and the random current iterate. Meanwhile, we use the subscript $\EE_{\text{init}, \mu}[\cdot]$ when we are taking expectation with respect to all randomness, including the random initialization. 

\subsection{Properties of Stationary Point}\label{5131049}
We consider the population version of the TD update in Line \ref{algo_td} of Algorithm \ref{td0}, 
\# \label{mp}
\tilde{W}(t+1)\leftarrow W(t)-\eta\cdot\EE_{\mu}\bigl[\delta\bigl(x, r, x';W(t)\bigr)\cdot\nabla_{W}\hat{Q}\bigl(x;W(t)\bigr)\bigr],
\# 
where $\mu$ is the stationary distribution and $\delta(x, r, x';W(t)) = \hat{Q}(x;W(t))-r-\gamma\hat{Q}(x';W(t))$ is the Bellman residual at $(x, r, x')$. The stationary point $W^\dagger$ of \eqref{mp} satisfies the following stationarity condition,
\#\label{truesp}
\EE_{\mu}[\delta(x, r, x';W^\dagger)\cdot\nabla_{W}\hat{Q}(x;W^\dagger)]^\top (W-W^\dagger)\ge0,~~\text{for any}~W\in S_B.
\#
Also, note that 
\$
\hat{Q}(x;W)=\frac{1}{\sqrt{m}}\sum_{r=1}^m b_r\sigma(W_r^\top x) = \frac{1}{\sqrt{m}}\sum_{r=1}^m b_r\ind\{W_r^\top x>0\} W_r^\top x
\$ 
and $\nabla_{W_r}\hat{Q}(x;W)=b_r\ind\{W_r^\top x>0\} x$ almost everywhere in $\real^{md}$. 
%Thus by \eqref{truesp} we have
%\#\label{45507}
%\EE_{\mu}\bigl[\delta(x, r, x';W^\dagger)\cdot \bigl(\textstyle{\sum_{r=1}^m} b_r\ind\{(W^\dagger_r)^\top x>0\} W_r^\top x-\hat{Q}(x;W^\dagger)\bigr)\bigr]\ge0,~~\forall~W\in S_B.
%\#
Meanwhile, recall that $S_B=\{W\in\real^{md}:\|W-W(0)\|_2\le B\}$. We define the function class
\#\label{514022}
\cF^\dagger_{B,m}=\biggl\{\frac{1}{\sqrt{m}}\sum_{r=1}^m b_r\ind\{(W^\dagger_r)^\top x>0\} W_r^\top x: W\in S_B\biggr\},
\#
 which consists of the local linearization of $\hat{Q}(x;W)$ at $W=W^\dagger$. Then \eqref{truesp} takes the following equivalent form
\# \label{56202}
\bigl\langle \hat{Q}(\cdot\,;W^\dagger)-\cT^{\pi}\hat{Q}(\cdot\,;W^\dagger),f(\cdot) - \hat{Q}(\cdot\,;W^\dagger) \bigr\rangle_{\mu}\ge0,~~\text{for any}~f\in\cF^\dagger_{B,m},
\#
which implies $\hat{Q}(\cdot\, ;W^\dagger)=\Pi_{\cF^\dagger_{B,m}}\cT^{\pi}\hat{Q}(\cdot\, ;W^\dagger)$ by the definition of the projection induced by $\langle\cdot,\cdot\rangle_{\mu}$. By \eqref{5131118}, $\hat{Q}(\cdot\,;W^\dagger)$ is the global optimum of the MSPBE that corresponds to the projection onto $\cF^\dagger_{B,m}$. 

Intuitively, when using an overparametrized neural network with width $m \rightarrow \infty$, the average variation in each $W_r$ diminishes to zero. Hence, roughly speaking, we have $\ind\{W_r(t)^\top x>0\}=\ind\{W_r(0)^\top x>0\}$ with high probability for any $t\in [T]$. As a result, the function class $\cF^\dagger_{B,m}$ defined in \eqref{514022} approximates  
\#\label{initker}
\cF_{B,m}=\biggl\{\frac{1}{\sqrt{m}}\sum_{r=1}^m b_r\ind\{W_r(0)^\top x>0\} W_r^\top x: W\in S_B\biggr\}.
\#
In the sequel, we show that, to characterize the global convergence of Algorithm \ref{td0} with a sufficiently large $m$, it suffices to consider $\cF_{B,m}$ in place of $\cF^\dagger_{B,m}$, which simplifies the analysis, since the distribution of $W(0)$ is given. To this end, we define the approximate stationary point $W^*$ with respect to the function class $\cF_{B,m}$ defined in \eqref{initker}.
\begin{definition}[Approximate Stationary Point $W^*$]\label{def1}
If $W^*=(W^*_1,\ldots,W^*_m)\in\real^{md}$ satisfies
\# \label{spdef}
\EE_{\mu}[\delta_0(x, r, x';W^*)\cdot\nabla_{W}\hat{Q}_0(x;W^\ast)]^\top (W-W^*)\ge0,~~\text{for any}~W\in S_B,
\#
where we define
\#
&\hat{Q}_0(x;W)=\frac{1}{\sqrt{m}}\sum_{r=1}^m b_r\ind\{W_r(0)^\top x>0\} W_r^\top x,\label{q0}\\
&\delta_0(x, r, x';W)=\hat{Q}_0(x;W)-r-\gamma\hat{Q}_0(x';W),\label{d0}
\#
then we say that $W^*$ is an approximate stationary point of the population update in \eqref{mp}. Here $W^*$ depends on the random initialization $b = (b_1, \ldots, b_m)$ and $W(0)=(W_1(0),\ldots,W_m(0))$.
\end{definition}
The next lemma proves that such an approximate stationary point uniquely exists, since it is the fixed point of the operator $\Pi_{\cF_{B,m}}\cT^{\pi}$, which is a contraction in the $\ell_2$-norm associated with the stationary distribution $\mu$. 
\begin{lemma}[Existence, Uniqueness, and Optimality of $\hat{Q}_0(\cdot;W^*)$]\label{unique}
There exists an approximate stationary point $W^*$ for any $b \in \RR^{m}$ and $W(0)\in\real^{md}$. Also, $\hat{Q}_0(\cdot\,;W^*)$ is unique almost everywhere and is the global optimum of the MSPBE that corresponds to the projection onto $\cF_{B,m}$ in \eqref{initker}.
\end{lemma}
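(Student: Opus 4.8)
The plan is to recognize the stationarity condition \eqref{spdef} as the variational characterization of a metric projection and to reduce the existence, uniqueness, and optimality claims to a single fixed-point statement for the operator $\Pi_{\cF_{B,m}}\cT^\pi$. The structural fact that makes everything work is that $\hat{Q}_0(x;W)=\phi(x)^\top W$ is \emph{linear} in $W$, where $\phi(x)=\nabla_W\hat{Q}_0(x;W)$ has $r$-th block $m^{-1/2}b_r\ind\{W_r(0)^\top x>0\}x$ and does not depend on $W$. Hence $\cF_{B,m}=\{\phi(\cdot)^\top W:W\in S_B\}$ is the image of the compact convex ball $S_B$ under a bounded linear map into $L^2(\mu)$ — bounded because $\|\phi(x)\|_2\le 1$, using $\|x\|_2=1$ — so $\cF_{B,m}$ is a nonempty, compact, convex subset of $L^2(\mu)$. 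In particular $\Pi_{\cF_{B,m}}$ is well defined and nonexpansive. Note that this and all that follows holds for any fixed realization of $b$ and $W(0)$, matching the ``for any'' in the statement.

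First I would rewrite \eqref{spdef} in function space, mirroring the computation that produced \eqref{56202}. Conditioning on $x$ and using the tower property with the definition of $\cT^\pi$ in \eqref{bmeqn} gives $\EE_\mu[\delta_0(x,r,x';W^*)\,|\,x]=\hat{Q}_0(x;W^*)-\cT^\pi\hat{Q}_0(x;W^*)$, so for every $f=\phi(\cdot)^\top W\in\cF_{B,m}$,
\[
\EE_\mu\bigl[\delta_0(x,r,x';W^*)\cdot\nabla_W\hat{Q}_0(x;W^*)\bigr]^\top(W-W^*)=\bigl\langle \hat{Q}_0(\cdot\,;W^*)-\cT^\pi\hat{Q}_0(\cdot\,;W^*),\,f-\hat{Q}_0(\cdot\,;W^*)\bigr\rangle_\mu.
\]
Thus \eqref{spdef} is exactly the obtuse-angle optimality criterion stating that $\hat{Q}_0(\cdot\,;W^*)$ is the projection of $\cT^\pi\hat{Q}_0(\cdot\,;W^*)$ onto the closed convex set $\cF_{B,m}$; equivalently, \eqref{spdef} holds if and only if $\hat{Q}_0(\cdot\,;W^*)=\Pi_{\cF_{B,m}}\cT^\pi\hat{Q}_0(\cdot\,;W^*)$. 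Consequently an approximate stationary point exists precisely when $\Pi_{\cF_{B,m}}\cT^\pi$ admits a fixed point in $\cF_{B,m}$, and any $W^*\in S_B$ that realizes such a fixed-point function is an approximate stationary point.

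Next I would show $\Pi_{\cF_{B,m}}\cT^\pi$ is a $\gamma$-contraction on the complete metric space $\cF_{B,m}$, which yields existence and uniqueness simultaneously via the Banach fixed-point theorem. The one nontrivial ingredient is that $\cT^\pi$ is a $\gamma$-contraction in $\|\cdot\|_\mu$: applying Jensen's inequality to the conditional expectation in \eqref{bmeqn} and then using that $\mu$ is \emph{stationary}, so that the marginal law of $x'$ coincides with that of $x$, gives $\|\cT^\pi Q_1-\cT^\pi Q_2\|_\mu\le\gamma\|Q_1-Q_2\|_\mu$. Composing with the nonexpansive $\Pi_{\cF_{B,m}}$ preserves the factor $\gamma$, and since $\Pi_{\cF_{B,m}}$ lands in $\cF_{B,m}$ the operator maps the closed (hence complete) set $\cF_{B,m}$ into itself. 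Banach then supplies a unique $Q^*\in\cF_{B,m}$ with $Q^*=\Pi_{\cF_{B,m}}\cT^\pi Q^*$; writing $Q^*=\hat{Q}_0(\cdot\,;W^*)$ for some $W^*\in S_B$ establishes existence, and uniqueness of the function in $L^2(\mu)$ (that is, $\mu$-almost everywhere) is inherited from uniqueness of the fixed point. I would stress that $W^*$ itself is generally \emph{not} unique, since the linear map $W\mapsto\hat{Q}_0(\cdot\,;W)$ can have a nontrivial kernel under overparametrization; only the function $\hat{Q}_0(\cdot\,;W^*)$ is pinned down. Optimality is then immediate: the fixed-point identity forces $\hat{Q}_0(\cdot\,;W^*)-\Pi_{\cF_{B,m}}\cT^\pi\hat{Q}_0(\cdot\,;W^*)=0$, so the MSPBE in \eqref{5131118} with $\cF=\cF_{B,m}$ vanishes at $\hat{Q}_0(\cdot\,;W^*)$ and hence attains its global minimum there.

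I expect the only genuinely delicate step to be the translation in the second paragraph: verifying that pulling the $W$-independent feature $\nabla_W\hat{Q}_0(x;W^*)=\phi(x)$ out of the expectation and conditioning on $x$ reproduces the Bellman residual together with the inner product $\langle\cdot,\cdot\rangle_\mu$ exactly. This identification, and the convexity of $\cF_{B,m}$ that legitimizes the obtuse-angle criterion, both rely squarely on $\hat{Q}_0$ being linear in $W$; everything downstream is the standard contraction-mapping argument, with the stationarity of $\mu$ serving as the single structural input that makes $\cT^\pi$ contractive in $\|\cdot\|_\mu$.
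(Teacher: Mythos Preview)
Your proposal is correct and follows essentially the same route as the paper: translate the stationarity condition \eqref{spdef} into the fixed-point identity $\hat{Q}_0(\cdot\,;W^*)=\Pi_{\cF_{B,m}}\cT^\pi\hat{Q}_0(\cdot\,;W^*)$ via the obtuse-angle characterization of projection, then show $\Pi_{\cF_{B,m}}\cT^\pi$ is a $\gamma$-contraction in $\|\cdot\|_\mu$ by combining the stationarity of $\mu$ with nonexpansiveness of the projection. Your write-up is in fact more careful than the paper's own proof (you explicitly verify convexity and closedness of $\cF_{B,m}$, invoke Banach, and note that only the function $\hat{Q}_0(\cdot\,;W^*)$ and not $W^*$ itself is unique), but the argument is the same.
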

\begin{proof}
See Appendix \ref{uniquep} for a detailed proof.
\end{proof}

\subsection{Global Convergence}\label{5131051}
In this section, we establish the main results on the global convergence of neural TD in Algorithm \ref{td0}. We first lay out the following regularity condition on the stationary distribution $\mu$. 
\begin{assumption}[Regularity of Stationary Distribution $\mu$]\label{asmp1}
There exists a constant $c_0>0$ such that for any $\tau\ge0$ and $w\in\RR^d$ with $\|w\|_2=1$, it holds that
\$%\label{44145}
\PP\bigl(|w^\top x|\le \tau \bigr)\le  c_0\cdot \tau,
\$
where $x\sim\mu$.
\end{assumption}
Assumption \ref{asmp1} regularizes the density of $\mu$ in terms of the marginal distribution of $x$. In particular, it is straightforwardly implied when the marginal distribution of $x$ has a uniformly upper bounded probability density over the unit sphere. 

\vskip4pt
\noindent{\bf Population Update:} The next theorem establishes the nonasymptotic global rate of convergence of neural TD when it follows population semigradients. Recall that the approximate stationary point $W^*$ and the corresponding $\hat{Q}_0(\cdot \,;W^*)$ are defined in Definition \ref{def1}. Also, $B$ is the radius of the set of feasible $W$, which is defined in Algorithm \ref{td0}, $T$ is the number of iterations, $\gamma$ is the discount factor, and $m$ is the width of the neural network in \eqref{nnpara}. 
\begin{theorem}[Convergence of Population Update]\label{mainthm1}
We set $\eta=(1-\gamma)/8$ in Algorithm \ref{td0} and replace the TD update in Line \ref{algo_td} by the population update in \eqref{mp}. Under Assumption \ref{asmp1}, the output $\hat{Q}_{\text{out}}$ of Algorithm \ref{td0} satisfies 
\$
\EE_{\text{init}, \mu}\bigl[\bigl(\hat{Q}_{\text{out}}(x)-\hat{Q}_0(x;W^*)\bigr)^2\bigr]\le\frac{16B^2}{(1-\gamma)^2T}+O(B^3m^{-1/2}+B^{5/2}m^{-1/4}),
\$
where the expectation is taken with respect to all randomness, including the random initialization and the stationary distribution $\mu$.
\end{theorem}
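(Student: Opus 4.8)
The plan is to track the squared parameter distance $\|W(t)-W^*\|_2^2$ to the approximate stationary point and show that it contracts up to controllable error, then convert the resulting parameter-space bound into the value-function bound claimed. Since the projection step in Algorithm \ref{td0} is onto the convex set $S_B$ and $W^*\in S_B$, it is non-expansive, so $\|W(t+1)-W^*\|_2^2\le\|W(t)-\eta g(W(t))-W^*\|_2^2$, where $g(W)=\EE_\mu[\delta(x,r,x';W)\nabla_W\hat Q(x;W)]$ is the population semigradient. Expanding the square isolates the two quantities I must control: the cross term $\langle g(W(t)),W(t)-W^*\rangle$, which should drive descent, and the squared norm $\|g(W(t))\|_2^2$.

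The heart of the argument is a one-point monotonicity estimate for the linearized semigradient $g_0(W)=\EE_\mu[\delta_0(x,r,x';W)\nabla_W\hat Q_0(x;W)]$. Because $\hat Q_0(x;W)$ is linear in $W$ (its activation pattern is frozen at initialization), $g_0$ is affine, and the computation mirrors the linear-TD analysis of \cite{bhandari2018finite}: writing $\bar V(\cdot)=\hat Q_0(\cdot;W(t))-\hat Q_0(\cdot;W^*)$, the $\gamma$-contraction of $\cT^\pi$ together with Cauchy--Schwarz and the stationarity of $\mu$ (so that $x$ and $x'$ share the same marginal) gives $\langle g_0(W(t))-g_0(W^*),W(t)-W^*\rangle\ge(1-\gamma)\|\bar V\|_\mu^2$, where $\|\cdot\|_\mu$ denotes the $L^2(\mu)$ norm. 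Combined with the variational inequality \eqref{spdef} for $W^*$, which makes $\langle g_0(W^*),W(t)-W^*\rangle\ge0$ and, since the population update vanishes at $W^*$, yields $g_0(W^*)=0$, I obtain both the lower bound $\langle g_0(W(t)),W(t)-W^*\rangle\ge(1-\gamma)\|\bar V\|_\mu^2$ and the matching upper bound $\|g_0(W(t))\|_2\le(1+\gamma)\|\bar V\|_\mu$ from the affine structure. With $\eta=(1-\gamma)/8$, the descent and curvature contributions combine into a net coefficient of order $(1-\gamma)^2$ in front of $\|\bar V\|_\mu^2$, so one step decreases the potential by at least $c(1-\gamma)^2\|\bar V\|_\mu^2$ up to linearization error.

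The main obstacle, and the step where overparametrization enters, is passing from $g_0$ to the true semigradient $g$. I would establish a uniform bound $\sup_{W\in S_B}\|g(W)-g_0(W)\|_2=O(B^2 m^{-1/2}+B^{3/2}m^{-1/4})$ and the companion value-space bound $\sup_{W\in S_B}\|\hat Q(\cdot;W)-\hat Q_0(\cdot;W)\|_\mu^2=O(B^3 m^{-1/2})$, in expectation over the random initialization. The mechanism is that a neuron contributes to either discrepancy only when its ReLU activation flips between $W$ and $W(0)$, which forces $|W_r(0)^\top x|\le\|W_r-W_r(0)\|_2$; since $\sum_r\|W_r-W_r(0)\|_2^2\le B^2$, the typical per-neuron displacement is of order $B/\sqrt m$, and Assumption \ref{asmp1} bounds the probability of such a flip by $c_0\|W_r-W_r(0)\|_2$ after normalizing the weight. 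Summing these flip probabilities over the $m$ neurons and applying Cauchy--Schwarz to the displacements produces the stated powers of $B$ and $m$; I expect the careful bookkeeping of ``small movers'' against ``large movers'' (those $r$ with $\|W_r-W_r(0)\|_2$ above a threshold $R$, of which there are at most $B^2/R^2$) to be the most delicate part, and the reason two distinct rates appear.

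Finally I would assemble the pieces. Inserting the monotonicity lower bound and the norm upper bound into the expanded recursion, and absorbing the cross terms $4\eta B\|g-g_0\|_2+2\eta^2\|g-g_0\|_2^2$ as error, gives $c(1-\gamma)^2\|\bar V(t)\|_\mu^2\le\|W(t)-W^*\|_2^2-\|W(t+1)-W^*\|_2^2+O(B^3m^{-1/2}+B^{5/2}m^{-1/4})$. Telescoping over $t=0,\dots,T-1$, using $\|W(0)-W^*\|_2\le B$, and dividing by $T$ bounds the average value error $T^{-1}\sum_t\|\bar V(t)\|_\mu^2$ by $8B^2/[(1-\gamma)^2T]$ plus the error terms. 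Because $\hat Q_0$ is linear in $W$, the averaged iterate satisfies $\hat Q_0(\cdot;\overline W)=T^{-1}\sum_t\hat Q_0(\cdot;W(t))$, so Jensen's inequality transfers this to $\|\hat Q_0(\cdot;\overline W)-\hat Q_0(\cdot;W^*)\|_\mu^2$. One last use of the value-space linearization bound, via $\|\hat Q(\cdot;\overline W)-\hat Q_0(\cdot;W^*)\|_\mu^2\le2\|\hat Q(\cdot;\overline W)-\hat Q_0(\cdot;\overline W)\|_\mu^2+2\|\hat Q_0(\cdot;\overline W)-\hat Q_0(\cdot;W^*)\|_\mu^2$, converts the output $\hat Q_\text{out}=\hat Q(\cdot;\overline W)$ back to the claimed quantity, the factor $2$ here turning the $8$ into the stated $16$. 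Taking $\EE_{\text{init},\mu}$ throughout completes the proof.
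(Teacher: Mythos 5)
Your overall architecture — projection non-expansiveness, one-point monotonicity of the linearized semigradient, linearization error bounds of order $O(B^3m^{-1/2}+B^{5/2}m^{-1/4})$, telescoping, Jensen via linearity of $\hat{Q}_0(\cdot\,;W)$, and the final factor-of-2 conversion from $\hat{Q}_0(\cdot\,;\overline{W})$ to $\hat{Q}(\cdot\,;\overline{W})$ — matches the paper's proof. But there is one genuine gap at the center of your descent step: you claim that ``since the population update vanishes at $W^*$, $g_0(W^*)=0$.'' This is false in general. The approximate stationary point $W^*$ is defined only by the variational inequality \eqref{spdef}, equivalently $W^*=\Pi_{S_B}\bigl(W^*-\eta\,\overline{g}_0^*\bigr)$; when $W^*$ lies on the boundary of $S_B$ (the typical case when the projection $\Pi_{\cF_{B,m}}$ in the MSPBE is active, i.e., $\cT^{\pi}\hat{Q}_0(\cdot\,;W^*)\notin\cF_{B,m}$), the vector $-\overline{g}_0^*$ is merely in the normal cone of $S_B$ at $W^*$ and need not vanish. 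Your cross-term lower bound survives this (it only uses the variational inequality added to one-point monotonicity), but your upper bound $\|g_0(W(t))\|_2\le(1+\gamma)\|\bar V\|_\mu$ does not: without $g_0(W^*)=0$ you only get $\|g_0(W(t))\|_2\le 2\|\bar V\|_\mu+\|\overline{g}_0^*\|_2$, and the extra constant $\|\overline{g}_0^*\|_2$ is fatal. With the constant stepsize $\eta=(1-\gamma)/8$, the term $\eta^2\|g(t)\|_2^2$ then contributes a non-vanishing $O(\eta^2\|\overline{g}_0^*\|_2^2)$ per iteration, so after telescoping and dividing by $T$ you are left with an additive constant that does not decay in $T$ or $m$ — the claimed $16B^2/((1-\gamma)^2T)$ rate is lost.

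The paper circumvents exactly this issue in Lemma \ref{dl} (see \eqref{57136}): instead of comparing $\Pi_{S_B}(W(t)-\eta\,\overline{g}(t))$ against $W^*=\Pi_{S_B}(W^*)$, it compares against $W^*=\Pi_{S_B}(W^*-\eta\,\overline{g}_0^*)$, so non-expansiveness yields a recursion in the \emph{difference} $\overline{g}(t)-\overline{g}_0^*$. Since $\nabla_W\hat{Q}_0$ is constant in $W$, the reward and the normal-cone component cancel in this difference, and one gets $\|\overline{g}_0(t)-\overline{g}_0^*\|_2^2\le 4\,\EE_\mu[\bar V^2]$ (see \eqref{57139}) together with the cross-term bound, producing the coefficient $2\eta(1-\gamma)-8\eta^2$ with no residual constant. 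Your proof is repaired by making this single substitution — run your expansion on $\|W(t)-W^*-\eta(\overline{g}(t)-\overline{g}_0^*)\|_2^2$ rather than on $\|W(t)-\eta\, g(W(t))-W^*\|_2^2$ — after which the rest of your argument goes through as written.
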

\begin{proof}
The key to the proof of Theorem \ref{mainthm1} is the one-point monotonicity of the population semigradient $\overline{g}(t)$, which is established through the local linearization $\hat{Q}_0(x;W)$ of $\hat{Q}(x;W)$. See Appendix \ref{thm1p} for a detailed proof.
\end{proof}

\vskip4pt
\noindent{\bf Stochastic Update:} To further prove the global convergence of neural TD when it follows stochastic semigradients, we first establish an upper bound of their variance, which affects the choice of the stepsize $\eta$. For notational simplicity, we define the stochastic and population semigradients as
\#\label{514210}
g(t)=\delta\bigl(x, r, x';W(t)\bigr)\cdot\nabla_{W}\hat{Q}\bigl(x;W(t)\bigr), \quad \overline{g}(t)&=\EE_{\mu}[g(t)].
\#
\begin{lemma}[Variance Bound]\label{bdvar}
There exists $\sigma^2_g=O(B^2)$ such that the variance of the stochastic semigradient is upper bounded as $\EE_{\text{init}, \mu}[\|g(t)-\overline{g}(t)\|^2_2]\le\sigma^2_g$ for any $t\in[T]$.
\end{lemma}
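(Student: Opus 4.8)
The plan is to bound the variance by the second moment and then bound the second moment of the stochastic semigradient uniformly over the feasible set $S_B$. Since $\overline{g}(t)=\EE_\mu[g(t)]$ is the conditional mean of $g(t)$ given the initialization and the current iterate $W(t)$, the conditional variance never exceeds the conditional second moment, so that $\EE_{\text{init},\mu}[\|g(t)-\overline{g}(t)\|_2^2]\le\EE_{\text{init},\mu}[\|g(t)\|_2^2]$. It therefore suffices to produce a bound on $\EE_{\text{init},\mu}[\|g(t)\|_2^2]$ that is $O(B^2)$ and independent of $t$.

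Next I would factor $\|g(t)\|_2=|\delta(x,r,x';W(t))|\cdot\|\nabla_W\hat{Q}(x;W(t))\|_2$ and bound the two factors separately. For the gradient factor, using $\nabla_{W_r}\hat{Q}(x;W)=m^{-1/2}b_r\ind\{W_r^\top x>0\}x$ together with $b_r^2=1$ and $\|x\|_2=1$ gives $\|\nabla_W\hat{Q}(x;W)\|_2^2=m^{-1}\sum_{r=1}^m\ind\{W_r^\top x>0\}\le1$ for every $W$, so this factor contributes at most a constant. The work is thus concentrated in the Bellman residual $|\delta(x,r,x';W(t))|\le|\hat{Q}(x;W(t))|+\overline{r}+\gamma|\hat{Q}(x';W(t))|$, where I used $|r|\le\overline{r}$.

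The crucial step is to control $|\hat{Q}(x;W(t))|$ uniformly in $t$. Here I would exploit the projection in Algorithm \ref{td0}, which guarantees $W(t)\in S_B$, i.e. $\|W(t)-W(0)\|_2\le B$ deterministically for every $t$. Since ReLU is $1$-Lipschitz, $|\hat{Q}(x;W(t))-\hat{Q}(x;W(0))|\le m^{-1/2}\sum_{r=1}^m|(W_r(t)-W_r(0))^\top x|$, and Cauchy--Schwarz with $\|x\|_2=1$ bounds this by $\|W(t)-W(0)\|_2\le B$. Hence $|\hat{Q}(x;W(t))|\le|\hat{Q}(x;W(0))|+B$ pointwise, and likewise at $x'$. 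This is exactly what decouples the hard-to-analyze random iterate $W(t)$ --- which depends on the entire sample history --- from the initialization: the residual is bounded by $|\hat{Q}(x;W(0))|+\gamma|\hat{Q}(x';W(0))|+(1+\gamma)B+\overline{r}$, a quantity involving only the fresh sample and the random initialization.

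It then remains to take the expectation. Squaring, applying $(a+b+c+d)^2\le4(a^2+b^2+c^2+d^2)$, and using that $b_r\sim\text{Unif}(\{-1,1\})$ is independent of $W(0)$ with $\EE[b_rb_s]=\ind\{r=s\}$, I would compute $\EE_{\text{init}}[\hat{Q}(x;W(0))^2]=m^{-1}\sum_{r=1}^m\EE[\sigma(W_r(0)^\top x)^2]$; since $W_r(0)^\top x\sim N(0,1/d)$ for any unit $x$, this equals $1/(2d)$ uniformly in $x$, hence survives the expectation over $x\sim\mu$. Collecting terms yields $\EE_{\text{init},\mu}[\|g(t)\|_2^2]\le4\bigl((1+\gamma^2)/(2d)+(1+\gamma)^2B^2+\overline{r}^2\bigr)$, which is $O(B^2)$ and independent of $t$, giving the claimed $\sigma_g^2$. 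I expect the only genuinely delicate point to be the uniform-in-$t$ control described above, where the projection converts an intractable bound on the random iterate into the $t$-free initialization moment; everything else is a direct moment computation, with the boundedness of $\overline{r}$ and the fixed dimension $d$ used to absorb the non-$B^2$ terms into the constant.
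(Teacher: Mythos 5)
Your proposal is correct and follows essentially the same route as the paper's proof: bound the variance by the second moment, use $\|\nabla_W\hat{Q}\|_2\le 1$, exploit the projection onto $S_B$ together with the $1$-Lipschitz continuity of $\hat{Q}(x;\cdot)$ to reduce to the initialization via $|\hat{Q}(x;W(t))|\le|\hat{Q}(x;W(0))|+B$, and finish with a moment bound on $\hat{Q}(x;W(0))$. The only (immaterial) differences are cosmetic: you apply the triangle inequality before squaring and compute the initialization moment exactly as $1/(2d)$, whereas the paper squares first and upper-bounds that moment by $\EE_{w\sim N(0,I_d/d)}[\|w\|_2^2]=1$.
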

\begin{proof}
See Appendix \ref{bdvara} for a detailed proof.
\end{proof}
Based on Theorem \ref{mainthm1} and Lemma \ref{bdvar}, we establish the global convergence of neural TD in Algorithm \ref{td0}.
\begin{theorem}[Convergence of Stochastic Update]\label{mainthm2}
We set $\eta=\min\{(1-\gamma)/8,1/\sqrt{T}\}$ in Algorithm \ref{td0}. Under Assumption \ref{asmp1}, the output $\hat{Q}_{\text{out}}$ of Algorithm \ref{td0} satisfies
\$
\EE_{\text{init},\mu}\bigl[
\bigl(\hat{Q}_{\text{out}}(x)-\hat{Q}_0(x;W^*)\bigr)^2\bigr]&\le \frac{16(B^2+\sigma^2_g)}{(1-\gamma)^2\sqrt{T}}
+O(B^3m^{-1/2}+B^{5/2}m^{-1/4}).
\$
\end{theorem}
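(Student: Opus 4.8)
The plan is to replicate the analysis behind Theorem \ref{mainthm1}, now keeping the stochastic fluctuation of the semigradient and taming it with the variance bound of Lemma \ref{bdvar}. First I would monitor the squared distance $\|W(t)-W^*\|_2^2$ to the approximate stationary point of Definition \ref{def1}. Because $W^*\in S_B$ and the Euclidean projection onto the convex ball $S_B$ is nonexpansive, the projection step in Algorithm \ref{td0} can only shrink this distance, giving
\[
\|W(t+1)-W^*\|_2^2\le\|W(t)-W^*\|_2^2-2\eta\,\langle g(t),W(t)-W^*\rangle+\eta^2\|g(t)\|_2^2.
\]
Taking $\EE_{\mu}[\,\cdot\,]$ conditional on $W(t)$ turns the cross term into $\langle\overline{g}(t),W(t)-W^*\rangle$ since $\EE_{\mu}[g(t)]=\overline{g}(t)$, while the variance decomposition $\EE_{\mu}[\|g(t)\|_2^2]=\|\overline{g}(t)\|_2^2+\EE_{\mu}[\|g(t)-\overline{g}(t)\|_2^2]$ together with Lemma \ref{bdvar} bounds the second moment by $\|\overline{g}(t)\|_2^2+\sigma_g^2$.

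Second, I would invoke the one-point monotonicity that already drives Theorem \ref{mainthm1}. Expressed through the implicit linearization $\hat{Q}_0$, it yields
\[
\langle\overline{g}(t),W(t)-W^*\rangle\ge(1-\gamma)\,\EE_{\mu}\bigl[\bigl(\hat{Q}_0(x;W(t))-\hat{Q}_0(x;W^*)\bigr)^2\bigr]-\varepsilon_m,
\]
where $\varepsilon_m=O(B^3m^{-1/2}+B^{5/2}m^{-1/4})$ is the error between the semigradient actually followed by Algorithm \ref{td0} and its linearized surrogate, inherited verbatim from the analysis of Theorem \ref{mainthm1}. The same machinery controls $\|\overline{g}(t)\|_2^2$ by a constant multiple of the progress quantity $\EE_{\mu}[(\hat{Q}_0(x;W(t))-\hat{Q}_0(x;W^*))^2]$ up to $O(\varepsilon_m)$, so that for $\eta\le(1-\gamma)/8$ the term $\eta^2\|\overline{g}(t)\|_2^2$ is absorbed into half of $2\eta(1-\gamma)\EE_{\mu}[(\hat{Q}_0(x;W(t))-\hat{Q}_0(x;W^*))^2]$. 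This leaves the contractive recursion
\[
\EE_{\mu}[\|W(t+1)-W^*\|_2^2]\le\|W(t)-W^*\|_2^2-\eta(1-\gamma)\,\EE_{\mu}\bigl[\bigl(\hat{Q}_0(x;W(t))-\hat{Q}_0(x;W^*)\bigr)^2\bigr]+\eta^2\sigma_g^2+2\eta\varepsilon_m.
\]

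Third, I would sum over $t=0,\dots,T-1$, take the full expectation $\EE_{\text{init},\mu}[\,\cdot\,]$, and telescope, using $\|W(0)-W^*\|_2\le B$ since $W^*\in S_B$. Dividing by $\eta(1-\gamma)T$ bounds $\frac1T\sum_{t=0}^{T-1}\EE_{\text{init},\mu}[(\hat{Q}_0(x;W(t))-\hat{Q}_0(x;W^*))^2]$ by $\frac{B^2}{\eta(1-\gamma)T}+\frac{\eta\sigma_g^2}{1-\gamma}+\frac{2\varepsilon_m}{1-\gamma}$. To reach the output $\hat{Q}_{\text{out}}(\cdot)=\hat{Q}(\cdot;\overline{W})$, I would pay one further $O(\varepsilon_m)$ term to replace $\hat{Q}$ by its linearization $\hat{Q}_0$, and then use that $\hat{Q}_0(x;\cdot)$ is linear in $W$ together with $\overline{W}=\frac1T\sum_{t=0}^{T-1}W(t)$, so that $\hat{Q}_0(x;\overline{W})=\frac1T\sum_{t=0}^{T-1}\hat{Q}_0(x;W(t))$ and Jensen's inequality upgrades the averaged-iterate error to the telescoped average above. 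Finally, substituting $\eta=\min\{(1-\gamma)/8,1/\sqrt{T}\}$ balances the first two terms: once $T$ is large enough that $\eta=1/\sqrt{T}$, each contributes $O((B^2+\sigma_g^2)/((1-\gamma)\sqrt{T}))$, which I would state uniformly over $T$ as $16(B^2+\sigma_g^2)/((1-\gamma)^2\sqrt{T})$ by using $1-\gamma\le1$ to cover the small-$T$ regime where $\eta=(1-\gamma)/8$.

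The genuinely delicate ingredients — the one-point monotonicity of $\overline{g}(t)$ and the control of the linearization error $\varepsilon_m$ along the solution path — are already furnished by Theorem \ref{mainthm1}, so the only new obstacle is the stochastic noise. The crux is to check that the fixed-horizon stepsize $\eta=1/\sqrt{T}$ simultaneously stays below $(1-\gamma)/8$, preserving the absorption of $\|\overline{g}(t)\|_2^2$ that kept the population recursion contractive, and trades the initial-distance term against the accumulated variance at the optimal $1/\sqrt{T}$ rate. A secondary point is that Lemma \ref{bdvar} and the accompanying bound on $\|\overline{g}(t)\|_2$ must hold at every iterate, which the projection onto $S_B$ guarantees by confining the entire trajectory to a ball of radius $B$ around $W(0)$.
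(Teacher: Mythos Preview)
Your outline is close to the paper's argument, but there is a real gap in the step where you claim ``the same machinery controls $\|\overline{g}(t)\|_2^2$ by a constant multiple of the progress quantity $\EE_{\mu}\bigl[(\hat{Q}_0(x;W(t))-\hat{Q}_0(x;W^*))^2\bigr]$ up to $O(\varepsilon_m)$.'' The machinery from Theorem~\ref{mainthm1} (Lemma~\ref{dl}, specifically \eqref{57138}--\eqref{57139}) bounds $\|\overline{g}_0(t)-\overline{g}_0^*\|_2^2$, not $\|\overline{g}(t)\|_2^2$. The difference matters: $\overline{g}(t)$ contains the contribution of $\overline{g}_0^*=\EE_{\mu}[\delta_0(x,r,x';W^*)\nabla_W\hat{Q}_0(x;W^*)]$, which is the linearized Bellman residual at the fixed point and does \emph{not} vanish when $W^*$ lies on the boundary of $S_B$. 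So your absorption step, as written, fails.

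The paper avoids this by a sharper starting inequality. Rather than writing $W^*=\Pi_{S_B}(W^*)$, it uses the variational characterization of $W^*$ in \eqref{spdef} to write $W^*=\Pi_{S_B}(W^*-\eta\,\overline{g}_0^*)$, and then applies nonexpansiveness of the projection to $\|\Pi_{S_B}(W(t)-\eta g(t))-\Pi_{S_B}(W^*-\eta\,\overline{g}_0^*)\|_2^2$. This produces $\langle g(t)-\overline{g}_0^*,W(t)-W^*\rangle$ in the cross term and $\|g(t)-\overline{g}_0^*\|_2^2$ in the squared term, so that after taking conditional expectation the relevant quantity is $\|\overline{g}(t)-\overline{g}_0^*\|_2^2$, for which \eqref{57139} legitimately gives $\|\overline{g}_0(t)-\overline{g}_0^*\|_2^2\le 4\,\EE_\mu[(\hat{Q}_0(x;W(t))-\hat{Q}_0(x;W^*))^2]$. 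That is the content of Lemma~\ref{sdl}. Your route is salvageable without this trick---since $\|\overline{g}(t)\|_2^2\le\EE_\mu[\|g(t)\|_2^2]=O(B^2)$ uniformly by the same computation as Lemma~\ref{bdvar}, you can simply lump $\eta^2\|\overline{g}(t)\|_2^2$ into the variance term rather than absorb it into the progress term---but the argument you actually wrote does not go through. As a minor secondary point, your cross-term lower bound also silently uses $\langle\overline{g}_0^*,W(t)-W^*\rangle\ge 0$ from \eqref{spdef}; this is correct but worth stating, since your expansion does not subtract $\overline{g}_0^*$ the way the paper's does.
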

\begin{proof}
See Appendix \ref{thm2p} for a detailed proof.
\end{proof}
As the width of the neural network $m \rightarrow \infty$, Lemma \ref{unique} implies that $\hat{Q}_0(\cdot\,;W^*)$ is the global optimum of the MSPBE in \eqref{5131118} with a richer function class $\cF_{B,\infty}$ to project onto. In fact, the function class $\cF_{B,\infty}-\hat{Q}(\cdot\,;W(0))$ is a subset of an RKHS with $\cH$-norm upper bounded by $B$. Here $\hat{Q}(\cdot\,;W(0))$ is defined in \eqref{nnpara}. See Appendix \ref{rkhsapprox} for a more detailed discussion on the representation power of $\cF_{B,\infty}$. Therefore, if the desired Q-function $Q^{\pi}(\cdot)$ falls into $\cF_{B,\infty}$, it is the global optimum of the MSPBE. By Lemma \ref{unique} and Theorem \ref{mainthm2}, we approximately obtain $Q^{\pi}(\cdot) = \hat{Q}_0(\cdot\,;W^*)$ through $\hat{Q}_\text{out}(\cdot)$.

More generally, the following proposition quantifies the distance between $\hat{Q}_0(\cdot\,;W^*)$ and $Q^\pi(\cdot)$ in the case that $Q^\pi(\cdot)$ does not fall into the function class $\cF_{B,m}$. In particular, it states that the $\ell_2$-norm distance $\|\hat{Q}_0(\cdot\,;W^*)-Q^\pi(\cdot)\|_{\mu}$ is upper bounded by the distance between $Q^\pi(\cdot)$ and $\cF_{B,m}$. 
\begin{proposition}[Convergence of Stochastic Update to $Q^\pi$]\label{57533}
It holds that $\|\hat{Q}_0(\cdot\,;W^*)-Q^\pi(\cdot)\|_{\mu}\le (1-\gamma)^{-1} \cdot\|\Pi_{\cF_{B,m}}Q^\pi(\cdot)-Q^\pi(\cdot)\|_{\mu}$, which by Theorem \ref{mainthm2} implies 
\$
\EE_{\text{init},\mu}\bigl[
\bigl(\hat{Q}_{\text{out}}(x)-Q^\pi(x)\bigr)^2\bigr]&\le \frac{32(B^2+\sigma^2_g)}{(1-\gamma)^2\sqrt{T}}+ \frac{2\EE_{\text{init},\mu}\bigl[
\bigl(\Pi_{\cF_{B,m}}Q^\pi(x)-Q^\pi(x)\bigr)^2\bigr]}{(1-\gamma)^2}\\
&\quad\qquad+O(B^3m^{-1/2}+B^{5/2}m^{-1/4}).
\$
\end{proposition}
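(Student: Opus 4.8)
The plan is to first prove the deterministic inequality $\|\hat{Q}_0(\cdot\,;W^*)-Q^\pi(\cdot)\|_{\mu}\le(1-\gamma)^{-1}\|\Pi_{\cF_{B,m}}Q^\pi(\cdot)-Q^\pi(\cdot)\|_{\mu}$ for each realization of the random initialization, and then to upgrade it to the stated expectation bound by combining with Theorem \ref{mainthm2} through an elementary $2a^2+2b^2$ split. The argument is the classical projected-fixed-point approximation estimate, and it rests on two structural facts already available in the excerpt: by Lemma \ref{unique}, $\hat{Q}_0(\cdot\,;W^*)$ is the fixed point of $\Pi_{\cF_{B,m}}\cT^{\pi}$, i.e. $\hat{Q}_0(\cdot\,;W^*)=\Pi_{\cF_{B,m}}\cT^{\pi}\hat{Q}_0(\cdot\,;W^*)$; and $Q^\pi$ is the fixed point of $\cT^\pi$, so that $\Pi_{\cF_{B,m}}Q^\pi=\Pi_{\cF_{B,m}}\cT^{\pi}Q^\pi$.

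First I would insert the projected target via the triangle inequality,
$$\|\hat{Q}_0(\cdot\,;W^*)-Q^\pi\|_{\mu}\le\|\hat{Q}_0(\cdot\,;W^*)-\Pi_{\cF_{B,m}}Q^\pi\|_{\mu}+\|\Pi_{\cF_{B,m}}Q^\pi-Q^\pi\|_{\mu},$$
and then control the first summand using the two fixed-point identities. Rewriting both of its arguments through $\Pi_{\cF_{B,m}}\cT^{\pi}$, invoking that $\Pi_{\cF_{B,m}}$ is non-expansive in $\|\cdot\|_{\mu}$ (which holds because $\cF_{B,m}$ is the image of the convex ball $S_B$ under a map that is linear in $W$, hence convex, and the projection is induced by $\langle\cdot,\cdot\rangle_{\mu}$), and finally using the $\gamma$-contraction of $\cT^\pi$ in the $\mu$-weighted norm (valid since $\mu$ is the stationary distribution of $\pi$), I obtain
$$\|\hat{Q}_0(\cdot\,;W^*)-\Pi_{\cF_{B,m}}Q^\pi\|_{\mu}=\|\Pi_{\cF_{B,m}}\cT^{\pi}\hat{Q}_0(\cdot\,;W^*)-\Pi_{\cF_{B,m}}\cT^{\pi}Q^\pi\|_{\mu}\le\gamma\|\hat{Q}_0(\cdot\,;W^*)-Q^\pi\|_{\mu}.$$
Substituting back and moving the $\gamma\|\cdot\|_{\mu}$ term to the left-hand side yields the claimed $(1-\gamma)^{-1}$ bound.

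For the second assertion I would square this inequality and take expectation over the random initialization, giving $\EE_{\text{init},\mu}[(\hat{Q}_0(x;W^*)-Q^\pi(x))^2]\le(1-\gamma)^{-2}\EE_{\text{init},\mu}[(\Pi_{\cF_{B,m}}Q^\pi(x)-Q^\pi(x))^2]$, where I use $\EE_{\text{init},\mu}[(\cdot)^2]=\EE_{\text{init}}[\|\cdot\|_{\mu}^2]$. Then I would apply $(\hat{Q}_{\text{out}}-Q^\pi)^2\le 2(\hat{Q}_{\text{out}}-\hat{Q}_0(\cdot\,;W^*))^2+2(\hat{Q}_0(\cdot\,;W^*)-Q^\pi)^2$ under $\EE_{\text{init},\mu}$. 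The first resulting term is bounded by twice the right-hand side of Theorem \ref{mainthm2}, which produces the leading term $32(B^2+\sigma_g^2)/[(1-\gamma)^2\sqrt{T}]$ and merely doubles the $O(B^3m^{-1/2}+B^{5/2}m^{-1/4})$ remainder, while the second term contributes exactly the projection-error term $2\EE_{\text{init},\mu}[(\Pi_{\cF_{B,m}}Q^\pi(x)-Q^\pi(x))^2]/(1-\gamma)^2$ stated in the proposition.

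I do not anticipate a genuine obstacle here, since the computation is short once the fixed-point structure is in place; the only points demanding care are verifying non-expansiveness of $\Pi_{\cF_{B,m}}$ through convexity of $\cF_{B,m}$, and ensuring that the contraction of $\cT^\pi$ is taken in the $\mu$-weighted $\ell_2$-norm rather than the sup-norm. Both of these are precisely the properties already exploited in the proof of Lemma \ref{unique}, so the main effort is simply assembling them in the correct order.
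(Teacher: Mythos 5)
Your proposal is correct and follows essentially the same argument as the paper: the triangle inequality through $\Pi_{\cF_{B,m}}Q^\pi$, the two fixed-point identities $\hat{Q}_0(\cdot\,;W^*)=\Pi_{\cF_{B,m}}\cT^{\pi}\hat{Q}_0(\cdot\,;W^*)$ and $\Pi_{\cF_{B,m}}Q^\pi=\Pi_{\cF_{B,m}}\cT^{\pi}Q^\pi$, the $\gamma$-contraction of $\Pi_{\cF_{B,m}}\cT^{\pi}$ in $\|\cdot\|_{\mu}$, and rearrangement. Your final step (squaring, taking expectation over the initialization, and combining with Theorem \ref{mainthm2} via the $2a^2+2b^2$ split) is exactly the implication the paper leaves implicit, and you carry it out correctly.
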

\begin{proof}
See Appendix \ref{57533a} for a detailed proof.
\end{proof}
Proposition \ref{57533} implies that if $Q^\pi(\cdot) \in \cF_{B,\infty}$, then $\hat{Q}_{\text{out}}(\cdot) \rightarrow Q^\pi(\cdot)$ as $T, m\rightarrow \infty$. In other words, neural TD converges to the global optimum of the MSPBE in \eqref{5131118}, or equivalently, the MSBE in \eqref{513940}, both of which have objective value zero.

\section{Proof Sketch}\label{proofsketch}
In the sequel, we sketch the proofs of Theorems \ref{mainthm1} and \ref{mainthm2} in Section \ref{tr}. 

\subsection{Implicit Local Linearization via Overparametrization}\label{secwnn}
%\begin{flushleft}
Recall that as defined in \eqref{q0}, $\hat{Q}_0(x;W)$ takes the form
\$
&\hat{Q}_0(x;W)=\Phi(x)^\top W,\\
&\text{where}~\Phi(x)=\frac{1}{\sqrt{m}}\cdot \bigl(\ind\{W_1(0)^\top x>0\}x, \ldots, \ind\{W_m(0)^\top x>0\}x\bigr) \in \RR^{md},
\$
which is linear in the feature map $\Phi(x)$. In other words, with respect to $W$, $\hat{Q}_0(x;W)$ linearizes the neural network $\hat{Q}(x;W)$ defined in \eqref{nnpara} locally at $W(0)$. The following lemma characterizes the difference between $\hat{Q}(x;W(t))$, which is along the solution path of neural TD in Algorithm \ref{td0}, and its local linearization $\hat{Q}_0(x;W(t))$. In particular, we show that the error of such a local linearization diminishes to zero as $m\rightarrow \infty$. For notational simplicity, we use $\hat{Q}_t(x)$ to denote $\hat{Q}(x;W(t))$ in the sequel. Note that by \eqref{q0} we have $\hat{Q}_0(x) = \hat{Q}(x;W(0)) = \hat{Q}_0(x;W(0))$. Recall that $B$ is the radius of the set of feasible $W$ in \eqref{initker}.

\begin{lemma}[Local Linearization of Q-Function]\label{qdiff}
There exists a constant $c_1>0$ such that for any $t\in [T]$, it holds that
\$
\EE_{\text{init},\mu}\Big[\bigl|\hat{Q}_t(x)-\hat{Q}_0\bigl(x;W(t)\bigr)\bigr|^2\Bigr] \le
4c_1B^3\cdot m^{-1/2}.
\$
\end{lemma}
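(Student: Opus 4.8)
The plan is to exploit the fact that $\hat{Q}_t(x)$ and its linearization $\hat{Q}_0(x;W(t))$ can differ only through neurons whose preactivation sign has flipped between initialization and time $t$, and that such flips are rare because the projection onto $S_B$ forces each $W_r$ to move little. Writing $v_r=W_r(t)-W_r(0)$ and using $\sigma(y)=\ind\{y>0\}\,y$, the difference telescopes to
\[
\hat{Q}_t(x)-\hat{Q}_0(x;W(t))=\frac{1}{\sqrt{m}}\sum_{r=1}^m b_r\bigl(\ind\{W_r(t)^\top x>0\}-\ind\{W_r(0)^\top x>0\}\bigr)\,W_r(t)^\top x.
\]
The $r$-th summand vanishes unless $W_r(0)^\top x$ and $W_r(t)^\top x$ have opposite signs, and on that event, since $\|x\|_2=1$, both $|W_r(0)^\top x|$ and $|W_r(t)^\top x|$ are at most $|v_r^\top x|\le\|v_r\|_2$. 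Using $|b_r|=1$, each summand is therefore bounded by $\ind\{|W_r(0)^\top x|\le\|v_r\|_2\}\cdot\|v_r\|_2$, so that with $\xi_r:=\ind\{|W_r(0)^\top x|\le\|v_r\|_2\}$,
\[
\bigl|\hat{Q}_t(x)-\hat{Q}_0(x;W(t))\bigr|\le\frac{1}{\sqrt{m}}\sum_{r=1}^m \xi_r\|v_r\|_2.
\]

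Next I would square this bound and apply Cauchy--Schwarz to decouple the count of sign flips from the displacement magnitudes: since $\xi_r\in\{0,1\}$,
\[
\Bigl(\sum_{r=1}^m\xi_r\|v_r\|_2\Bigr)^2\le\Bigl(\sum_{r=1}^m\xi_r\Bigr)\Bigl(\sum_{r=1}^m\|v_r\|_2^2\Bigr)\le B^2\sum_{r=1}^m\xi_r,
\]
where the last step uses the projection guarantee $\sum_r\|v_r\|_2^2=\|W(t)-W(0)\|_2^2\le B^2$ built into $S_B$, which holds pathwise for both the population and stochastic iterates. Taking expectations, it remains to bound $\EE_{\text{init},\mu}\bigl[\sum_r\xi_r\bigr]$.

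To control the expected number of sign flips, I would condition on the randomness $\mathcal{G}$ generating the initialization and the iterate $W(t)$, treating the evaluation point $x\sim\mu$ as an independent fresh draw (consistent with the meaning of $\EE_{\text{init},\mu}$). Writing $W_r(0)=\|W_r(0)\|_2\,w_r$ with $w_r$ a unit vector, Assumption \ref{asmp1} gives $\EE_\mu[\xi_r\mid\mathcal{G}]=\PP_x\bigl(|w_r^\top x|\le\|v_r\|_2/\|W_r(0)\|_2\bigr)\le c_0\,\|v_r\|_2/\|W_r(0)\|_2$. Summing over $r$ and applying Cauchy--Schwarz once more, $\sum_r\|v_r\|_2/\|W_r(0)\|_2\le B\,(\sum_r\|W_r(0)\|_2^{-2})^{1/2}$; taking expectations with Jensen's inequality yields $\EE_{\text{init},\mu}[\sum_r\xi_r]\le c_0 B\sqrt{m\,\EE[\|W_1(0)\|_2^{-2}]}$. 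Here the standing assumption $d>2$ enters: since $\|W_1(0)\|_2^2$ is a scaled $\chi^2_d$ variable, $\EE[\|W_1(0)\|_2^{-2}]=d/(d-2)=O(1)$ is finite. Combining the displays gives $\EE_{\text{init},\mu}[|\hat{Q}_t(x)-\hat{Q}_0(x;W(t))|^2]\le(B^2/m)\cdot c_0 B\sqrt{m}\cdot O(1)=O(B^3m^{-1/2})$, which is the claimed bound after absorbing constants into $c_1$.

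The most delicate point is that $v_r$ is statistically coupled with $W_r(0)$ through the training dynamics, so the sign-flip probability cannot be evaluated as though $\|v_r\|_2$ were a deterministic threshold. The remedy is precisely the order of integration above: integrating over the fresh $x$ first, conditionally on $\mathcal{G}$, converts Assumption \ref{asmp1} into a bound that is pointwise in $\mathcal{G}$, after which one may take expectation over the coupled pair $(v_r,W_r(0))$. The accompanying technical nuisance is the factor $\|W_r(0)\|_2^{-1}$ arising from the non-unit normalization of $W_r(0)$, whose second moment must be finite; this is exactly what $d>2$ secures, and it is the only place in the argument where that condition is used.
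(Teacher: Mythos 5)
Your proposal is correct and follows essentially the same route as the paper: the same telescoping of the difference through sign-flipped neurons, the same observation that a flip forces $|W_r(0)^\top x|\le\|W_r(t)-W_r(0)\|_2$, the same Cauchy--Schwarz step using $\|W(t)-W(0)\|_2\le B$, and then the same conditioning argument (Assumption \ref{asmp1} applied to a fresh $x\sim\mu$, followed by H\"older and Jensen with the finite inverse second moment of $\|W_r(0)\|_2$, which is exactly the paper's auxiliary Lemma \ref{mulm1}). The only differences are cosmetic: you inline the auxiliary lemma, make the conditioning order and the role of $d>2$ explicit, and obtain a slightly sharper constant by bounding $|W_r(t)^\top x|$ directly on the flip event rather than via the triangle inequality.
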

\begin{proof}
See Appendix \ref{qdiffa} for a detailed proof.
\end{proof}

As a direct consequence of Lemma \ref{qdiff}, the next lemma characterizes the effect of local linearization on population semigradients. Recall that $\overline{g}(t)$ is defined in \eqref{514210}. We denote by $\overline{g}_0(t)$ the locally linearized population semigradient, which is defined by replacing $\hat{Q}_t(x)$ in $\overline{g}(t)$ with its local linearization $\hat{Q}_0(x;W(t))$. In other words, by \eqref{514210}, \eqref{q0}, and \eqref{d0}, we have
\#
\overline{g}(t)&=\EE_{\mu}\bigl[\delta\bigl(x, r, x';W(t)\bigr)\cdot\nabla_{W}\hat{Q}\bigl(x;W(t)\bigr)\bigr],\label{518430}\\
\overline{g}_0(t)&=\EE_{\mu}\bigl[\delta_0\bigl(x, r, x';W(t)\bigr)\cdot\nabla_{W}\hat{Q}_0\bigl(x;W(t)\bigr)\bigr].\label{518431}
\#

\begin{lemma}[Local Linearization of Semigradient]\label{gdiff}
Let $\overline{r}$ be the upper bound of the reward $r(x)$ for any $x\in \cX$. There exists a constant $c_2>0$ such that for any $t\in [T]$, it holds that
\$
\EE_{\text{init}}\bigl[\|\overline{g}(t)-\overline{g}_0(t)\|^2_2\bigr]\le (56c_1B^3+24c_2B+6c_1B\overline{r}^2)\cdot m^{-1/2}.
\$
\end{lemma}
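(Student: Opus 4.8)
The plan is to pass the outer initialization expectation through the inner $\mu$-expectation by Jensen's inequality: since $\overline{g}(t)-\overline{g}_0(t)=\EE_{\mu}[\delta\cdot\nabla_{W}\hat{Q}-\delta_0\cdot\nabla_{W}\hat{Q}_0]$ with all terms evaluated at $W(t)$, convexity of $\|\cdot\|_2^2$ gives $\EE_{\text{init}}[\|\overline{g}(t)-\overline{g}_0(t)\|_2^2]\le\EE_{\text{init},\mu}[\|\delta\cdot\nabla_{W}\hat{Q}-\delta_0\cdot\nabla_{W}\hat{Q}_0\|_2^2]$. I would then write $\delta\cdot\nabla_{W}\hat{Q}-\delta_0\cdot\nabla_{W}\hat{Q}_0=\delta\cdot(\nabla_{W}\hat{Q}-\nabla_{W}\hat{Q}_0)+(\delta-\delta_0)\cdot\nabla_{W}\hat{Q}_0$ and apply $\|a+b\|_2^2\le 2\|a\|_2^2+2\|b\|_2^2$, reducing the lemma to a gradient-difference term and a residual-difference term, treated separately.

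The residual-difference term is the easy one. Since $\nabla_{W_r}\hat{Q}_0(x;W(t))=m^{-1/2}b_r\ind\{W_r(0)^\top x>0\}x$, we have $\|\nabla_{W}\hat{Q}_0(x;W(t))\|_2^2\le 1$. Crucially, the reward cancels in $\delta-\delta_0$, so $\delta-\delta_0=(\hat{Q}_t(x)-\hat{Q}_0(x;W(t)))-\gamma(\hat{Q}_t(x')-\hat{Q}_0(x';W(t)))$, and $\EE_{\text{init},\mu}[(\delta-\delta_0)^2]$ is bounded directly by Lemma \ref{qdiff} applied to both $x$ and $x'$ (each marginally $\mu$-distributed), producing a contribution of order $c_1B^3m^{-1/2}$ with no dependence on $\overline{r}$.

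For the gradient-difference term, $\nabla_{W_r}\hat{Q}$ and $\nabla_{W_r}\hat{Q}_0$ differ only through their activation patterns, so $\|\nabla_{W}\hat{Q}(x;W(t))-\nabla_{W}\hat{Q}_0(x;W(t))\|_2^2\le \frac1m\sum_{r=1}^m\ind\{\sign(W_r(t)^\top x)\neq\sign(W_r(0)^\top x)\}=:S$. A sign flip at neuron $r$ forces $|W_r(0)^\top x|\le\|W_r(t)-W_r(0)\|_2=:R_r$, and the projection onto $S_B$ guarantees $\sum_r R_r^2\le B^2$. Conditioning on the initialization and invoking Assumption \ref{asmp1} with the unit vector $W_r(0)/\|W_r(0)\|_2$ gives $\EE_{\mu}[\ind\{|W_r(0)^\top x|\le R_r\}]\le c_0 R_r/\|W_r(0)\|_2$; a Cauchy--Schwarz step over the neurons together with $\EE_{\text{init}}[\|W_r(0)\|_2^{-2}]=d/(d-2)$ (finite precisely because $d>2$) then yields $\EE_{\text{init},\mu}[S]=O(Bm^{-1/2})$. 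To control $\EE_{\text{init},\mu}[\delta^2 S]$, I would expand $\delta^2\lesssim(\hat{Q}_t(x)-\hat{Q}_0(x;W(t)))^2+(\hat{Q}_t(x')-\hat{Q}_0(x';W(t)))^2+\hat{Q}(x;W(0))^2+\hat{Q}(x';W(0))^2+B^2+\overline{r}^2$, using the pointwise bound $|\hat{Q}_0(x;W(t))-\hat{Q}(x;W(0))|\le B$ that follows from $\|\nabla_{W}\hat{Q}_0\|_2\le 1$ and $W(t)\in S_B$. Pairing the linearization-error and $B^2$ pieces with $S\le 1$ and with $\EE[S]$ produces the $c_1B^3$ contributions, while pairing the $\overline{r}^2$ piece with $\EE[S]$ produces exactly the $c_1B\overline{r}^2$ contribution.

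The hard part is the initialization piece $\EE_{\text{init},\mu}[\hat{Q}(x;W(0))^2 S]$, which yields the $c_2B$ term. The difficulty is that $\hat{Q}(x;W(0))$ is only of constant order through the cancellation induced by the random signs $b_r$; the crude deterministic bound $|\hat{Q}(x;W(0))|\le\|W(0)\|_2$ is of order $\sqrt{m}$ and useless here. Consequently $\hat{Q}(x;W(0))^2$ and the flip indicator $S$ cannot be decoupled by a Cauchy--Schwarz over $\mu$, which would replace the anti-concentration probability by its square root and degrade the rate to $m^{-1/4}$. I would instead establish a moment bound $\EE_{\text{init}}[\sup_{x\in\cX}\hat{Q}(x;W(0))^4]=O(1)$, exploiting the random-sign cancellation, and apply Cauchy--Schwarz over the initialization randomness to the product $\sup_{x}\hat{Q}(x;W(0))^2\cdot\frac1m\sum_r R_r/\|W_r(0)\|_2$; combined with the budget $\sum_r R_r^2\le B^2$ and $\EE_{\text{init}}[\|W_r(0)\|_2^{-2}]=d/(d-2)$, this keeps the rate at $m^{-1/2}$ and contributes the $c_2B$ term. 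Summing the gradient-difference and residual-difference terms and collecting constants then gives the stated bound.
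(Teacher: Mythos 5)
Your proposal is correct, and its skeleton — the product-rule split of $\delta\cdot\nabla_W\hat{Q}-\delta_0\cdot\nabla_W\hat{Q}_0$, the reward cancellation in $\delta-\delta_0$ handled by Lemma \ref{qdiff}, and the reduction of the gradient mismatch to sign-flip indicators controlled via Assumption \ref{asmp1}, Cauchy--Schwarz over neurons, and $\EE_{\text{init}}[\|W_r(0)\|_2^{-2}]<\infty$ for $d>2$ — coincides with the paper's. Where you genuinely diverge is in the very first step and, consequently, in how the hard term is handled. By applying Jensen to push $\EE_{\text{init}}$ and $\EE_\mu$ into a joint expectation, you create the pointwise product $\delta^2\cdot S$, and you correctly diagnose that Cauchy--Schwarz over $\mu$ applied to that product would degrade the rate to $m^{-1/4}$; this forces you to establish the uniform moment bound $\EE_{\text{init}}[\sup_{x}\hat{Q}(x;W(0))^4]=O(1)$ (a chaining-plus-concentration argument over the sphere, with constants depending on $d$), which you assert rather than prove but which is indeed true. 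The paper never forms this pointwise product: since $\overline{g}(t)-\overline{g}_0(t)$ is a $\mu$-average with the squared norm outside, it applies Cauchy--Schwarz at the level $\bigl(\EE_\mu[|\delta_0|\cdot\|\nabla_W\hat{Q}_t-\nabla_W\hat{Q}_0\|_2]\bigr)^2\le\EE_\mu[\delta_0^2]\cdot\EE_\mu[\|\nabla_W\hat{Q}_t-\nabla_W\hat{Q}_0\|_2^2]$, so the flip probability enters at first power and no rate is lost. The remaining coupling between $\EE_\mu[\hat{Q}_0(x)^2]$ and the flip bound (both initialization-dependent) is then broken by an elementary trick (Lemma \ref{mulm2}): once the flip bound is made $b$-independent using $\sum_r\|W_r(t)-W_r(0)\|_2^2\le B^2$, taking expectation over the signs annihilates the cross terms $b_rb_s$ in $\hat{Q}_0(x)^2$ because $\EE[b_rb_s]=0$, leaving only diagonal terms that Cauchy--Schwarz over $W(0)$ controls. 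Both routes deliver a bound of the form $(C_1B^3+C_2B+C_3B\overline{r}^2)\cdot m^{-1/2}$ (your constants differ from $56$, $24$, $6$, which is immaterial since $c_2$ is existential); the paper's approach buys elementarity — no empirical-process machinery at all — while yours buys a reusable uniform control on the initialization network, at the price of a heavier, dimension-dependent auxiliary lemma.
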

\begin{proof}
See Appendix \ref{gdiffa} for a detailed proof.
\end{proof}
Lemmas \ref{qdiff} and \ref{gdiff} show that the error of local linearization diminishes as the degree of overparametrization increases along $m$. As a result, we do not require the explicit local linearization in nonlinear TD \citep{bhatnagar2009convergent}. Instead, we show that such an implicit local linearization suffices to ensure the global convergence of neural TD. 

\subsection{Proofs for Population Update}\label{secddmp}
The characterization of the locally linearized Q-function in Lemma \ref{qdiff} and the locally linearized population semigradients in Lemma \ref{gdiff} allows us to establish the following descent lemma, which extends Lemma 3 of \cite{bhandari2018finite} for characterizing linear TD.  

\begin{lemma}[Population Descent Lemma] \label{dl}
For $\{W(t)\}_{t\in [T]}$ in Algorithm \ref{td0} with the TD update in Line \ref{algo_td} replaced by the population update in \eqref{mp}, it holds that 
\$
\|W(t+1)-W^*\|_2^2&\le \|W(t)-W^*\|_2^2-\bigl(2\eta(1-\gamma)-8\eta^2\bigr)\cdot \EE_\mu\Bigl[
\Bigl(\hat{Q}_0\bigl(x;W(t)\bigr)-\hat{Q}_0(x;W^*)\Bigr)^2\Bigr]\\
&\qquad +\underbrace{2\eta^2\cdot\|\overline{g}(t)-\overline{g}_0(t)\|^2_2+2\eta B\cdot\|\overline{g}(t)-\overline{g}_0(t)\|_2}_{\displaystyle \text{Error of Local Linearization}}.
\$
\end{lemma}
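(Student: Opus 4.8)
The plan is to treat the linearized population update as an exact monotone linear-TD step and to confine all nonlinearity to the two local-linearization error terms already controlled by Lemma~\ref{gdiff}. First I would exploit the projection step of Algorithm~\ref{td0}: since $W^\ast\in S_B$ and the Euclidean projection onto $S_B$ is nonexpansive, $\|W(t+1)-W^\ast\|_2^2\le\|\tilde W(t+1)-W^\ast\|_2^2$, so it suffices to analyze the unprojected iterate $\tilde W(t+1)=W(t)-\eta\,\overline g(t)$. Expanding the square gives
\[
\|\tilde W(t+1)-W^\ast\|_2^2=\|W(t)-W^\ast\|_2^2-2\eta\langle\overline g(t),W(t)-W^\ast\rangle+\eta^2\|\overline g(t)\|_2^2,
\]
and I would then insert $\overline g(t)=\overline g_0(t)+(\overline g(t)-\overline g_0(t))$ everywhere, separating the ``signal'' carried by the linearized semigradient $\overline g_0(t)$ from the linearization ``error'' $\overline g(t)-\overline g_0(t)$.

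The crux is a one-point monotonicity estimate for $\overline g_0(t)$. Because $\hat Q_0(x;W)=\Phi(x)^\top W$ is linear in $W$, the linearized update is exactly a linear-TD step with feature map $\Phi$, so $\overline g_0(W)=AW-b$ with $A=\EE_\mu[\Phi(x)(\Phi(x)-\gamma\Phi(x'))^\top]$. Writing $V(x)=\hat Q_0(x;W(t))-\hat Q_0(x;W^\ast)=\Phi(x)^\top(W(t)-W^\ast)$, I would compute
\[
\langle\overline g_0(t)-\overline g_0(W^\ast),W(t)-W^\ast\rangle=\EE_\mu[V(x)^2]-\gamma\,\EE_\mu[V(x)V(x')]\ge(1-\gamma)\,\EE_\mu[V(x)^2],
\]
where the inequality is the $\gamma$-contraction of $\cT^\pi$ in disguise: since the tuple is drawn from the stationary $\mu$, the marginals of $x$ and $x'$ coincide, so Cauchy--Schwarz gives $\EE_\mu[V(x)V(x')]\le\EE_\mu[V(x)^2]$. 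Adding the variational inequality $\langle\overline g_0(W^\ast),W(t)-W^\ast\rangle\ge0$ from the stationarity of $W^\ast$ in Definition~\ref{def1} (valid since $W(t)\in S_B$) yields $\langle\overline g_0(t),W(t)-W^\ast\rangle\ge(1-\gamma)\EE_\mu[V(x)^2]$. The companion norm bound uses the same linear structure together with $\|\Phi(x)\|_2\le1$ (as $\|x\|_2=1$): $\overline g_0(t)-\overline g_0(W^\ast)=\EE_\mu[(V(x)-\gamma V(x'))\Phi(x)]$ has norm at most $2\sqrt{\EE_\mu[V(x)^2]}$, and invoking the stationarity of $W^\ast$ gives $\|\overline g_0(t)\|_2^2\le4\,\EE_\mu[V(x)^2]$. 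This reproduces Lemma~3 of \cite{bhandari2018finite} for the linearized class $\cF_{B,m}$.

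Substituting these two estimates, the signal contributes $-2\eta(1-\gamma)\EE_\mu[V(x)^2]$ from the cross term and $+8\eta^2\EE_\mu[V(x)^2]$ from $\eta^2\|\overline g(t)\|_2^2\le2\eta^2\|\overline g_0(t)\|_2^2+2\eta^2\|\overline g(t)-\overline g_0(t)\|_2^2$, which combine into the stated coefficient $-(2\eta(1-\gamma)-8\eta^2)$. The two error terms are then handled mechanically: Cauchy--Schwarz on $-2\eta\langle\overline g(t)-\overline g_0(t),W(t)-W^\ast\rangle$ together with the radius bound on $S_B$ (so that $\|W(t)-W^\ast\|_2=O(B)$) produces the stated term $2\eta B\|\overline g(t)-\overline g_0(t)\|_2$, while the leftover $2\eta^2\|\overline g(t)-\overline g_0(t)\|_2^2$ is already in the desired form. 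Collecting everything gives the claim. I expect the main obstacle to be the one-point monotonicity step: it is what replaces the absent ``gradient of an objective'' and lets a biased, non-gradient TD direction behave like a descent direction, and it is exactly where the linearized stationarity of $W^\ast$ and the stationary-marginal identity $\EE_\mu[V(x')^2]=\EE_\mu[V(x)^2]$ must be used in tandem; once this and the matching norm bound are in place, the residual local-linearization errors are exactly those bounded in Lemma~\ref{gdiff}.
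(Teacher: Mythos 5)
Your architecture is largely parallel to the paper's: the one-point monotonicity step $\langle\overline{g}_0(t)-\overline{g}_0^*,\,W(t)-W^*\rangle\ge(1-\gamma)\cdot\EE_\mu[V(x)^2]$ (with $V(x)=\hat{Q}_0(x;W(t))-\hat{Q}_0(x;W^*)$, using the shared marginals of $x,x'$ and Cauchy--Schwarz) is exactly the paper's inequality \eqref{57137}, and splitting off $\overline{g}(t)-\overline{g}_0(t)$ to produce the two linearization-error terms matches the paper's decomposition. However, there is a genuine gap in your treatment of the quadratic term. After projecting only the iterate and using $\Pi_{S_B}(W^*)=W^*$, you are left with $\eta^2\|\overline{g}(t)\|_2^2$, which you bound via $\|\overline{g}_0(t)\|_2^2\le4\,\EE_\mu[V(x)^2]$, claimed to follow by ``invoking the stationarity of $W^*$.'' This does not follow: the stationarity condition \eqref{spdef} in Definition \ref{def1} is a variational inequality over $S_B$, not the equation $\overline{g}_0^*=0$. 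Whenever the unconstrained linear-TD fixed point for the feature map $\Phi$ lies outside $S_B$ (the generic case, in which $\Pi_{\cF_{B,m}}$ is not the identity), $W^*$ sits on the boundary of the ball and $\overline{g}_0^*$ is a nonzero vector in the normal cone at $W^*$. Your linear-structure argument only gives $\|\overline{g}_0(t)-\overline{g}_0^*\|_2\le2\,\EE_\mu[V(x)^2]^{1/2}$, hence at best $\|\overline{g}_0(t)\|_2^2\le8\,\EE_\mu[V(x)^2]+2\|\overline{g}_0^*\|_2^2$, leaving an uncontrolled additive term of order $\eta^2\|\overline{g}_0^*\|_2^2$ per iteration. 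Since $\eta=(1-\gamma)/8$ is a constant in Theorem \ref{mainthm1}, this term survives the telescoping sum and would destroy the $1/T$ rate, so the gap is not cosmetic.

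The fix is the comparison-point trick used in the paper's proof: \eqref{spdef} is precisely the variational characterization of the identity $W^*=\Pi_{S_B}(W^*-\eta\cdot\overline{g}_0^*)$, so one writes $\|W(t+1)-W^*\|_2^2=\|\Pi_{S_B}(W(t)-\eta\cdot\overline{g}(t))-\Pi_{S_B}(W^*-\eta\cdot\overline{g}_0^*)\|_2^2$ and applies nonexpansiveness to this \emph{pair}, rather than to $(\tilde{W}(t+1),W^*)$. The cross term then becomes $\langle\overline{g}(t)-\overline{g}_0^*,\,W(t)-W^*\rangle$, which is handled exactly as you do (monotonicity plus the $B\cdot\|\overline{g}(t)-\overline{g}_0(t)\|_2$ error), and--crucially--the quadratic term becomes $\eta^2\|\overline{g}(t)-\overline{g}_0^*\|_2^2\le2\|\overline{g}_0(t)-\overline{g}_0^*\|_2^2+2\|\overline{g}(t)-\overline{g}_0(t)\|_2^2\le8\,\EE_\mu[V(x)^2]+2\|\overline{g}(t)-\overline{g}_0(t)\|_2^2$, in which $\overline{g}_0^*$ appears only through the controlled difference $\overline{g}_0(t)-\overline{g}_0^*$. (Your version is salvageable only in the realizable case $\cT^{\pi}\hat{Q}_0(\cdot\,;W^*)\in\cF_{B,m}$, where indeed $\overline{g}_0^*=0$.) With this single change, the rest of your argument goes through and yields the stated bound with coefficient $2\eta(1-\gamma)-8\eta^2$.
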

\begin{proof}
See Appendix \ref{dla} for a detailed proof.
\end{proof}
Lemma \ref{dl} shows that, with a sufficiently small stepsize $\eta$, $\|W(t)-W^*\|_2$ decays at each iteration up to the error of local linearization, which is characterized by Lemma \ref{gdiff}. By combining Lemmas \ref{gdiff} and \ref{dl} and further plugging them into a telescoping sum, we establish the convergence of $\hat{Q}_{\text{out}}(\cdot)$ to the global optimum $\hat{Q}_0(\cdot\,;W^*)$ of the MSPBE. See Appendix \ref{thm1p} for a detailed proof.

\subsection{Proofs for Stochastic Update}\label{secpsu}
Recall that the stochastic semigradient $g(t)$ is defined in \eqref{514210}. In parallel with Lemma \ref{dl}, the following lemma additionally characterizes the effect of the variance of $g(t)$, which is induced by the randomness of the current tuple $(x,r,x')$. We use the subscript $\EE_W[\cdot]$ to denote the expectation with respect to the randomness of the current iterate $W(t)$ conditional on the random initialization $b$ and $W(0)$. Correspondingly, $\EE_{W, \mu}[\cdot]$ is with respect to the randomness of both the current tuple $(x,r,x')$ and the current iterate $W(t)$ conditional on the random initialization.

%We use the subscript $\EE_{W, \mu}[\cdot]$ to denote the expectation over the randomness of both $(x,r,x')$ and the current iterate $W(t)$ conditional on the random initialization $b$ and $W(0)$.

%

\begin{lemma}[Stochastic Descent Lemma] \label{sdl}
For $\{W(t)\}_{t\in [T]}$ in Algorithm \ref{td0}, it holds that
\$
&\EE_{W, \mu}\bigl[\|W(t+1)-W^*\|_2^2\bigr]\\
&\quad\le \EE_{W}\bigl[\|W(t)-W^*\|_2^2\bigr]-\bigl(2\eta(1-\gamma)-8\eta^2\bigr)\cdot\EE_{W, \mu}\Bigl[
\Bigl(\hat{Q}_0\bigl(x;W(t)\bigr)-\hat{Q}_0(x;W^*)\Bigr)^2\Bigr]\\
&\quad\qquad +\underbrace{\EE_{W}\bigl[2\eta^2\cdot\|\overline{g}(t)-\overline{g}_0(t)\|^2_2+2\eta B\cdot\|\overline{g}(t)-\overline{g}_0(t)\|_2 \bigr]}_{\displaystyle \text{Error of Local Linearization}}+\underbrace{\EE_{W, \mu}\bigl[\eta^2\cdot\|g(t)-\overline{g}(t)\|^2_2\bigr]}_{\displaystyle \text{Variance of Semigradient}}.
\$
\end{lemma}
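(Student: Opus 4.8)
The plan is to follow the proof of the population descent lemma (Lemma~\ref{dl}) essentially verbatim, tracking the single extra term that arises from replacing the population semigradient $\overline{g}(t)$ by the stochastic semigradient $g(t)$. Since $W^*\in S_B$ and the Euclidean projection onto the convex set $S_B$ is nonexpansive, the projection step of Algorithm~\ref{td0} only decreases the distance to $W^*$, so it suffices to bound the pre-projection iterate $\tilde{W}(t+1)=W(t)-\eta\,g(t)$. Expanding the square gives
\$
\|\tilde{W}(t+1)-W^*\|_2^2=\|W(t)-W^*\|_2^2-2\eta\,\langle g(t),\,W(t)-W^*\rangle+\eta^2\,\|g(t)\|_2^2.
\$
All subsequent steps operate on this identity.

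First I would take the inner expectation $\EE_\mu[\cdot]$ over the current tuple $(x,r,x')$ with the iterate $W(t)$ held fixed. Because the tuple is drawn i.i.d.\ from $\mu$ while $W(t)$ is a function of the tuples indexed $0,\dots,t-1$ only, the current tuple is independent of $W(t)$, so $\EE_\mu[g(t)]=\overline{g}(t)$ and the cross term linearizes to $-2\eta\,\langle\overline{g}(t),W(t)-W^*\rangle$, exactly the population quantity. For the last term I would use the bias--variance identity $\EE_\mu[\|g(t)\|_2^2]=\|\overline{g}(t)\|_2^2+\EE_\mu[\|g(t)-\overline{g}(t)\|_2^2]$. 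The second summand, scaled by $\eta^2$, is precisely the variance-of-semigradient term in the statement; the first summand recombines with the cross term to reproduce the population expression $-2\eta\,\langle\overline{g}(t),W(t)-W^*\rangle+\eta^2\,\|\overline{g}(t)\|_2^2$.

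It remains to bound this population expression, which is done exactly as in the proof of Lemma~\ref{dl}. I would split $\overline{g}(t)=\overline{g}_0(t)+(\overline{g}(t)-\overline{g}_0(t))$ and $\|\overline{g}(t)\|_2^2\le 2\|\overline{g}_0(t)\|_2^2+2\|\overline{g}(t)-\overline{g}_0(t)\|_2^2$. The two terms carrying the local-linearization difference $\overline{g}(t)-\overline{g}_0(t)$ — namely $2\eta^2\|\overline{g}(t)-\overline{g}_0(t)\|_2^2$ and, from Cauchy--Schwarz on the cross term with $\|W(t)-W^*\|_2$ controlled by the diameter of $S_B$, a term of order $\eta B\|\overline{g}(t)-\overline{g}_0(t)\|_2$ — assemble into the error-of-local-linearization term. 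The linearized part uses the one-point monotonicity of $\overline{g}_0(t)$: because $\hat{Q}_0(x;W)=\Phi(x)^\top W$ is \emph{linear} in $W$, the semigradient $\overline{g}_0(t)$ is exactly the linear-TD semigradient in the fixed random feature map $\Phi$. Writing $\overline{g}_0(t)=(\overline{g}_0(t)-\overline{g}_0(W^*))+\overline{g}_0(W^*)$, the variational inequality of Definition~\ref{def1} makes the $\overline{g}_0(W^*)$ cross term nonnegative, while the $\gamma$-contraction of $\cT^\pi$ gives $\langle\overline{g}_0(t)-\overline{g}_0(W^*),W(t)-W^*\rangle\ge(1-\gamma)\,\EE_\mu[(\hat{Q}_0(x;W(t))-\hat{Q}_0(x;W^*))^2]$; bounding $\|\overline{g}_0(t)\|_2^2$ by a constant multiple of the same quantity then produces the descent coefficient $2\eta(1-\gamma)-8\eta^2$. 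Taking the outer expectation $\EE_W[\cdot]$ over the iterate gives the stated inequality.

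I expect the conditioning bookkeeping, rather than any single estimate, to be the main delicate point. Both $\EE_\mu[g(t)\mid W(t)]=\overline{g}(t)$ and the clean bias--variance split hinge on the current tuple being independent of $W(t)$, which is exactly what i.i.d.\ sampling from $\mu$ guarantees; the two expectations $\EE_\mu$ (inner, over the tuple) and $\EE_W$ (outer, over the iterate) must be kept strictly separated so that the descent, error, and variance terms are each evaluated under the correct conditioning. Every other ingredient — nonexpansiveness of the projection, the quadratic expansion, and the one-point monotonicity — is inherited directly from the population descent lemma, with the variance term the only genuinely new contribution.
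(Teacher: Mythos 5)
Your overall architecture (nonexpansiveness of the projection, the bias--variance split of $\EE_\mu[\|g(t)\|_2^2]$ under i.i.d.\ sampling, one-point monotonicity for the cross term, strict separation of $\EE_\mu$ and $\EE_W$) matches the paper, but there is one genuine gap in your treatment of the quadratic term. Because you compare $\tilde{W}(t+1)=W(t)-\eta g(t)$ against $W^*$ itself, the population part of the quadratic term you must control is $\eta^2\|\overline{g}(t)\|_2^2$, and you propose to finish by bounding $\|\overline{g}_0(t)\|_2^2$ by a constant multiple of $\EE_\mu[(\hat{Q}_0(x;W(t))-\hat{Q}_0(x;W^*))^2]$. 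No such bound holds in general: at $W(t)=W^*$ the right-hand side vanishes, while $\overline{g}_0(t)=\overline{g}_0^*=\EE_{\mu}[\delta_0(x, r, x';W^*)\cdot\nabla_{W}\hat{Q}_0(x;W^*)]$ is generically nonzero --- the Bellman residual at the projected fixed point does not vanish, and the variational inequality in Definition \ref{def1} only asserts that its pairing with feasible directions is nonnegative. Indeed $\overline{g}_0^*=0$ would require $W^*$ to be an interior stationary point, whereas in the regime where the projection $\Pi_{\cF_{B,m}}$ is active, $W^*$ sits on the boundary of $S_B$ and $\|\overline{g}_0^*\|_2$ is of constant order (it is not controlled by the linearization error, which vanishes as $m\to\infty$). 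Your route therefore leaves an extra additive term of order $\eta^2\|\overline{g}_0^*\|_2^2$ that appears in neither the descent coefficient $2\eta(1-\gamma)-8\eta^2$ nor the two stated error terms, so the lemma as stated is not proved; the sign information $-2\eta\langle\overline{g}_0^*,W(t)-W^*\rangle\le 0$ from the cross term cannot absorb it, since that inner product can be zero while $\|\overline{g}_0^*\|_2$ is not.

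The missing idea --- which is also how the paper proves the population Lemma \ref{dl}, so your reading of that proof as a direct expansion against $W^*$ is already slightly off --- is a recentering at the stationary semigradient: the variational inequality of Definition \ref{def1} is exactly the fixed-point condition $W^*=\Pi_{S_B}(W^*-\eta\,\overline{g}_0^*)$, so one applies nonexpansiveness to the pair of projected points, $\|W(t+1)-W^*\|_2^2=\|\Pi_{S_B}(W(t)-\eta\, g(t))-\Pi_{S_B}(W^*-\eta\,\overline{g}_0^*)\|_2^2\le\|(W(t)-\eta\, g(t))-(W^*-\eta\,\overline{g}_0^*)\|_2^2$. Every occurrence of $g(t)$ is thereby replaced by $g(t)-\overline{g}_0^*$: the cross term becomes $\langle\overline{g}(t)-\overline{g}_0^*,\,W(t)-W^*\rangle$, handled by the same one-point monotonicity and Cauchy--Schwarz arguments you cite, and the quadratic term becomes $\EE_\mu[\|g(t)-\overline{g}_0^*\|_2^2]=\|\overline{g}(t)-\overline{g}_0^*\|_2^2+\EE_\mu[\|g(t)-\overline{g}(t)\|_2^2]$, whose population part obeys $\|\overline{g}(t)-\overline{g}_0^*\|_2^2\le 2\|\overline{g}_0(t)-\overline{g}_0^*\|_2^2+2\|\overline{g}(t)-\overline{g}_0(t)\|_2^2\le 8\,\EE_\mu[(\hat{Q}_0(x;W(t))-\hat{Q}_0(x;W^*))^2]+2\|\overline{g}(t)-\overline{g}_0(t)\|_2^2$. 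The second inequality is available precisely because the reward cancels in the difference of Bellman residuals $\delta_0(x,r,x';W(t))-\delta_0(x,r,x';W^*)$ (the paper's \eqref{57139}); it is this recentering that produces the $8\eta^2$ correction and the two stated error terms. The remainder of your argument --- the conditioning bookkeeping, the variance term, and the outer expectation over $W(t)$ --- is correct and coincides with the paper's.
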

\begin{proof}
See Appendix \ref{sdla} for a detailed proof.
\end{proof}

To ensure the global convergence of neural TD in the presence of the variance of $g(t)$, we rescale the stepsize to be of order $T^{-1/2}$. The rest proof of Theorem \ref{mainthm2} mirrors that of Theorem \ref{mainthm1}. See Appendix \ref{thm2p} for a detailed proof. 
%\end{flushleft}

\section{Extension to Policy Optimization}\label{qlearning}

With the Q-function learned by TD, policy iteration may be applied to learn the optimal policy. Alternatively, Q-learning more directly learns the optimal policy and its Q-function using temporal-difference update. Compared with TD, Q-learning aims to solve the projected Bellman optimality equation
\#\label{opbmeqn}
Q=\Pi_{\cF}\cT Q,~~\text{with}~~
\cT Q(s,a)=\EE \big[ r(s,a)+\gamma\max_{a'\in\cA}Q(s',a') \,\big|\, s'\sim\cP(\cdot\,|\,s,a) \bigr],
\#
which replaces the Bellman evaluation operator $\cT^{\pi}$ in \eqref{5131118} with the Bellman optimality operator $\cT$. When $\Pi_{\cF}$ is identity, the fixed-point solution to \eqref{opbmeqn} is the Q-function $Q^{\pi^*}(s,a)$ of the optimal policy $\pi^*$, which maximizes the expected total reward \citep{szepesvari2010algorithms, sutton2018reinforcement}. Compared with TD, the max operator in $\cT$ makes the analysis more challenging and hence requires stronger regularity conditions. In the following, we first introduce neural Q-learning and then establish its global convergence. Finally, we discuss the corresponding implication for policy gradient algorithms. Throughout Section \ref{qlearning}, we focus on two-layer neural networks. Our analysis can be extended to handle multi-layer neural networks using the proof techniques in  Appendix \ref{deep}.

\subsection{Neural Q-Learning}\label{qlearningalgo}
In parallel with \eqref{5131033}, we update the parameter $\theta$ of the optimal Q-function by
\# \label{430421}
\theta'\leftarrow\theta-\eta\cdot \big( \hat{Q}_{\theta}(s,a)-r(s,a)-\gamma\max_{a'\in\cA}\hat{Q}_{\theta}(s',a') \bigr)\cdot \nabla_{\theta}\hat{Q}_{\theta}(s,a),
\#
where the tuple $(s,a,r,s')$ is sampled from the stationary distribution $\mue$ of an exploration policy $\pie$ in an independent and identically distributed manner. Our analysis can be extended to handle temporal dependence using the proof techniques in Appendix \ref{secmarkov}. We present the detailed neural Q-learning algorithm in Algorithm \ref{tdqlearning}. Similar to Definition \ref{def1}, we define the approximate stationary point $W^*$ of Algorithm \ref{tdqlearning} by
\#\label{4251216}
\EE_{\mue}[\delta_0(x, r, x'; W^*)\cdot\nabla_{W}\hat{Q}_0(x;W^\ast)]^\top (W-W^*)\ge0,~~\text{for any}~W\in S_B,
\#
where the Bellman residual is now $\delta_0(x, r, x';W)=\hat{Q}_0(x;W)-r-\gamma\max_{a'\in\cA}\hat{Q}_0(s',a';W)$. Following the same analysis of neural TD in Lemma \ref{unique}, we have that $\hat{Q}_0(\cdot\,;W^*)$ is the unique fixed-point solution to the projected Bellman optimality equation $Q=\Pi_{\cF_{B,m}}\cT Q$, where the function class $\cF_{B,m}$ is define in \eqref{initker}.

\begin{algorithm}
\caption{Neural Q-Learning}
\begin{algorithmic}[1]
\STATE \textbf{Initialization:} $b_r\sim\text{Unif}(\{-1,1\})$, $W_r(0)\sim N(0,I_d/d)$ $(r\in[m])$, $\overline{W}=W(0)$,
 \label{Istep2}\\
{\color{white}\textbf{Initialization:}} $S_B=\{W\in\real^{md}:\|W-W(0)\|_2\le B\}$ $(B>0)$,\\
{\color{white}\textbf{Initialization:}} exploration policy $\pie$ such that $\pie(a\,|\,s)>0$ for any $(s,a)\in\cS\times\cA$ \\
\STATE \textbf{For} {$t=0$ to $T-2$}:
\STATE \hspace{0.15in} Sample a tuple $(s, a, r, s')$ from the stationary distribution $\mue$ of the exploration policy $\pie$
\STATE \hspace{0.15in} Let $x=(s,a)$, $x'=(s',\argmax_{a'\in\cA}\hat{Q}(s',a';W(t)))$ \label{greedy-a}
\STATE \hspace{0.15in} Bellman residual calculation: $\delta \leftarrow \hat{Q}(x;W(t))-r-\gamma\hat{Q}(x';W(t))\label{qltdstep}$
\STATE \hspace{0.15in} TD update: $\tilde{W}(t+1)\leftarrow W(t)-\eta\delta\cdot\nabla_{W}\hat{Q}(x;W(t))$  \label{algo_td}\\
\STATE \hspace{0.15in} Projection: $W(t+1)\leftarrow \argmin_{W\in S_B}\|W-\tilde{W}(t+1)\|_2$
\STATE \hspace{0.15in} Averaging: $\overline{W}\leftarrow \frac{t+1}{t+2}\cdot \overline{W}+\frac{1}{t+2}\cdot W(t+1)$
\STATE \textbf{End For} 
\STATE \textbf{Output:}  $\hat{Q}_\text{out}(\cdot)\leftarrow \hat{Q}(\cdot\,;\overline{W})$
\end{algorithmic}\label{tdqlearning}
\end{algorithm} 

\subsection{Global Convergence}\label{qlearningconvergence}
To establish the global convergence of neural Q-learning, we lay out an extra regularity condition on the exploration policy $\pie$, which is not required by neural TD. Such a regularity condition ensures that $x'=(s',a')$ with the greedy action $a'$ in Line \ref{greedy-a} of Algorithm \ref{tdqlearning} follows a similar distribution to that of $x=(s,a)$, which is the stationary distribution $\mue$ of the exploration policy $\pie$. Recall that $\hat{Q}_0(x;W)$ is defined in \eqref{q0} and $\gamma$ is the discount factor.
\begin{assumption}[Regularity of Exploration Policy $\pie$]\label{cond1} There exists a constant $\nu>0$ such that for any $W_1,W_2\in S_B$, it holds that
\#\label{430312}
\EE_{x\sim\mue}\bigl[\bigl(\hat{Q}_0(x;W_1)-\hat{Q}_0(x;W_2)\bigr)^2\bigr]\ge(\gamma+\nu)^2\cdot\EE_{s\sim\mue}\bigl[\bigl(\hat{Q}_0^{\sharp}(s;W_1)-\hat{Q}_0^{\sharp}(s;W_2)\bigr)^2\bigr],
\#
where $\hat{Q}_{0}^{\sharp}(s;W)=\max_{a\in\cA}\hat{Q}_0(s,a;W)$.
\end{assumption}
We remark that \cite{melo2008analysis, zou2019finite} establish the global convergence of linear Q-learning based on an assumption that implies \eqref{430312}. Although Assumption \ref{cond1} is strong, we are not aware of any weaker regularity condition in the literature, even for linear Q-learning. As our focus is to go beyond linear Q-learning to analyze neural Q-learning, we do not attempt to weaken such a regularity condition in this paper. 

%Note that as long as the exploration policy $\pie$ is close to both $\epsilon$-greedy policies with regard to $\hat{Q}(\cdot\,;W_1)$ and $\hat{Q}(\cdot\,;W_2)$, we have $\hat{Q}_0(x)\approx\hat{Q}^{\sharp}_0(s)$ with high probability, and thus $\nu$ exists since $|\gamma|<1$. 

The following regularity condition on $\mue$ mirrors Assumption \ref{asmp1}, but additionally accounts for the max operator in the Bellman optimality operator. 
\begin{assumption}[Regularity of Stationary Distribution $\mue$]\label{asmp2}
There exists a constant $c_3>0$ such that for any $\tau\ge0$ and $w\in\RR^d$ with $\|w\|_2=1$, it holds that
\$%\label{44145}
\PP\big( |w^\top \psi(s,a)|\le  \tau, \,\text{for all}\, a\in\cA \bigr) \le  c_3\cdot \tau,
\$
where $(s,a)\sim \mue$.
\end{assumption} 

In parallel with Theorem \ref{mainthm2}, the following theorem establishes the global convergence of neural Q-learning in Algorithm \ref{tdqlearning}.

\begin{theorem}[Convergence of Stochastic Update]\label{thm3}
We set $\eta$ to be of order $T^{-1/2}$ in Algorithm \ref{tdqlearning}. Under Assumptions \ref{cond1} and \ref{asmp2}, the output $\hat{Q}_{\text{out}}$ of Algorithm \ref{tdqlearning} satisfies
\$
\EE_{\text{init},\mue}\bigl[ \bigl( \hat{Q}_{\text{out}}(x)- \hat{Q}_0(x;W^*) \bigr)^2 \bigr]= O(B^2T^{-1/2}+B^3m^{-1/2}+B^{5/2}m^{-1/4}).
\$
\end{theorem}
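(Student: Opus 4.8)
The plan is to mirror the proof of Theorem \ref{mainthm2} within the implicit-local-linearization framework, the only genuinely new ingredient being the $\max_{a'\in\cA}$ operator in the Bellman optimality operator $\cT$ of \eqref{opbmeqn}. The linearization control of the Q-function itself, Lemma \ref{qdiff}, transfers verbatim, since it concerns only $\hat{Q}(\cdot\,;W)$ and its linearization $\hat{Q}_0(\cdot\,;W)$ near $W(0)$ and is blind to which Bellman operator generates the target. I would therefore establish a stochastic descent lemma in the spirit of Lemma \ref{sdl}, namely
\$
\EE_{W,\mue}\bigl[\|W(t+1)-W^*\|_2^2\bigr]\le \EE_{W}\bigl[\|W(t)-W^*\|_2^2\bigr]-\bigl(2\eta\,\tfrac{\nu}{\gamma+\nu}-8\eta^2\bigr)\cdot\EE_{W,\mue}\bigl[\bigl(\hat{Q}_0(x;W(t))-\hat{Q}_0(x;W^*)\bigr)^2\bigr]+\text{(linearization error)}+\text{(variance)},
\$
telescope it over $t=0,\ldots,T-2$, and pass from the averaged iterate to the output by Jensen's inequality (using that $\hat{Q}_0$ is linear in $W$, so $\hat{Q}_0(\cdot\,;\overline{W})$ is the average of the $\hat{Q}_0(\cdot\,;W(t))$) followed by Lemma \ref{qdiff} to replace $\hat{Q}_0(\cdot\,;\overline{W})$ with $\hat{Q}_{\text{out}}=\hat{Q}(\cdot\,;\overline{W})$.

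The crux, and what I expect to be the main obstacle, is recovering the one-point monotonicity of the linearized population semigradient $\overline{g}_0(t)$ with a strictly positive margin, because $\Pi_{\cF_{B,m}}\cT$ is a contraction only under Assumption \ref{cond1}. Writing $u(x)=\hat{Q}_0(x;W(t))-\hat{Q}_0(x;W^*)$ and using $\nabla_W\hat{Q}_0(x;W)=\Phi(x)$, which is constant in $W$, the difference of linearized semigradients pairs against $W(t)-W^*$ as
\$
\bigl\langle \overline{g}_0(t)-\overline{g}_0^*,\,W(t)-W^*\bigr\rangle=\EE_{\mue}\bigl[\bigl(u(x)-\gamma\,v(s')\bigr)u(x)\bigr],\qquad v(s')=\hat{Q}_0^{\sharp}(s';W(t))-\hat{Q}_0^{\sharp}(s';W^*),
\$
where $\overline{g}_0^*$ is the stationary-point semigradient of \eqref{4251216}. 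By stationarity of $\mue$ (so $s'$ and $s$ share a marginal) and Assumption \ref{cond1} applied to $\hat{Q}_0^{\sharp}$, we get $\EE_{\mue}[v^2]\le(\gamma+\nu)^{-2}\EE_{\mue}[u^2]$, and then Cauchy--Schwarz yields $\EE_{\mue}[v(s')u(x)]\le(\gamma+\nu)^{-1}\EE_{\mue}[u^2]$. Hence
\$
\bigl\langle \overline{g}_0(t)-\overline{g}_0^*,\,W(t)-W^*\bigr\rangle\ge\Bigl(1-\tfrac{\gamma}{\gamma+\nu}\Bigr)\EE_{\mue}[u^2]=\tfrac{\nu}{\gamma+\nu}\,\EE_{\mue}[u^2],
\$
and adding the variational inequality $\langle\overline{g}_0^*,W(t)-W^*\rangle\ge0$ from \eqref{4251216} delivers the one-point monotonicity of $\overline{g}_0(t)$ with margin $\nu/(\gamma+\nu)$, the exact surrogate for the factor $1-\gamma$ used in the neural-TD descent lemmas. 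The subtle point here is that the $\max$ can only expand distances in a controlled way, and Assumption \ref{cond1}, phrased directly through $\hat{Q}_0^{\sharp}$, is precisely what quantifies this.

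It then remains to reproduce the two auxiliary estimates under the $\max$. First, I would show the semigradient linearization error $\EE_{\text{init}}[\|\overline{g}(t)-\overline{g}_0(t)\|_2^2]=O(m^{-1/2})$ exactly as in Lemma \ref{gdiff}; here Assumption \ref{asmp2}, bounding the probability that all actions sit simultaneously within $\tau$ of an activation boundary, plays the role Assumption \ref{asmp1} played for neural TD, since $\delta_0$ now couples in every action through the $\max$. Second, the variance bound $\EE_{\text{init},\mue}[\|g(t)-\overline{g}(t)\|_2^2]=O(B^2)$ of Lemma \ref{bdvar} carries over, as the $\max$ of bounded terms stays bounded. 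A minor subtlety is that the greedy action in Line \ref{greedy-a} is chosen with respect to $\hat{Q}(\cdot\,;W(t))$ rather than $\hat{Q}_0(\cdot\,;W(t))$; but the $1$-Lipschitz property $|\max_{a'}\hat{Q}-\max_{a'}\hat{Q}_0|\le\max_{a'}|\hat{Q}-\hat{Q}_0|$ folds this discrepancy into the linearization error already controlled by Lemma \ref{qdiff}. Assembling the descent lemma, telescoping, and choosing $\eta$ of order $T^{-1/2}$ to trade the variance term against the optimization term produces the stated rate $O(B^2T^{-1/2}+B^3m^{-1/2}+B^{5/2}m^{-1/4})$.
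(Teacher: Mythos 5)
Your proposal is correct and follows essentially the same route as the paper's proof: the same descent recursion with margin $\nu/(\gamma+\nu)$ in place of $1-\gamma$, obtained from Assumption \ref{cond1} together with Cauchy--Schwarz applied to the cross term $\EE_{\mue}[v(s')u(x)]$, the same splitting into linearization error (controlled via Assumption \ref{asmp2}) and variance, and the same telescoping-plus-Jensen finish. The only imprecision is your claim that the greedy-action discrepancy is ``already controlled by Lemma \ref{qdiff}'': because the max over actions sits inside the expectation over $s\sim\mue$, Lemma \ref{qdiff} (which averages over $(s,a)\sim\mue$) does not apply directly, and the paper proves a separate max-version (its Lemma \ref{maxqdiff}) under Assumption \ref{asmp2}---exactly the action-coupling role you correctly assign to that assumption elsewhere, so this is a packaging slip rather than a gap.
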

\begin{proof}
See Appendix \ref{proof:thm3} for a detailed proof.
\end{proof}
Corresponding to Proposition \ref{57533}, Theorem \ref{thm3} also implies the convergence to $Q^{\pi^*}(s,a)$, which is omitted due to space limitations.

\subsection{Implication for Policy Gradient}\label{qlearningconvergence}
Theorem \ref{thm3} can be further extended to handle neural soft Q-learning, where the max operator in the Bellman optimality operator is replaced by a more general softmax operator \citep{haarnoja2017reinforcement, neu2017unified}. By exploiting the equivalence between soft Q-learning and policy gradient algorithms \citep{schulman2017equivalence, haarnoja2018soft}, we establish the global convergence of a variant of the policy gradient algorithm. Due to space limitations, we defer the discussion to Appendix \ref{secfqs}, throughout which we focus on two-layer neural networks. Our analysis can be extended to handle multi-layer neural networks using the proof techniques in Appendix \ref{deep}.

%\end{flushleft}

\section{Conclusions}
In this paper we prove that neural TD converges at a sublinear rate to the global optimum of the MSPBE for policy evaluation. In particular, we show how such global convergence is enabled by the overparametrization of neural networks. Moreover, we extend the convergence result to policy optimization, including (soft) Q-learning and policy gradient. Our results shed new light on the theoretical understanding of RL with neural networks, which is widely employed in practice.

\bibliographystyle{ims}
\bibliography{graphbib}

\newpage
\begin{appendix}	
%\begin{flushleft}

\section{Representation Power of $\cF_{B,m}$}
\subsection{Background on RKHS}\label{bcg}
We consider the following kernel function
\#\label{515150}
K(x,y)=\int_{\cW} \phi(x;w)\phi(y;w)p(w)dw.
\#
Here $\phi$ is a random feature map parametrized by $w$, which follows a distribution with density $p(\cdot)$ \citep{rahimi2008random}. Any function in the RKHS induced by $K(\cdot,\cdot)$ takes the form
\#\label{423800}
f_c(x)=\int_{\cW} c(w)\phi(x;w)p(w)dw,
\#
such that each $c(\cdot)$ corresponds to a function $f_c(\cdot)$. The following lemma connects the $\cH$-norm of $f_c(\cdot)$ to the $\ell_2$-norm of $c(\cdot)$ associated with the density $p(\cdot)$, denoted by $\|c\|_p$.
\begin{lemma}\label{57131} It holds that
$\|f_c\|_{\cH}^2=\|c\|_p^2=\int c(w)^2p(w)dw$.
\end{lemma}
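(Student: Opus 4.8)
The plan is to realize the RKHS $\cH$ of $K$ concretely as the completion of the linear span of the kernel sections $\{K(x,\cdot):x\in\cX\}$ under the canonical inner product $\langle K(x,\cdot),K(y,\cdot)\rangle_{\cH}=K(x,y)$, and then transport this structure to the coefficient space $L^2(\cW,p)$ via the map $c\mapsto f_c$ from \eqref{423800}. The central observation is that each kernel section is itself of the form \eqref{423800}: taking $c_x(w)=\phi(x;w)$, the integral representation \eqref{515150} gives $f_{c_x}(y)=\int_{\cW}\phi(x;w)\phi(y;w)p(w)\,dw=K(x,y)$, so that $K(x,\cdot)=f_{c_x}$ with $c_x=\phi(x;\cdot)$.

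First I would establish the identity on the dense subspace of finite kernel combinations. For $g=\sum_{i=1}^n\alpha_i K(x_i,\cdot)$, linearity of \eqref{423800} yields $g=f_{c_g}$ with $c_g=\sum_{i=1}^n\alpha_i\phi(x_i;\cdot)\in L^2(\cW,p)$, and expanding the squared $\cH$-norm through the reproducing inner product gives
\[
\|g\|_{\cH}^2=\sum_{i,j=1}^n\alpha_i\alpha_j K(x_i,x_j)=\sum_{i,j=1}^n\alpha_i\alpha_j\int_{\cW}\phi(x_i;w)\phi(x_j;w)p(w)\,dw=\int_{\cW}c_g(w)^2 p(w)\,dw=\|c_g\|_p^2,
\]
where the middle step is exactly \eqref{515150}. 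Thus $g\mapsto c_g$ is a linear isometry from the span of kernel sections into $L^2(\cW,p)$, with image in $\mathcal{V}_0:=\overline{\mathrm{span}}\{\phi(x;\cdot):x\in\cX\}$.

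Next I would pass to the completion. Since the map is isometric on a dense subspace of $\cH$, it extends to an isometry of $\cH$ onto $\mathcal{V}_0$; conversely, each $c\in\mathcal{V}_0$ corresponds to a unique $f\in\cH$ with $f=f_c$ and $\|f\|_{\cH}=\|c\|_p$. For a general $c\in L^2(\cW,p)$ one then checks that $f_c$ depends only on the orthogonal projection $Pc$ of $c$ onto $\mathcal{V}_0$, because $f_c(x)=\langle c,\phi(x;\cdot)\rangle_p=\langle Pc,\phi(x;\cdot)\rangle_p=f_{Pc}(x)$; hence $\|f_c\|_{\cH}=\|Pc\|_p$, which equals $\|c\|_p$ precisely for the canonical (minimal-norm) representation $c\in\mathcal{V}_0$ used in \eqref{423800}. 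This yields $\|f_c\|_{\cH}^2=\|c\|_p^2$.

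I expect the main obstacle to be the passage to the completion together with the non-uniqueness of the representation $c\mapsto f_c$: distinct coefficient functions can produce the same $f_c$, so the asserted equality is really the statement that the $\cH$-norm matches the norm of the canonical representative lying in $\mathcal{V}_0$. Making this rigorous requires verifying that Cauchy sequences of kernel combinations in $\cH$ correspond under $g\mapsto c_g$ to Cauchy sequences in $L^2(\cW,p)$ and conversely, so that the isometry survives the limit. The identification of the reproducing pairing $\langle f,K(x,\cdot)\rangle_{\cH}=f(x)$ with the $L^2(p)$ pairing $\langle c,\phi(x;\cdot)\rangle_p$ is what drives both the isometry and this bookkeeping.
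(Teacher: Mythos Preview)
Your argument is correct and rests on the same identity the paper uses, namely expanding the kernel via \eqref{515150} inside the reproducing inner product. The packaging differs: the paper works directly with continuous superpositions $f(x)=\int_{\cX} a(y)K(x,y)\,dy$, applies the reproducing property to obtain $\|f\|_{\cH}^2=\iint a(x)a(y)K(x,y)\,dx\,dy$, and then swaps the order of integration using \eqref{515150} to read off $c(w)=\int_{\cX} a(y)\phi(y;w)\,dy$ and conclude $\|f\|_{\cH}^2=\int c(w)^2 p(w)\,dw$. Your route via finite kernel combinations followed by completion is more explicit about the isometry and, in particular, about the non-uniqueness of the representation $c\mapsto f_c$ (the projection onto $\mathcal{V}_0$), which the paper's computation leaves implicit by only ever producing $c$'s that already lie in the closed span of the features. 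What your approach buys is rigor on the completion step and clarity about which $c$ the equality refers to; what the paper's buys is brevity, since it is a two-line change-of-integration once one accepts the integral form of the reproducing property.
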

\begin{proof}
Recall if $f(x)=\int_{\cX} a(y)K(x,y)dy$, then by the reproducing property \citep{hofmann2008kernel}, we have 
\$
\|f\|_{\cH}^2=\int_{\cX\times\cX} a(x)a(y)K(x,y)dxdy.
\$
Now we write $f(\cdot)$ in the form of \eqref{423800}. By \eqref{515150}, we have
\$
f(x)&=\int_{\cX} a(y)K(x,y)dy\\
&=\int_{\cX} a(y)\int_{\cW} \phi(x;w)\phi(y;w)p(w)dwdy\\
&=\int_{\cW} \underbrace{\Bigl( \int_{\cX} a(y)\phi(y;w)dy \Bigr)}_{\textstyle c(w)} \phi(x;w)p(w)dw.
\$
Thus, for $c(w)=\int_{\cX} a(y)\phi(y;w)dy$, we have
\$
\|f\|_{\cH}^2&=\int_{\cX\times\cX} a(y)a(x)K(x,y)dxdy\\
&=\int_{\cX\times\cX} a(y)a(x)\Bigl(\int_{\cW} \phi(x;w)\phi(y;w)p(w)dw\Bigr)dxdy\\
&=\int_{\cW} \Bigl(\int_{\cX} a(y)\phi(y;w)dy\Bigr)\Bigl( \int_{\cX} a(x)\phi(x;w)dx\Bigr)p(w)dw\\
&=\int_{\cW} c(w)^2p(w)dw = \|c\|_p^2,
\$
which completes the proof of Lemma \ref{57131}.
\end{proof}

\subsection{$\cF_{B,\infty}$ as RKHS}\label{rkhsapprox}
We characterize the approximate stationary point $W^*$ and the corresponding $\hat{Q}_0(x;W^*)$ defined in Definition \ref{def1}, which are attained by Algorithm \ref{td0} according to Theorems \ref{mainthm1} and \ref{mainthm2}. We focus on its representation power when $m\rightarrow \infty$. We first write $\cF_{B,m}$ in \eqref{initker} as
\#\label{516300}
\cF_{B,m}=\biggl\{f(x)=\hat{Q}\bigl(x;W(0)\bigr)+\sum_{r=1}^m \phi_r(x)^\top \bigl(W_r-W_r(0)\bigr): W\in S_B\biggr\},
\#
where the feature map $\{\phi_r(x)\}_{r\in [m]}$ is defined as 
\$
\phi_r(x)=\frac{1}{\sqrt{m}}\cdot\phi\bigl(x;W_r(0)\bigr)=\frac{1}{\sqrt{m}}\cdot \ind\{W_r(0)^\top x>0\}x ~~\text{for any}~r\in [m].
\$ 
As $m\rightarrow \infty$, the empirical distribution supported on $\{\phi_r(x)\}_{r\in [m]}$, which has sample size $m$, converges to the corresponding population distribution. 
%Note that $\phi(x;W(0))$ only depends on the direction of $W(0)$ and recall that $W_r(0) \sim N(0,I_d/d) \ (r\in [m])$. Hence, $\{\phi_r(x)\}_{r=1}^m$ can be equivalently generated with $W_r(0) \ (r\in [m])$ drawn uniformly at random from the unit sphere in $\RR^d$. 
Therefore, from \eqref{516300} we obtain 
\$
\cF_{B,\infty}=\biggl\{f(x)=f_0(x)+\int \phi(x;w)^\top \alpha(w)\cdot p(w)dw:\int \|\alpha(w)\|_2^2\cdot p(w)dw\le B^2\biggr\}.
\$
%the uniform distribution on the unit sphere
Here $p(w)$ is the density of $N(0,I_d/d)$ and $f_0(x) = \lim_{m\rightarrow \infty}\hat{Q}(x;W(0))$, which by the central limit theorem is a Gaussian process indexed by $x$. Furthermore, as discussed in Appendix \ref{bcg}, $\phi(x;W)$ induces an RKHS, namely $\cH$, which is the completion of the set of all functions that take the form
\$
&f(x)=\sum_{i=1}^N a_i K(x,x_i),~~x_i\in\cX,~a_i\in\real,~N\in\mathbb{N},\\
&\text{where }K(x,y)=\EE_{w\sim N(0,I_d/d)}\bigl[\ind\{w^\top x>0,w^\top y>0\}x^\top y\bigr].
\$
In particular, $\cH$ is equipped with the inner product induced by $\langle K(\cdot,x_i),K(\cdot,x_j) \rangle_{\cH}=K(x_i,x_j)$. \cite{rahimi2008uniform} prove that, similar to Lemma \ref{57131}, for any $f_1(\cdot)=\int \phi(\cdot\,;w)^\top \alpha_1(w)\cdot p(w)dw$ and $f_2(\cdot)=\int \phi(\cdot\,;w)^\top \alpha_2(w)\cdot p(w)dw$, we have $f_1,f_2\in\cH$, and moreover, their inner product has the following equivalence 
\$
\langle f_1,f_2 \rangle_{\cH}=\int \alpha_1(w)^\top\alpha_2(w)\cdot p(w)dw.
\$
As a result, we have 
\$ %\label{fbprime}
\cF_{B,\infty}=\bigl\{f=f_0+h: \|h\|_{\cH}\le B\bigr\},
\$
%Moreover, $\cF_{B, \infty}$ becomes a dense subset of $\cH$ as $B\rightarrow \infty$. Also, the projection onto $\cF_{B,\infty}$ is equivalent to the projection onto its completion
%\# \label{fbprime}
%\cF'_{B,\infty}=\{f=f_0+h: \|h\|_{\cH}\le B \}.
%\#
which is known to be a rich function class \citep{hofmann2008kernel}. As $m\rightarrow\infty$, $\hat{Q}_0(\cdot\,;W^*)$ becomes the fixed-point solution to the projected Bellman equation
\$
Q=\Pi_{\cF_{B,\infty}}\cT^{\pi}Q,
\$
which also implies that $\hat{Q}_0(\cdot\,;W^*)$ is the global optimum of the MSPBE 
\$
\EE_{\mu}\bigl[\bigl(Q(x)-\Pi_{\cF_{B,\infty}}\cT^{\pi} Q(x)\bigr)^2\bigr].
\$ 
If we further assume that the Bellman evaluation operator $\cT^{\pi}$ satisfies $\cT^{\pi}\hat{Q}_0(\cdot\,;W^*)-f_0(\cdot)\in\cH$ and $B$ is sufficiently large such that $\|\cT^{\pi}\hat{Q}_0(\cdot\,;W^*)-f_0(\cdot)\|_{\cH}\le B$, then the projection $\Pi_{\cF_{B,\infty}}$ reduces to identity at $\cT^{\pi}\hat{Q}_0(\cdot\,;W^*)$, which implies $\hat{Q}_0(\cdot\,;W^*)=Q^{\pi}(\cdot)$ as they both solve the Bellman equation $Q = \cT^{\pi}Q$. In other words, if the Bellman evaluation operator is closed with respect to $\cF_{B, \infty}$, which up to the intercept of $f_0(\cdot)$ is a ball with radius $B$ in $\cH$, $\hat{Q}_0(\cdot\,;W^*)$ is the unique fixed-point solution to the Bellman equation or equivalently the global optimum of the MSBE
\$
\EE_{\mu}\bigl[\bigl(Q(x)-\cT^{\pi} Q(x)\bigr)^2\bigr].
\$

%{\red current}
\section{Proofs for Section \ref{tr}}
%Since the main results in Section \ref{tr} are further proved in Section \ref{proofsketch} and Appendix \ref{proofsketcha}, in this section we only present the proofs of several lemmas.

\subsection{Proof of Lemma \ref{unique}}\label{uniquep}
\begin{proof}
Following the same argument for $W^\dagger$ in \eqref{56202} and the definition of $W^*$ in \eqref{spdef}, we know that $\hat{Q}_0(\cdot\,;W^*)$ is a fixed-point solution to the projected Bellman equation
\#\label{410124}
Q=\Pi_{\cF_{B,m}}\cT^{\pi}Q.
\#
Meanwhile, the Bellman evaluation operator $\cT^{\pi}$ is a $\gamma$-contraction in the $\ell_2$-norm $\|\cdot\|_{\mu}$ with $\gamma<1$, since
\begin{align*}
\EE_{x\sim\mu}\bigl[ \bigl(\cT^{\pi}Q_1(x)-\cT^{\pi}Q_2(x)\bigr)^2 \bigr]&=\gamma^2\EE_{x\sim\mu}\bigl[ \bigl(\EE[Q_1(x')-Q_2(x') \,|\,s'\sim\cP(\cdot\,|\,s,a), a'\sim \pi(s')]\bigr)^2 \bigr]\\&\leq \gamma^2\EE_{x\sim\mu}\bigl[ \bigl(Q_1(x)-Q_2(x)\bigr)^2 \bigr],
\end{align*}
where the second equality follows from H\"{o}lder's inequality and the fact that marginally $x'$ and $x$ have the same stationary distribution. Since the projection onto a convex set is nonexpansive, $\Pi_{\cF_{B,m}}\cT^{\pi}$ is also a $\gamma$-contraction. Thus, the projected Bellman equation in \eqref{410124} has a unique fixed-point solution $\hat{Q}_0(\cdot\,;W^*)$ in $\cF_{B,m}$, which corresponds to the approximate stationary point $W^*$.
\end{proof}

\subsection{Proof of Lemma \ref{bdvar}}\label{bdvara}

\begin{proof}
It suffices to show that $\EE_{\text{init},\mu}[\|g(t)\|_2^2]$ is upper bounded. By \eqref{514210}, we have
\#\label{5201008}
\EE_{\text{init},\mu}\bigl[\|g(t)\|_2^2\bigr]=\EE_{\text{init},\mu}\Bigl[\bigl\|\delta\bigl(x, r, x';W(t)\bigr)\cdot \nabla_W\hat{Q}_t(x)\bigr\|_2^2\Bigr]\le\EE_{\text{init},\mu}\Bigl[\bigl|\delta\bigl(x, r, x';W(t)\bigr)\bigr|^2\Bigr],
\#
where the inequality follows from the fact that, for any $W\in S_B$,
\#\label{520938}
\|\nabla_W\hat{Q}(x;W)\|_2= \frac{1}{m} \sum_{r=1}^m \ind\{W^\top x>0\}\|x\|_2^2 \le 1
\#
almost everywhere. Using the fact that $x$ and $x'$ have the same marginal distribution we obtain
\#\label{520940}
\EE_{\text{init},\mu}\Bigl[ \bigl|\delta\bigl(x, r, x';W(t)\bigr)\bigr|^2\Bigr]\le \EE_{\text{init},\mu}\bigl[
3\bigl(\hat{Q}_t(x)^2+\overline{r}^2+\hat{Q}_t(x')^2\bigr)
 \bigr]=  \EE_{\text{init},\mu}[
6\hat{Q}_t(x)^2+3\overline{r}^2].
\#
By \eqref{520938}, we know that $\hat{Q}(x;W)$ is $1$-Lipschitz continuous with respect to $W$. Therefore, we have
\#\label{520946}
|\hat{Q}_t(x)-\hat{Q}_0(x)|\le \|W(t)-W(0)\|_2\le B,
\#
Plugging \eqref{520946} into \eqref{520940} and using the Cauchy-Schwarz inequality we obtain
\#\label{5201005}
\EE_{\text{init},\mu}\Bigl[ \bigl|\delta\bigl(x, r, x';W(t)\bigr)\bigr|^2\Bigr]
\le\EE_{\text{init},\mu}[
12\hat{Q}_0(x)^2+12B^2+3\overline{r}^2].
\#
Note that by the initialization of $\hat{Q}_0(x)$ as defined in \eqref{nnpara}, we have
\#\label{5201006}
\EE_{\text{init},\mu}[\hat{Q}_0(x)^2]=\frac{1}{m}\sum_{r=1}^m \EE_{\text{init}}\bigl[\sigma\bigl({W_r(0)^\top x}\bigr)^2]\le 
\EE_{w\sim N(0,I_d/d)}\bigl[\|w\|^2_2\bigr]=1.
\#
Combining \eqref{5201008}, \eqref{5201005}, and \eqref{5201006} we obtain $\EE_{\text{init},\mu}[\|g(t)\|_2^2]=O(B^2)$.
Since 
\$
\EE_{\text{init},\mu}\bigl[ \|g(t)-\overline{g}(t)\|_2^2\bigr]&=\EE_{\text{init}}\Bigl[\EE_{\mu}\bigl[ \|g(t)-\overline{g}(t)\|_2^2\bigr]\Bigr] \notag \\
&\le \EE_{\text{init}}\Bigl[\EE_{\mu}\bigl[\|g(t)\|_2^2\bigr]\Bigr]=\EE_{\text{init},\mu}\bigl[ \|g(t)\|_2^2\bigr],
\$
 we conclude the proof of Lemma \ref{bdvar}.
\end{proof}

\subsection{Proof of Proposition \ref{57533}}\label{57533a}
\begin{proof}
By the triangle inequality, we have
\# \label{55550}
\|\hat{Q}_0(\cdot\,;W^*)-Q^\pi(\cdot)\|_{\mu}\le \|\hat{Q}_0(\cdot\,;W^*)-\Pi_{\cF_{B,m}}Q^\pi(\cdot)\|_{\mu}+\|\Pi_{\cF_{B,m}}Q^\pi(\cdot)-Q^\pi(\cdot)\|_{\mu}.
\#
Since $Q^\pi(\cdot)$ is the fixed-point solution to the Bellman equation, we replace $Q^\pi(\cdot)$ by $\cT^\pi Q^\pi(\cdot)$ and obtain 
\#\label{55546}\Pi_{\cF_{B,m}}Q^\pi(\cdot)=\Pi_{\cF_{B,m}}\cT^\pi Q^\pi(\cdot).\#
Meanwhile, by Lemma \ref{unique}, $\hat{Q}_0(\cdot\,;W^*)$ is the solution to the projected Bellman equation, that is,
\#\label{55547}\hat{Q}_0(\cdot\,;W^*)=\Pi_{\cF_{B,m}}\cT^\pi \hat{Q}_0(\cdot\,;W^*).\#
Combining \eqref{55546} and \eqref{55547}, we obtain
\# \label{55551}
\|\hat{Q}_0(\cdot\,;W^*)-\Pi_{\cF_{B,m}}Q^\pi(\cdot)\|_{\mu}&=\|\Pi_{\cF_{B,m}}\cT^\pi \hat{Q}_0(\cdot\,;W^*)-\Pi_{\cF_{B,m}}\cT^\pi Q^\pi(\cdot)\|_{\mu}\notag\\
&\le  \gamma\cdot\|\hat{Q}_0(\cdot\,;W^*)-Q^\pi(\cdot)\|_{\mu},
\#
where the inequality follows from the fact that $\Pi_{\cF_{B,m}}\cT^\pi$ is a $\gamma$-contraction, as discussed in the proof of Lemma \ref{unique}. Plugging \eqref{55551} into \eqref{55550}, we obtain
\$
(1-\gamma)\cdot \|\hat{Q}_0(\cdot\,;W^*)-Q^\pi(\cdot)\|_{\mu}\le\|\Pi_{\cF_{B,m}}Q^\pi(\cdot)-Q^\pi(\cdot)\|_{\mu},
\$
which completes the proof of Proposition \ref{57533}.
\end{proof}

\section{Proofs for Section \ref{proofsketch}}\label{proofsketcha}

\subsection{Proof of Lemma \ref{qdiff}}\label{qdiffa}
\begin{proof}
By the definition that $\hat{Q}_t(x) = \hat{Q}(x;W(t))$ and the definition of $\hat{Q}_0(x;W(t))$ in \eqref{q0}, we have
\#\label{43143}
&\bigl|\hat{Q}_t(x)-\hat{Q}_0\bigl(x;W(t)\bigr)\bigr|\\
&\quad=\frac{1}{\sqrt{m}}\Bigl|\sum_{r=1}^m\bigl(\ind\{W_r(t)^\top x>0\}-\ind\{W_r(0)^\top x>0\}\bigr)\cdot b_r W_r(t)^\top x\Bigr|\notag\\
&\quad\le \frac{1}{\sqrt{m}}\sum_{r=1}^m |\ind\{W_r(t)^\top x>0\}-\ind\{W_r(0)^\top x>0\}|\cdot \bigl(|W_r(0)^\top x|+\|W_r(t)-W_r(0)\|_2\bigr), \notag
\#
where we use the fact that $\|x\|_2 = 1$. Note that $\ind\{W_r(t)^\top x>0\}\neq \ind\{W_r(0)^\top x>0\}$ implies
\$
|W_r(0)^\top x|\le |W_r(t)^\top x-W_r(0)^\top x|\le \|W_r(t)-W_r(0)\|_2.
\$
Thus, we obtain
\#\label{43142}
|\ind\{W_r(t)^\top x>0\}-\ind\{W_r(0)^\top x>0\}|\le \ind\{|W_r(0)^\top x|\le \|W_r(t)-W_r(0)\|_2 \}.
\#
Plugging \eqref{43142} into \eqref{43143}, we obtain the following upper bound,
\$
&\bigl|\hat{Q}_t(x)-\hat{Q}_0\bigl(x;W(t)\bigr)\bigr|\\
&\quad\le 
\frac{1}{\sqrt{m}}\sum_{r=1}^m \ind\{|W_r(0)^\top x|\le \|W_r(t)-W_r(0)\|_2 \}\cdot \bigl(|W_r(0)^\top x|+ \|W_r(t)-W_r(0)\|_2\bigr) \\
&\quad\le \frac{2}{\sqrt{m}}\sum_{r=1}^m \ind\{|W_r(0)^\top x|\le  \|W_r(t)-W_r(0)\|_2\} \cdot\|W_r(t)-W_r(0)\|_2.
\$
Here the second inequality follows from the fact that
\$
\ind\{|x|\le y\}|x|\le \ind\{|x|\le y\}y
\$ for any $x$ and $y>0$. To characterize $\EE_{\text{init},\mu}[|\hat{Q}_t(x)-\hat{Q}_0(x;W(t))|^2]$, we first invoke the Cauchy-Schwarz inequality and the fact that $\|W(t)-W(0)\|_2\le B$, which gives 
\$
\bigl|\hat{Q}_t(x)-\hat{Q}_0\bigl(x;W(t)\bigr)\bigr|^2&\le 
\frac{4B^2}{m} \sum_{r=1}^m \ind\{|W_r(0)^\top x|\le  \|W_r(t)-W_r(0)\|_2\}.
\$
Taking expectation on both sides, by Lemma \ref{mulm1} we obtain
\$
\EE_{\text{init},\mu}\Bigl[\bigl|\hat{Q}_t(x)-\hat{Q}_0\bigl(x;W(t)\bigr)\bigr|^2\Bigr] \le
4c_1B^3\cdot m^{-1/2}.
\$
Thus, we finish the proof of Lemma \ref{qdiff}.
\end{proof}

\subsection{Proof of Lemma \ref{gdiff}}\label{gdiffa}
\begin{proof}
By the definition of $\overline{g}(t)$ and $\overline{g}_0(t)$ in \eqref{518430} and \eqref{518431}, respectively, we have
\#\label{518425}
\|\overline{g}(t)-\overline{g}_0(t)\|_2&=\bigl\|\EE_{\mu}\bigl[
\delta\bigl(x, r, x';W(t)\bigr)\cdot\nabla_W\hat{Q}_t(x)-\delta_0\bigl(x, r, x';W(t)\bigr)\cdot\nabla_W\hat{Q}_0\bigl(x;W(t)\bigr)
\bigr]\bigr\|_2\notag\\
&\le \Bigl\|
\EE_{\mu}\Bigl[
\Bigl(\delta\bigl(x, r, x';W(t)\bigr)-\delta_0\bigl(x, r, x';W(t)\bigr)\Bigr)\cdot \nabla_W\hat{Q}_t(x)\notag\\
&\qquad+
\delta_0\bigl(x, r, x';W(t)\bigr)\cdot\Bigl(\nabla_W\hat{Q}_t(x)- \nabla_W\hat{Q}_0\bigl(x;W(t)\bigr)\Bigr)
\Bigr]
\Bigr\|_2\notag\\
&\le \EE_{\mu}\Bigl[
\bigl|\delta\bigl(x, r, x';W(t)\bigr)-\delta_0\bigl(x, r, x';W(t)\bigr)\bigr|\\
&\qquad
+\bigl|\delta_0\bigl(x, r, x';W(t)\bigr)\bigr|\cdot\bigl\|\nabla_W\hat{Q}_t(x)-\nabla_W\hat{Q}_0\bigl(x;W(t)\bigr)\bigr\|_2
\Bigr].\notag
\#
Here to obtain the second inequality, we use the fact that, for any $t\in [T]$,
\$
\|\nabla_W\hat{Q}_t(x)\|_2 \leq \| x \|_2 = 1.
\$
Taking expectation with respect to the random initialization on the both sides of \eqref{518425}, we obtain
\#\label{57200}
&\EE_{\text{init}}\bigl[ \|\overline{g}(t)-\overline{g}_0(t)\|_2^2\bigr]  \notag\\
&\quad\le \underbrace{2\EE_{\text{init},\mu} \Bigl[ \bigl| \delta\bigl(x, r, x';W(t)\bigr)-\delta_0\bigl(x, r, x';W(t)\bigr)\bigr|^2 \Bigr]}_{\displaystyle\text{(i)}}\\
&\quad\qquad
+2\EE_{\text{init}}\biggl[\underbrace{\EE_{\mu} \Bigl[
\bigl|\delta_0\bigl(x, r, x';W(t)\bigr)\bigr|^2\Bigr]}_{\displaystyle\text{(iii)}}\cdot
\underbrace{\EE_{\mu} \Bigl[
\bigl\|\nabla_W\hat{Q}_t(x)-\nabla_W\hat{Q}_0\bigl(x;W(t)\bigr)\bigr\|_2^2
 \Bigr]}_{\displaystyle\text{(ii)}}\biggr].\notag
\#
In the following, we characterize the three terms on the right-hand side of \eqref{57200}. 

For (i) in \eqref{57200}, note that 
\#\label{518450}
&\bigl|\delta\bigl(x, r, x';W(t)\bigr)-\delta_0\bigl(x, r, x';W(t)\bigr)\bigr|^2\notag\\
&\quad = \Bigl| \bigl(\hat{Q}_t(x)-r-\gamma\hat{Q}_t(x')\bigr)-\Bigl(\hat{Q}_0\bigl(x;W(t)\bigr)-r-\gamma\hat{Q}_0\bigl(x';W(t)\bigr)\Bigr)\Bigr|^2\notag\\
&\quad=\Bigl|\Bigl(\hat{Q}_t(x)-\hat{Q}_0\bigl(x;W(t)\bigr)\Bigr)-\gamma\Bigl(\hat{Q}_t(x')-\hat{Q}_0\bigl(x';W(t)\bigr)\Bigr)\Bigr|^2\notag\\
&\quad\le 2\Bigl(\hat{Q}_t(x)-\hat{Q}_0\bigl(x;W(t)\bigr)\Bigr)^2+2\Bigl(\hat{Q}_t(x')-\hat{Q}_0\bigl(x';W(t)\bigr)\Bigr)^2.
\#
Since $x$ and $x'$ follow the same stationary distribution $\mu$ on the right-hand side of \eqref{518450}, by Lemma \ref{qdiff} we have
\#\label{45930}
&\EE_{\text{init},\mu}\Bigl[
\bigl| \delta\bigl(x, r, x';W(t)\bigr)-\delta_0\bigl(x, r, x';W(t)\bigr)\bigr|^2
\Bigr]\notag\\
&\quad\le 4\EE_{\text{init},\mu}\Bigl[\bigl|\hat{Q}_t(x)-\hat{Q}_0\bigl(x;W(t)\bigr)\bigr|^2\Bigr]\le 16c_1B^3\cdot m^{-1/2}.
\#

For (ii) in \eqref{57200}, we have
\#\label{57147}
\bigl\|\nabla_W\hat{Q}_t(x)-\nabla_W\hat{Q}_0\bigl(x;W(t)\bigr)\bigr\|_2^2&=\frac{1}{m}\sum_{r=1}^m\bigl( \ind\{W_r(t)^\top x>0\}-\ind\{W_r(0)^\top x>0\}\bigr)^2 \cdot \|x\|_2^2\notag\\
&\le \frac{1}{m}\sum_{r=1}^m \ind\{|W_r(0)^\top x|\le  \|W_r(t)-W_r(0)\|_2 \},
\#
where the inequality follows from \eqref{43142} and the fact that $\|x\|_2 = 1$. 

For (iii) in \eqref{57200}, we have
\#\label{517312}
\bigl| \delta_0\bigl(x, r, x';W(t)\bigr)\bigr|^2 \le 3\Bigl( \hat{Q}_0\bigl(x;W(t)\bigr)^2+\overline{r}^2+\gamma^2 \hat{Q}_0\bigl(x';W(t)\bigr)^2 \Bigr).
\#
To obtain an upper bound of the right-hand side of \eqref{517312}, we use the fact that
\$
\bigl|\hat{Q}_0\bigl(x;W(t)\bigr)-\hat{Q}_0(x)\bigr|\le \|W(t)-W(0)\|_2\cdot\|x\|_2\le B,
\$
which follows from \eqref{q0}, and obtain
\$
\EE_{\mu}\bigl[
\hat{Q}_0\bigl(x;W(t)\bigr)^2
\bigr] 
&= \EE_{\mu}\Bigl[
\Bigl(\hat{Q}_0(x)+\hat{Q}_0\bigl(x;W(t)\bigr)-\hat{Q}_0(x)\Bigr)^2
\Bigr] \le2\EE_{\mu}[\hat{Q}_0(x)^2]+2B^2.
\$
Since $x$ and $x'$ follow the same stationary distribution $\mu$ on the right-hand side of \eqref{517312} and $|\gamma|<1$, we have
\#\label{45931}
\EE_{\mu}\Bigl[
\bigl| \delta_0\bigl(x, r, x';W(t)\bigr)\bigr|^2 
\Bigr]\le
12\EE_{\mu}[\hat{Q}_0(x)^2]+12B^2+3\overline{r}^2.
\#

Plugging \eqref{45930}, \eqref{57147}, and \eqref{45931} into \eqref{57200}, we obtain
\$
&\EE_{\text{init}}\bigl[ \|\overline{g}(t)-\overline{g}_0(t)\|_2^2\bigr] \le 32c_1B^3\cdot m^{-1/2}\\
&\quad +2\EE_{\text{init}}\Bigl[
\bigl(12\EE_{\mu}[\hat{Q}_0(x)^2]+12B^2+3\overline{r}^2\bigr)
\cdot\Bigl(
\frac{1}{m}\sum_{r=1}^m \ind\{|W_r(0)^\top x|\le  \|W_r(t)-W_r(0)\|_2 \}
\Bigr)
\Bigr].
\$
Invoking Lemmas \ref{mulm1} and \ref{mulm2}, we obtain
\$
\EE_{\text{init}}\bigl[\|\overline{g}(t)-\overline{g}_0(t)\|^2_2\bigr] \le (56c_1B^3+24c_2B+6c_1B\overline{r}^2)\cdot m^{-1/2},
\$
which finishes the proof of Lemma \ref{gdiff}.
\end{proof}

\subsection{Proof of Lemma \ref{dl}}\label{dla}
\begin{proof}
Recall that
\#
\overline{g}(t)&=\EE_{\mu}\bigl[\delta\bigl(x, r, x';W(t)\bigr)\cdot\nabla_{W}\hat{Q}\bigl(x;W(t)\bigr)\bigr], \notag\\
\overline{g}_0(t)&=\EE_{\mu}\bigl[\delta_0\bigl(x, r, x';W(t)\bigr)\cdot\nabla_{W}\hat{Q}_0\bigl(x;W(t)\bigr)\bigr]\label{516727}.
\#
We denote the locally linearized population semigradient $\overline{g}_0(t)$ evaluated at the approximate stationary point $W^*$ by
\#\label{516720}
\overline{g}_0^*=\EE_{\mu}[\delta_0(x, r, x';W^*)\cdot\nabla_{W}\hat{Q}_0(x;W^*)].
\#
For any $W(t)\ (t \in [T])$, by the convexity of $S_B$, we have
\#\label{57136}
\|W(t+1)-W^*\|^2_2&=\bigl\|\Pi_{S_B}\bigl(W(t)-\eta\cdot\overline{g}(t)\bigr)-\Pi_{S_B}(W^*-\eta\cdot\overline{g}^*_0)\bigr\|_2^2\\
&\le \bigl\|\bigl(W(t)-\eta\cdot\overline{g}(t)\bigr)-(W^*-\eta\cdot\overline{g}^*_0)\bigr\|_2^2\notag\\
&=\|W(t)-W^*\|_2^2-2\eta\cdot\bigl(\overline{g}(t)-\overline{g}_0^*\bigr)^\top \bigl(W(t)-W^*\bigr)+\eta^2\cdot\|\overline{g}(t)-\overline{g}_0^*\|_2^2.\notag
\#

We decompose the inner product $(\overline{g}(t)-\overline{g}_0^*)^\top (W(t)-W^*)$ on the right-hand side of \eqref{57136} into two terms,
\#\label{5161023}
\bigl(\overline{g}(t)-\overline{g}_0^*\bigr)^\top \bigl(W(t)-W^*\bigr)&= \bigl(\overline{g}_0(t)-\overline{g}_0^*\bigr)^\top \bigl(W(t)-W^*\bigr)+\bigl(\overline{g}(t)-\overline{g}_0(t)\bigr)^\top \bigl(W(t)-W^*\bigr)\notag\\
&\geq \bigl(\overline{g}_0(t)-\overline{g}_0^*\bigr)^\top \bigl(W(t)-W^*\bigr)-B \cdot \|\overline{g}(t)-\overline{g}_0(t)\|_2.
\#
It remains to characterize the first term $(\overline{g}_0(t)-\overline{g}_0^*)^\top (W(t)-W^*)$ on the right-hand side of \eqref{5161023}, since the second term $\|\overline{g}(t)-\overline{g}_0(t)\|_2$ is characterized by Lemma \ref{gdiff}. Note that by \eqref{516727} and \eqref{516720}, we have
\#\label{516734}
\overline{g}_0(t)-\overline{g}_0^*=\EE_\mu\Bigl[\Bigl(\delta_0\bigl(x, r, x';W(t)\bigr)-\delta_0(x, r, x';W^*)\Bigr) \cdot \nabla_W\hat{Q}_0\bigl(x;W(0)\bigr)\Bigr],
\#
where we use the following consequence of \eqref{q0},
\$
\nabla_W\hat{Q}_0\bigl(x;W(0)\bigr) = \nabla_W\hat{Q}_0(x;W^*).
\$
Moreover, by \eqref{d0} it holds that
\#\label{516735}
&\delta_0\bigl(x, r, x';W(t)\bigr)-\delta_0(x, r, x';W^*)\notag\\
&\quad=\Bigl(\hat{Q}_0\bigl(x;W(t)\bigr)-\hat{Q}_0(x;W^*)\Bigr)-\gamma\Bigl(\hat{Q}_0\bigl(x';W(t)\bigr)-\hat{Q}_0(x';W^*)\Bigr).
\#
Combining \eqref{q0}, \eqref{516734}, and \eqref{516735}, we have
\#\label{57137}
&\bigl(\overline{g}_0(t)-\overline{g}^*_0\bigr)^\top \bigl(W(t)-W^*\bigr)\notag\\
&\quad=\EE_\mu\Bigl[ \Bigl(\delta_0\bigl(x, r, x';W(t)\bigr)-\delta_0(x, r, x';W^*)\Bigr)\cdot \Bigl( \nabla_W\hat{Q}_0\bigl(x;W(0)\bigr)^\top\bigl(W(t)-W^*\bigr) \Bigr) \Bigr]\notag\\
&\quad=\EE_\mu\Bigl[
\Bigl(\hat{Q}_0\bigl(x;W(t)\bigr)-\hat{Q}_0(x;W^*)\Bigr)^2\notag\\
&\quad\qquad-\gamma\Bigl(\hat{Q}_0\bigl(x;W(t)\bigr)-\hat{Q}_0(x;W^*)\Bigr)\cdot\Bigl(\hat{Q}_0\bigl(x';W(t)\bigr)-\hat{Q}_0(x';W^*)\Bigr)
\Bigr]\notag\\
&\quad\ge (1-\gamma) \cdot \EE_\mu\Bigl[
\Bigl(\hat{Q}_0\bigl(x;W(t)\bigr)-\hat{Q}_0(x;W^*)\Bigr)^2\Bigr],
\#
where the last inequality is from the fact that $x$ and $x'$ have the same marginal distribution under $\mu$ and therefore by the Cauchy-Schwarz inequality,
\$
&\EE_{\mu}\Bigl[\Bigl(\hat{Q}_0\bigl(x;W(t)\bigr)-\hat{Q}_0(x;W^*)\Bigr)\cdot\Bigl(\hat{Q}_0\bigl(x';W(t)\bigr)-\hat{Q}_0(x';W^*)\Bigr)
\Bigr]\\
&\quad\le \EE_{\mu}\Bigl[\Bigl(\hat{Q}_0\bigl(x;W(t)\bigr)-\hat{Q}_0(x;W^*)\Bigr)^2\Bigr]^{1/2}\cdot\EE_{\mu}\Bigl[\Bigl(\hat{Q}_0\bigl(x';W(t)\bigr)-\hat{Q}_0(x';W^*)\Bigr)^2\Bigr]^{1/2}\\
&\quad=\EE_{\mu}\Bigl[\Bigl(\hat{Q}_0\bigl(x;W(t)\bigr)-\hat{Q}_0(x;W^*)\Bigr)^2\Bigr].
\$
The inequality in \eqref{57137} is the key to our convergence result. It shows that the locally linearized population semigradient update $\overline{g}_0(t)$ is one-point monotone with respect to the approximate stationary point $W^*$.

Also, for $\|\overline{g}(t)-\overline{g}_0^*\|^2_2$ on the right-hand side of \eqref{57136}, we have
\#\label{57138}
\|\overline{g}(t)-\overline{g}_0^*\|_2^2\le2\|\overline{g}_0(t)-\overline{g}_0^*\|_2^2+2\|\overline{g}(t)-\overline{g}_0(t)\|_2^2.
\#
For the first term on the right-hand side of \eqref{57138}, by \eqref{516734}, \eqref{516735}, and the Cauchy-Schwarz inequality, we have
\#\label{57139}
\|\overline{g}_0(t)-\overline{g}_0^*\|_2^2&=\Bigl\|\EE_{\mu}\Bigl[\Bigl(  \delta_0\bigl(x, r, x';W(t)\bigr)-\delta_0(x, r, x';W^*)\Bigr)\cdot \nabla_{W}\hat{Q}_0\bigl(x;W(0)\bigr) \Bigr]\Bigr\|^2\notag\\
&\le\EE_{\mu}\Bigl[\Bigl(
\hat{Q}_0\bigl(x;W(t)\bigr)-\hat{Q}_0(x;W^*)-\gamma\hat{Q}_0\bigl(x';W(t)\bigr)+\gamma\hat{Q}_0(x';W^*)\Bigr)^2
\Bigr]\notag\\
&\le 4\EE_\mu\Bigl[
\Bigl(\hat{Q}_0\bigl(x;W(t)\bigr)-\hat{Q}_0(x;W^*)\Bigr)^2\Bigr],
\#
where the first inequality follows from the fact 
\$
\bigl\|\nabla_{W}\hat{Q}_0\bigl(x;W(0)\bigr)\bigr\|_2 \leq \| x \|_2 = 1.
\$
Plugging \eqref{57137}, \eqref{57138}, and \eqref{57139} into \eqref{57136}, we finish the proof of Lemma \ref{dl}.
\end{proof}

\subsection{Proof of Lemma \ref{sdl}}\label{sdla}
\begin{proof}
For any $W(t)\ (t\in [T])$, by the convexity of $S_B$, \eqref{514210}, and \eqref{516720}, we have
\# \label{1223731}
\|W(t+1)-W^*\|^2_2&=\bigl\|\Pi_{S_B}\bigl(W(t)-\eta\cdot g(t)\bigr)-\Pi_{S_B}(W^*-\eta\cdot\overline{g}^*_0)\bigr\|_2^2 \\
&\le \bigl\|\bigl(W(t)-\eta\cdot g(t)\bigr)-(W^*-\eta\cdot\overline{g}^*_0)\bigr\|_2^2 \notag\\
&=\|W(t)-W^*\|_2^2-2\eta\cdot\bigl(g(t)-\overline{g}_0^*\bigr)^\top \bigl(W(t)-W^*\bigr)+\eta^2\cdot\|g(t)-\overline{g}_0^*\|_2^2.\notag
\#
Taking expectation on both sides conditional on $W(t)$, we obtain
\#\label{5161100}
&\EE_\mu\bigl[\|W(t+1)-W^*\|_2^2\,\big|\,W(t)\bigr]\\
&\quad \le\|W(t)-W^*\|_2^2-2\eta\cdot\bigl(\overline{g}(t)-\overline{g}_0^*\bigr)^\top \bigl(W(t)-W^*\bigr) +\eta^2\cdot \EE_\mu\bigl[\|g(t)-\overline{g}_0^*\|^2_2\,\big|\,W(t)\bigr].\notag
\#
For the inner product $(\overline{g}(t)-\overline{g}_0^*)^\top (W(t)-W^*)$ on the right-hand side of \eqref{5161100}, it follows from \eqref{5161023} and \eqref{57137} that
\$
\bigl(\overline{g}(t)-\overline{g}_0^*\bigr)^\top \bigl(W(t)-W^*\bigr) \ge (1-\gamma)\cdot \EE_\mu\Bigl[
\Bigl(\hat{Q}_0\bigl(x;W(t)\bigr)-\hat{Q}_0(x;W^*)\Bigr)^2\Bigr] -B \cdot \|\overline{g}(t)-\overline{g}_0(t)\|_2.
\$
Meanwhile, for $\EE_\mu[\|g(t)-\overline{g}_0^*\|^2_2\,|\,W(t)]$ on the right-hand side of \eqref{5161100}, we have the decomposition
\$
&\EE_\mu\bigl[
\|g(t)-\overline{g}_0^*\|^2_2\,\big|\,W(t)
\bigr]=
\|\overline{g}(t)-\overline{g}_0^*\|^2_2+
\EE_{\mu}\bigl[
\|g(t)-\overline{g}(t)\|^2_2
\,\big|\,W(t) \bigr]\\
&\quad \le 8\EE_\mu\Bigl[
\Bigl(\hat{Q}_0\bigl(x;W(t)\bigr)-\hat{Q}_0(x;W^*)\Bigr)^2\,\Big|\,W(t)\Bigr] + 2\|\overline{g}(t)-\overline{g}_0(t)\|_2^2 + \EE_{\mu}\bigl[
\|g(t)-\overline{g}(t)\|^2_2
\,\big|\,W(t) \bigr],
\$
where the inequality follows from \eqref{57138} and \eqref{57139}. 
Taking expectation on the both sides of \eqref{5161100} with respect to $W(t)$, we complete the proof of Lemma \ref{sdl}.
\end{proof}

\subsection{Proof of Theorem \ref{mainthm1}}\label{thm1p}
\begin{proof}
 By Lemma \ref{gdiff} we have
 \#
&\EE_{\text{init}}\bigl[\|\overline{g}(t)-\overline{g}_0(t)\|_2^2\bigr]=O(B^3m^{-1/2}),\label{520858}\\
&\EE_{\text{init}}\bigl[B\cdot\|\overline{g}(t)-\overline{g}_0(t)\|_2\bigr]=O(B^{5/2}m^{-1/4}). \label{520902}
 \#
Setting $\eta=(1-\gamma)/8$ in Algorithm \ref{td0}, by \eqref{520858}, \eqref{520902}, and Lemma \ref{dl}, we have
\# \label{441207}
\EE_{\text{init},\mu}\Bigl[
\Bigl(\hat{Q}_0\bigl(x;W(t)\bigr)-\hat{Q}_0(x;W^*)\Bigr)^2\Bigr]&=\frac{\EE_{\text{init}}\bigl[\|W(t)-W^*\|^2_2-\|W(t+1)-W^*\|^2_2\bigr]}{(1-\gamma)^2/8}\\
&\qquad+O(B^3m^{-1/2}+B^{5/2}m^{-1/4}).\notag
\#
Telescoping \eqref{441207} for $t=0,\ldots,T-1$, we obtain
\$
&\frac{1}{T}\sum_{t=0}^{T-1}\EE_{\text{init},\mu}\Bigl[
\Bigl(\hat{Q}_0\bigl(x;W(t)\bigr)-\hat{Q}_0(x;W^*)\Bigr)^2\Bigr]\\
&\quad=
\frac{\EE_{\text{init}}\bigl[\|W(0)-W^*\|_2^2-\|W(T)-W^*\|_2^2\bigr]}{T(1-\gamma)^2/8}+O(B^3m^{-1/2}+B^{5/2}m^{-1/4})\\
&\quad\le \frac{8B^2}{T(1-\gamma)^2}+O(B^3m^{-1/2}+B^{5/2}m^{-1/4}).
\$
Recall that as define in \eqref{q0}, $\hat{Q}_0(\cdot\,;W)$ is linear in $W$. By Jensen's inequality, we have
\#
\EE_{\text{init},\mu}\bigl[
\bigl(\hat{Q}_0(x;\overline{W})-\hat{Q}_0(x;W^*)\bigr)^2\bigr]\le\frac{8B^2}{T(1-\gamma)^2}+O(B^3m^{-1/2}+B^{5/2}m^{-1/4}).\label{57249}
\#
Next we characterize the output $\hat{Q}_{\text{out}}(\cdot) = \hat{Q}(\cdot\,;\overline{W})$ of Algorithm \ref{td0}. Since $ S_B$ is convex and $\overline{W}\in S_B$, by Lemma \ref{qdiff} we have
\#\label{520913}
\EE_{\text{init},\mu}\bigl[ \bigl(\hat{Q}_0(x;\overline{W})-\hat{Q}_0(x;W^*)\bigr)^2\bigr]=O(B^3m^{-1/2}).
\#
Using the Cauchy-Schwarz inequality we have
\$
&\EE_{\text{init},\mu}\bigl[
\bigl(\hat{Q}_\text{out}(x)-\hat{Q}_0(x;W^*)\bigr)^2\bigr] \\
&\quad\le
\EE_{\text{init},\mu}\bigl[
2\bigl(\hat{Q}(x;\overline{W})-\hat{Q}_0(x;\overline{W})\bigr)^2+2\bigl(\hat{Q}_0(x;\overline{W})-\hat{Q}_0(x;W^*)\bigr)^2\bigr].
\$
Here we plug in \eqref{57249} and \eqref{520913} and obtain
\#\label{451032}
\EE_{\text{init},\mu}\bigl[
\bigl(\hat{Q}_\text{out}(x)-\hat{Q}_0(x;W^*)\bigr)^2\bigr]\le\frac{16B^2}{T(1-\gamma)^2}+O(B^3m^{-1/2}+B^{5/2}m^{-1/4}),
\#
which completes the proof of Theorem \ref{mainthm1}.
\end{proof}

\subsection{Proof of Theorem \ref{mainthm2}}\label{thm2p}
\begin{proof}
Similar to \eqref{441207}, by Lemmas \ref{bdvar}, \ref{gdiff}, and \ref{sdl} we have
\# \label{451010}
&\EE_{\text{init},\mu}\Bigl[
\Bigl(\hat{Q}_0\bigl(x;W(t)\bigr)-\hat{Q}_0(x;W^*)\Bigr)^2\Bigr]\notag\\
&\quad\le
\frac{ \EE_{\text{init}}\bigl[\|W(t)-W^*\|_2^2\bigr]-\EE_{\text{init}}\bigl[\|W(t+1)-W^*\|^2_2\bigr]+\eta^2\cdot\sigma^2_g}{2\eta(1-\gamma)-8\eta^2}\\
&\quad\qquad+O(B^3m^{-1/2}+B^{5/2}m^{-1/4}).\notag
\#
Telescoping \eqref{451010} for $t=0,\ldots,T-1$, by $\eta^2\leq 1/T$ we have
\# \label{451014}
&\frac{1}{T}\sum_{t=0}^{T-1}\EE_{\text{init},\mu}\Bigl[
\Bigl(\hat{Q}_0\bigl(x;W(t)\bigr)-\hat{Q}_0(x;W^*)\Bigr)^2\Bigr]\notag\\
&\quad\le
\frac{ \EE_{\text{init}}\bigl[\|W(t)-W^*\|^2_2\bigr]+\sigma^2_g}{T\cdot \bigl( 2\eta(1-\gamma)-8\eta^2\bigr)}+O(B^3m^{-1/2}+B^{5/2}m^{-1/4})\notag\\
&\quad\le \frac{B^2+\sigma^2_g}{\sqrt{T}}\cdot\frac{1}{\sqrt{T}\cdot \bigl(2\eta(1-\gamma)-8\eta^2\bigr)}
+O(B^3m^{-1/2}+B^{5/2}m^{-1/4}),
\#
where $\eta=\min\{1/\sqrt{T},(1-\gamma)/8\}$. Note that when $T\ge (8/(1-\gamma))^2$, we have $\eta=1/\sqrt{T}$ and 
\$
\sqrt{T}\cdot \bigl(2\eta(1-\gamma)-8\eta^2\bigr)=2(1-\gamma)-8/\sqrt{T}\ge 1-\gamma.
\$
Meanwhile, when $T<(8/(1-\gamma))^2$, we have $\eta=(1-\gamma)/8$ and 
\$
\sqrt{T}\cdot \bigl(2\eta(1-\gamma)-8\eta^2\bigr)=\sqrt{T}\cdot (1-\gamma)^2/8 \ge (1-\gamma)^2/8.
\$
Since $|1-\gamma|<1$, we obtain that for any $T \in \NN$,
\#\label{518800}
\frac{1}{\sqrt{T}\cdot \bigl(2\eta(1-\gamma)-8\eta^2\bigr)}\le \frac{8}{(1-\gamma)^2}.
\#
Similar to \eqref{57249} and \eqref{451032}, by combining \eqref{451014} and \eqref{518800} with Lemma \ref{qdiff}, we obtain
\$
\EE_{\text{init},\mu}\bigl[
\bigl(\hat{Q}_{\text{out}}(x)-\hat{Q}_0(x;W^*)\bigr)^2\bigr]\le \frac{16(B^2+\sigma^2_g)}{\sqrt{T}\cdot (1-\gamma)^2}
+O(B^3m^{-1/2}+B^{5/2}m^{-1/4}),
\$
which completes the proof of Theorem \ref{mainthm2}.
\end{proof}

%%%%%%%%%%%%%%%%%%%%%%%%%%%%%%%%%%%%%%%%%%%%%%%%%%

\section{Proofs for Section \ref{qlearning}}\label{qlearningp}
Similar to the population semigradient $\overline{g}(t)$ in policy evaluation, we define
\#
z(t)&=\Bigl(\hat{Q}\bigl(x;W(t)\bigr)-\cT\hat{Q}\bigl(x;W(t)\bigr)\Bigr) \cdot \nabla_W \hat{Q}\bigl(x;W(t)\bigr),\label{517710}\\
\overline{z}(t)&=\EE_{\mue}\Bigl[ \Bigl(\hat{Q}\bigl(x;W(t)\bigr)-\cT\hat{Q}\bigl(x;W(t)\bigr)\Bigr) \cdot \nabla_W \hat{Q}\bigl(x;W(t)\bigr) \Bigr],\label{0517538}\\
\overline{z}_0(t)&=\EE_{\mue}\Bigl[ \Bigl(\hat{Q}_0\bigl(x;W(t)\bigr)-\cT\hat{Q}_0\bigl(x;W(t)\bigr)\Bigr) \cdot \nabla_W \hat{Q}_0\bigl(x;W(t)\bigr) \Bigr],\label{0517539}\\
\overline{z}_0^*&=\EE_{\mue}\bigl[ \bigl(\hat{Q}_0(x;W^*)-\cT\hat{Q}_0(x;W^*)\bigr) \cdot \nabla_W \hat{Q}_0(x;W^*) \bigr].\label{0517540}
\#
Our proof extends that of Theorem 2 in \cite{zou2019finite} for characterizing linear Q-learning. We additionally incorporate the error of local linearization and also handle soft Q-learning in the next section. The following lemma is analogous to Lemma \ref{unique}.
\begin{lemma} \label{qlsue}
Under Assumption \ref{cond1}, there exists an approximate stationary point $W^*$ that satisfies \eqref{4251216}. Also, $\hat{Q}_0(\cdot\,;W^*)$ is unique almost everywhere.
\end{lemma}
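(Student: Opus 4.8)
The plan is to mirror the proof of Lemma \ref{unique}, replacing the Bellman evaluation operator $\cT^\pi$ by the Bellman optimality operator $\cT$ and invoking Assumption \ref{cond1} to recover the contraction that the $\max$ operator would otherwise destroy. First I would argue, exactly as in the derivation of \eqref{56202}--\eqref{55547} for the policy-evaluation case, that the variational inequality \eqref{4251216} is equivalent to the fixed-point identity $\hat{Q}_0(\cdot\,;W^*)=\Pi_{\cF_{B,m}}\cT\hat{Q}_0(\cdot\,;W^*)$; the only change is that $\delta_0$ now carries the $\max$ over $a'$, so the projected target becomes $\Pi_{\cF_{B,m}}\cT$ rather than $\Pi_{\cF_{B,m}}\cT^\pi$. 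Existence and almost-everywhere uniqueness of $\hat{Q}_0(\cdot\,;W^*)$ then reduce to showing that $\Pi_{\cF_{B,m}}\cT$ is a contraction on the closed convex set $\cF_{B,m}$ and invoking the Banach fixed-point theorem, noting that $\cF_{B,m}$, being the image of the compact set $S_B$ under the linear map $W\mapsto\hat{Q}_0(\cdot\,;W)$, is complete in $\|\cdot\|_{\mue}$.

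The core step is the contraction estimate for $\cT$ restricted to $\cF_{B,m}$. For $f_i=\hat{Q}_0(\cdot\,;W_i)$ with $W_i\in S_B$, I would write $\cT f_1(x)-\cT f_2(x)=\gamma\,\EE[\hat{Q}_0^{\sharp}(s';W_1)-\hat{Q}_0^{\sharp}(s';W_2)\,|\,s'\sim\cP(\cdot\,|\,s,a)]$, square it, and apply Jensen's inequality to pull the conditional expectation outside the square. Taking expectation over $x\sim\mue$ and using that, under the stationary distribution $\mue$, the next state $s'$ shares the same marginal as the current state $s$, I obtain $\EE_{x\sim\mue}[(\cT f_1(x)-\cT f_2(x))^2]\le\gamma^2\,\EE_{s\sim\mue}[(\hat{Q}_0^{\sharp}(s;W_1)-\hat{Q}_0^{\sharp}(s;W_2))^2]$. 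Assumption \ref{cond1}, precisely \eqref{430312}, then converts the right-hand side, phrased in terms of the action-maximum $\hat{Q}_0^{\sharp}$, back into the full $\mue$-norm of $f_1-f_2$, yielding $\EE_{x\sim\mue}[(\cT f_1-\cT f_2)^2]\le\gamma^2(\gamma+\nu)^{-2}\,\EE_{x\sim\mue}[(f_1-f_2)^2]$. Since $\nu>0$ gives $\gamma/(\gamma+\nu)<1$, the operator $\cT$ is a genuine contraction on $\cF_{B,m}$, and composing with the nonexpansive projection $\Pi_{\cF_{B,m}}$ preserves this, completing the argument.

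The main obstacle is precisely the $\max$ operator, which is why Assumption \ref{cond1} is needed and why this lemma is not an immediate corollary of Lemma \ref{unique}: unlike $\cT^\pi$, the optimality operator $\cT$ is not a $\gamma$-contraction in $\|\cdot\|_{\mue}$ for a general exploration distribution, so the Banach argument fails without a structural condition linking $\EE_{\mue}[(\hat{Q}_0^{\sharp}(s;W_1)-\hat{Q}_0^{\sharp}(s;W_2))^2]$ to $\EE_{\mue}[(\hat{Q}_0(x;W_1)-\hat{Q}_0(x;W_2))^2]$. The remaining work is bookkeeping: confirming that the $\max$-induced target indeed lands in the regime covered by \eqref{430312}, and that the same-marginal property of $s'$ and $s$ holds for the exploration-induced stationary distribution $\mue$ rather than the on-policy $\mu$ of Section \ref{tr}.
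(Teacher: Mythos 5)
Your proposal is correct and follows essentially the same route as the paper's proof: reduce the variational inequality \eqref{4251216} to the projected fixed-point equation $Q=\Pi_{\cF_{B,m}}\cT Q$, show that $\cT$ restricted to $\cF_{B,m}$ contracts in $\|\cdot\|_{\mue}$ with factor $\gamma/(\gamma+\nu)$ via Jensen's inequality, the stationarity of $\mue$, and Assumption \ref{cond1}, then compose with the nonexpansive projection and invoke the Banach fixed-point theorem on the complete set $\cF_{B,m}$. If anything, your write-up is slightly more careful than the paper's, which records the Jensen/marginal-matching step as an equality rather than the inequality you correctly state.
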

\begin{proof}
We prove the lemma by showing that $\cT$ is a contraction in $\|\cdot\|_{\mue}$ for any $Q_1,Q_2\in\cF_{B,m}$. By the definition of the Bellman optimality operator $\cT$, we have
\$
\|\cT Q_1-\cT Q_2\|_2^2
&=\gamma^2\EE_{s\sim\mue}\bigl[ \bigl( \max_{a\in\cA}Q_1(s,a)-\max_{a\in\cA}Q_2(s,a) \bigr)^2 \bigr].
\$
Under Assumption \ref{cond1}, for any $Q_1,Q_2\in\cF_{B,m}$, we have
\$
\EE_{s\sim\mue}\bigl[\bigl( \max_{a\in\cA}Q_1(s,a)-\max_{a\in\cA}Q_2(s,a)\bigr)^2 \bigr]\le (\gamma+\nu)^{-2}\cdot \EE_{\mue}\bigl[\bigl(Q_1(x)-Q_2(x)\bigr)^2\bigr].
\$
Therefore, $\Pi_{\cF_{B,m}}\cT$ is a ${\gamma}/{(\gamma+\nu)}$-contraction in $\|\cdot\|_{\mue}$, since $\Pi_{\cF_{B,m}}$ is nonexpansive. Since the set $S_B$ of feasible $W$ is closed and bounded, $\cF_{B,m}$ is complete under $\|\cdot\|_{\mue}$. Thus, $\Pi_{\cF_{B,m}}\cT$ has a unique fixed point $\hat{Q}_0(\cdot\,;W^*)$ in $\cF_{B,m}$, which corresponds to $W^*$.
\end{proof}

The following lemma is analogous to Lemma \ref{gdiff} with a similar proof.
\begin{lemma}\label{zdiff}
For any $t \in [T]$, we have
\$
\EE_{\text{init}}\bigl[\|\overline{z}(t)-\overline{z}_0(t)\|^2_2\bigr]=O(B^3m^{-1/2}).
\$
\end{lemma}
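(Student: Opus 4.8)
The plan is to reproduce the proof of Lemma~\ref{gdiff} almost line by line, with the Bellman evaluation operator $\cT^\pi$ replaced by the Bellman optimality operator $\cT$ of \eqref{opbmeqn}; the only genuinely new ingredient is the $\max_{a'\in\cA}$ inside $\cT$. First I would put the two semigradients of \eqref{0517538}--\eqref{0517539} in residual form. Writing $\delta(x,r,s';W)=\hat{Q}(x;W)-r-\gamma\max_{a'\in\cA}\hat{Q}(s',a';W)$ and using the Q-learning residual $\delta_0(x,r,s';W)=\hat{Q}_0(x;W)-r-\gamma\max_{a'\in\cA}\hat{Q}_0(s',a';W)$ already introduced after \eqref{4251216}, and absorbing the conditional expectation over $(r,s')$ into $\cT$ as in \eqref{55311}, we have $\overline{z}(t)=\EE_{\mue}[\delta(x,r,s';W(t))\cdot\nabla_W\hat{Q}(x;W(t))]$ and $\overline{z}_0(t)=\EE_{\mue}[\delta_0(x,r,s';W(t))\cdot\nabla_W\hat{Q}_0(x;W(t))]$. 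Adding and subtracting $\delta_0\cdot\nabla_W\hat{Q}(x;W(t))$ and using $\|\nabla_W\hat{Q}(x;W(t))\|_2\le 1$, exactly as in \eqref{518425}, gives
\[
\|\overline{z}(t)-\overline{z}_0(t)\|_2\le\EE_{\mue}\bigl[|\delta-\delta_0|+|\delta_0|\cdot\|\nabla_W\hat{Q}_t-\nabla_W\hat{Q}_0\|_2\bigr],
\]
so that squaring and taking $\EE_{\text{init}}$ splits the bound into the same three pieces (i), (ii), (iii) as in \eqref{57200}.

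Pieces (ii) and (iii) go through verbatim: the gradient-difference factor is controlled by the sign-flip inequality \eqref{43142} together with the anti-concentration estimate, while $\EE_{\mue}[|\delta_0|^2]$ is bounded using $|\hat{Q}_0(x;W(t))-\hat{Q}_0(x)|\le B$ and $\EE_{\text{init},\mue}[\hat{Q}_0(x)^2]\le 1$, exactly as in \eqref{45931}. The work is in piece (i), where
\[
\delta-\delta_0=\bigl(\hat{Q}_t(x)-\hat{Q}_0(x;W(t))\bigr)-\gamma\Bigl(\max_{a'\in\cA}\hat{Q}_t(s',a')-\max_{a'\in\cA}\hat{Q}_0(s',a';W(t))\Bigr).
\]
The first summand is handled directly by Lemma~\ref{qdiff}. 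For the second, I would use the nonexpansiveness of the max,
\[
\Bigl|\max_{a'\in\cA}\hat{Q}_t(s',a')-\max_{a'\in\cA}\hat{Q}_0(s',a';W(t))\Bigr|\le\max_{a'\in\cA}\bigl|\hat{Q}_t(s',a')-\hat{Q}_0(s',a';W(t))\bigr|,
\]
and then, since $\cA$ is finite, bound $\max_{a'}|\cdot|^2\le\sum_{a'\in\cA}|\cdot|^2$, reducing the second summand to finitely many per-action linearization errors of the form already estimated in Lemma~\ref{qdiff}. Each such error is $O(B^3m^{-1/2})$ once the corresponding anti-concentration bound is available, and summing over the finitely many actions preserves the rate; combining the three pieces then yields $\EE_{\text{init}}[\|\overline{z}(t)-\overline{z}_0(t)\|_2^2]=O(B^3m^{-1/2})$.

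The main obstacle, and the reason Assumption~\ref{asmp2} is needed in place of Assumption~\ref{asmp1}, is precisely this max term. Because the greedy next action $\argmax_{a'\in\cA}\hat{Q}(s',a';W(t))$ depends on the current iterate, the sign-flip events that govern the per-action errors must be controlled simultaneously across the actions available at $s'$, rather than for a single fixed feature vector as in the TD analysis; this is exactly what the ``for all $a\in\cA$'' anti-concentration in Assumption~\ref{asmp2} supplies, through the appropriate counterpart of Lemma~\ref{mulm1}. Reconciling the max with this anti-concentration bound is the delicate step; once it is in place, the remainder is identical bookkeeping to Lemma~\ref{gdiff}. I would also expect the contraction modulus $\gamma/(\gamma+\nu)$ from Lemma~\ref{qlsue} to play no role here, since this lemma only compares the true and linearized semigradients at a fixed $W(t)$ and does not invoke the dynamics of the iterates.
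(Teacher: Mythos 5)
Your proposal matches the paper's proof essentially step for step: the same add-and-subtract decomposition into the three pieces of \eqref{517609}, Lemma \ref{qdiff} for the residual difference, the anti-concentration machinery for the gradient-difference and product terms, and nonexpansiveness of the max plus finite-action control under Assumption \ref{asmp2} (the paper packages this as Lemma \ref{maxqdiff}; your sum-over-actions bound is the same argument up to an $|\cA|$ factor that does not affect the rate). The only slight imprecision is that the $|\delta_0|^2$ piece does not go through quite ``verbatim'' from Lemma \ref{gdiff}: $\delta_0$ here contains $\gamma\max_{a'\in\cA}\hat{Q}_0(s',a';W(t))$, so bounding $\EE_{\mue}[|\delta_0|^2]$ also requires the finite-action trick (the paper's Lemma \ref{mulm3}) -- but that is exactly the bookkeeping you already describe for piece (i).
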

\begin{proof}
By the definitions of $\overline{z}(t)$ and $\overline{z}_0(t)$ in \eqref{0517538} and \eqref{0517539}, respectively, we have
\#\label{518830}
&\|\overline{z}(t)-\overline{z}_0(t)\|_2\notag\\
&\quad=\Bigl\|\EE_{\mue}\Bigl[
\bigl(\hat{Q}_t(x)-\cT\hat{Q}_t(x)\bigr)\cdot \nabla_W\hat{Q}_t(x)-\Bigl(\hat{Q}_0\bigl(x;W(t)\bigr)-\cT\hat{Q}_0\bigl(x;W(t)\bigr)\Bigr)\cdot \nabla_W\hat{Q}_0\bigl(x;W(t)\bigr)
\Bigr]\Bigr\|_2\notag\\
&\quad= \Bigl\|
\EE_{\mue}\Bigl[
\Bigl(\hat{Q}_t(x)-\cT\hat{Q}_t(x)-\hat{Q}_0\bigl(x;W(t)\bigr)+\cT\hat{Q}_0\bigl(x;W(t)\bigr) \Bigr)\cdot \nabla_W\hat{Q}_t(x)\\
&\quad\qquad+
\Bigl(\hat{Q}_0\bigl(x;W(t)\bigr)-\cT\hat{Q}_0\bigl(x;W(t)\bigr)\Bigr)\cdot \Bigl(\nabla_W\hat{Q}_t(x)-\nabla_W\hat{Q}_0\bigl(x;W(t)\bigr)\Bigr)
\Bigr]
\Bigr\|_2.\notag
\#
For notational simplicity, we define $\hat{Q}^{\sharp}_t(s)=\max_{a\in\cA}\hat{Q}_t(s,a)$. Recall that $\hat{Q}^{\sharp}_0(s;W)$ is similarly defined in Assumption \ref{cond1}. Then on the right-hand side of \eqref{518830}, we have 
\$
&\EE_{\mue}\Bigl[
\Bigl(\hat{Q}_t(x)-\cT\hat{Q}_t(x)-\hat{Q}_0\bigl(x;W(t)\bigr)+\cT\hat{Q}_0\bigl(x;W(t)\bigr) \Bigr) \cdot \nabla_W\hat{Q}_t(x)\Big]\\
&\quad=\EE_{\mue}\Bigl[
\Bigl(\hat{Q}_t(x)-\gamma\hat{Q}_t^{\sharp}(s')-\hat{Q}_0\bigl(x;W(t)\bigr)+\gamma\hat{Q}^{\sharp}_0\bigl(s';W(t)\bigr) \Bigr) \cdot \nabla_W\hat{Q}_t(x)\Big].
\$
Thus, from \eqref{518830} we obtain
\#\label{518832}
\|\overline{z}(t)-\overline{z}_0(t)\|^2_2& \le 2\EE_{\mue}\Bigl[
\bigl|\hat{Q}_t(x)-\gamma\hat{Q}_t^{\sharp}(s')-\hat{Q}_0\bigl(x;W(t)\bigr)+\gamma\hat{Q}^{\sharp}_0\bigl(s';W(t)\bigr)\bigr|^2\Bigr]\\
&\qquad
+2\EE_{\mue}\Bigl[\bigl|\hat{Q}_0\bigl(x;W(t)\bigr)-r(x)-\gamma\hat{Q}^{\sharp}_0\bigl(s';W(t)\bigr)\bigr|^2\Bigr]\notag\\
&\qquad\qquad\cdot\EE_{\mue}\Bigl[\bigl\|\nabla_W\hat{Q}_t(x)-\nabla_W\hat{Q}_0\bigl(x;W(t)\bigr)\bigr\|^2_2
\Bigr].\notag
\#
Here we use the fact that, for any $t\in [T]$,
\$
\|\nabla_W\hat{Q}_t(x)\|_2 \leq \| x \|_2 = 1.
\$ 
Taking expectation on the both sides of \eqref{518832} with respect to the random initialization, we obtain
\#\label{517609}
&\EE_{\text{init}}\bigl[ \|\overline{z}(t)-\overline{z}_0(t)\|_2^2\bigr] \notag\\
& \le \underbrace{4\EE_{\text{init},\mue} \Bigl[ \bigl|\hat{Q}_t(x)-\hat{Q}_0\bigl(x;W(t)\bigr)\bigr|^2 \Bigr]}_{\displaystyle \text{(i)}}+\underbrace{4\gamma^2\EE_{\text{init},\mue} \Big[\bigl|\hat{Q}_t^{\sharp}(s)-\hat{Q}_0^{\sharp}\bigl(s;W(t)\bigr)\bigr|^2 \Bigr]}_{\displaystyle \text{(ii)}}\\
&\quad
+\underbrace{2\EE_{\text{init}}\biggl[\EE_{\mue} \Bigl[
\bigl|\hat{Q}_0\bigl(x;W(t)\bigr)-r(x)-\gamma\hat{Q}^{\sharp}_0\bigl(s';W(t)\bigr)\bigr|^2\Bigr]\cdot
\EE_{\mue} \Bigl[
\bigl\|\nabla_W\hat{Q}_t(x)-\nabla_W\hat{Q}_0\bigl(x;W(t)\bigr)\bigr\|_2^2
 \Bigr]\biggr]}_{\displaystyle \text{(iii)}}.\notag
\#
Similar to the proof of Lemma \ref{gdiff}, we characterize the three terms on the right-hand side of \eqref{517609}. For (i) in \eqref{517609}, recall that Lemma \ref{qdiff} gives 
\$
\EE_{\text{init},\mue}\Bigl[ \bigl|\hat{Q}_t(x)-\hat{Q}_0\bigl(x;W(t)\bigr)\bigr|^2 \Bigr]\le 4c_1B^3\cdot m^{-1/2}.
\$
We establish a similar upper bound of (ii) in \eqref{517609}. Note that
\#\label{517320}
\bigl| \hat{Q}_t^{\sharp}(s)-\hat{Q}_0^{\sharp}\bigl(s;W(t)\bigr)\bigr|\le
\max_{a\in\cA}
\bigl| \hat{Q}_t(s,a)-\hat{Q}_0\bigl(s,a;W(t)\bigr)\bigr|.
\#
Similar to Lemma \ref{qdiff}, we have the following lemma for characterizing the right-hand side of \eqref{517320}.
\begin{lemma}\label{maxqdiff}
Under Assumption \ref{asmp2}, there exists a constant $c_4>0$ such that for any $t \in [T]$, it holds that
\$
\EE_{\text{init},s\sim\mue}\Bigl[ \max_{a\in\cA} \bigl| \hat{Q}_t(s,a)-\hat{Q}_0\bigl(s,a;W(t)\bigr)\bigr|^2 \Bigr]\le c_4B^3\cdot m^{-1/2}.
\$
\end{lemma}
\begin{proof}
See Appendix \ref{proof:aux} for a detailed proof.
\end{proof}
By Lemma \ref{maxqdiff}, (ii) in \eqref{517609} satisfies
\$
\EE_{\text{init},s\sim\mue} \Bigl[\bigl|\hat{Q}_t^{\sharp}(s)-\hat{Q}_0^{\sharp}\bigl(s;W(t)\bigr)\bigr|^2 \Bigr]\le \EE_{\text{init},s\sim\mue}\Bigl[ \max_{a\in\cA} \bigl| \hat{Q}_t(s,a)-\hat{Q}_0\bigl(s,a;W(t)\bigr)\bigr|^2 \Bigr] \leq c_4B^3\cdot m^{-1/2}.
\$
For (iii) in \eqref{517609}, in the proof of Lemma \ref{gdiff}, we show that
\$
\EE_{\mue} \Bigl[
\bigl\|\nabla_W\hat{Q}_t(x)-\nabla_W\hat{Q}_0\bigl(x;W(t)\bigr)\bigr\|_2^2
 \Bigr]\le  \EE_{\mue}\Bigl[\frac{1}{m}\sum_{r=1}^m \ind\{|W_r(0)^\top x|\le  \|W_r(t)-W_r(0)\|_2 \} \Bigr].
\$
Meanwhile, by the definition that $\hat{Q}_{0}^{\sharp}(s;W)=\max_{a\in\cA}\hat{Q}_0(s,a;W)$ and \eqref{q0}, we have
\$
&\bigl|\hat{Q}_0\bigl(x;W(t)\bigr)-r(x)-\gamma\hat{Q}^{\sharp}_0\bigl(s';W(t)\bigr)\bigr|^2 \\
&\quad \le 3\bigl|\hat{Q}_0\bigl(x;W(t)\bigr)\bigr|^2+3\overline{r}^2+3\bigl|\hat{Q}_0\bigl(s',a_{\text{max}}';W(t)\bigr)\bigr|^2\\
&\quad \le 6\bigl|\hat{Q}_0\bigl(x;W(0)\bigr)\bigr|^2+3\overline{r}^2+6\bigl|\hat{Q}_0\bigl(s',a_{\text{max}}';W(0)\bigr)\bigr|^2+12B^2,
\$
where $a_{\text{max}}' = \argmax_{a'\in\cA}\hat{Q}_0(s',a';W(t))$. Then applying Lemmas \ref{mulm1}, \ref{mulm2}, and \ref{mulm3}, we obtain that (iii) in \eqref{517609} satisfies
\#\label{518125}
&2\EE_{\text{init}}\biggl[\EE_{\mue} \Bigl[
\bigl|\hat{Q}_0\bigl(x;W(t)\bigr)-r(x)-\gamma\hat{Q}^{\sharp}_0\bigl(s';W(t)\bigr)\bigr|^2\Bigr]\cdot
\EE_{\mue} \Bigl[
\bigl\|\nabla_W\hat{Q}_t(x)-\nabla_W\hat{Q}_0\bigl(x;W(t)\bigr)\bigr\|_2^2
 \Bigr]\biggr]\notag\\
 &\quad =O(B^3m^{-1/2}).
\#
Combining the upper bounds of (i)-(iii) in \eqref{517609}, we complete the proof of Lemma \ref{zdiff}.
\end{proof}

\subsection{Proof of Theorem \ref{thm3}}\label{proof:thm3}
\begin{proof}
Recall that $z(t)$, $\overline{z}(t)$, $\overline{z}_0(t)$, and $\overline{z}^*_0$ are defined in \eqref{517710}-\eqref{0517540}, respectively. Similar to the proof of Theorem \ref{mainthm1}, we have
\# \label{429301}
\|W(t+1)-W^*\|_2^2&=\bigl\|\Pi_{ S_B}\bigl(W(t)-\eta\cdot z(t)\bigr)- \Pi_{ S_B}(W^*-\eta\cdot\overline{z}^*_0)\bigr\|^2_2\\
&\le \|W(t)-W^*\|_2^2-2\eta\cdot \bigl(z(t)-\overline{z}^*_0\bigr)^\top \bigl(W(t)-W^*\bigr)+\eta^2\cdot \|z(t)-\overline{z}^*_0 \|_2^2.\notag
\#

To characterize the inner product on the right-hand side of \eqref{429301}, we take conditional expectation and obtain
\#\label{518905}
&\EE_{\mue}\bigl[\bigl(z(t)-\overline{z}_0^*\bigr)^\top \bigl(W(t)-W^*\bigr)\,\big|\,W(t)\bigr]\notag\\
&\quad= \bigl(\overline{z}_0(t)-\overline{z}_0^*\bigr)^\top \bigl(W(t)-W^*\bigr)+\bigl(\overline{z}(t)-\overline{z}_0(t)\bigr)^\top \bigl(W(t)-W^*\bigr)\notag\\
&\quad\geq \bigl(\overline{z}_0(t)-\overline{z}_0^*\bigr)^\top \bigl(W(t)-W^*\bigr)-B \cdot \|\overline{z}(t)-\overline{z}_0(t)\|_2.
\#
 We establish a lower bound of $(\overline{z}_0(t)-\overline{z}^*_0)^\top (W(t)-W^*)$ as follows. By \eqref{0517539} and \eqref{0517540}, we have 
\#\label{517325}
&\bigl(\overline{z}_0(t)-\overline{z}^*_0\bigr)^\top \bigl(W(t)-W^*\bigr)\notag\\
&= \EE_{\mue}\Bigl[ \Bigl(\hat{Q}_0\bigl(x;W(t)\bigr)-\gamma\hat{Q}_0^{\sharp}\bigl(s';W(t)\bigr)-\hat{Q}_0(x;W^*)+\gamma\hat{Q}^{\sharp}_0(s';W^*)\Bigr)\cdot \nabla_W \hat{Q}_0\bigl(x;W(t)\bigr)  \Bigr]^\top \bigl(W(t)-W^*\bigr) \notag\\
&=\EE_{\mue} \Bigl[ \Bigl(\hat{Q}_0\bigl(x;W(t)\bigr)-\hat{Q}_0(x;W^*)\Bigr)^2 \Bigr]  \\
&\qquad -\gamma\EE_{\mue}\Bigl[ \Bigl(\hat{Q}_0^{\sharp}\bigl(s';W(t)\bigr)-\hat{Q}^{\sharp}_0(s';W^*)\Bigr) \cdot \Bigl( \hat{Q}_0\bigl(x;W(t)\bigr)-\hat{Q}_0(x;W^*) \Bigr) \Bigr].\notag
\#
Applying H\"{o}lder's inequality to the second term on the right-hand side of \eqref{517325}, we obtain
\$
&\EE_{\mue}\Bigl[ \Bigl(\hat{Q}_0^{\sharp}\bigl(s';W(t)\bigr)-\hat{Q}^{\sharp}_0(s';W^*)\Bigr) \cdot \Bigl( \hat{Q}_0\bigl(x;W(t)\bigr)-\hat{Q}_0(x;W^*) \Bigr) \Bigr]\\
&\quad\le \EE_{\mue} \Bigl[ \Bigl(\hat{Q}_0^{\sharp}\bigl(s';W(t)\bigr)-\hat{Q}^{\sharp}_0(s';W^*)\Bigr)^2 \Bigr]^{1/2} \cdot \EE_{\mue} \Bigl[ \Bigl( \hat{Q}_0\bigl(x;W(t)\bigr)-\hat{Q}_0(x;W^*) \Bigr)^2\Bigr]^{1/2}.
\$
By Assumption \ref{cond1}, we have
\$
\EE_{\mue} \Bigl[ \Bigl(\hat{Q}^{\sharp}_0\bigl(s';W(t)\bigr)-\hat{Q}^{\sharp}_0(s';W^*)\Bigr)^2 \Bigr]^{1/2}\le 1/(\gamma+\nu)\cdot
\EE_{\mue} \Bigl[ \Bigl(\hat{Q}_0\bigl(x;W(t)\bigr)-\hat{Q}_0(x;W^*)\Bigr)^2 \Bigr]^{1/2},
\$
which implies that, on the right-hand side of \eqref{517325}, 
\$
&\gamma\EE_{\mue}\Bigl[ \Bigl(\hat{Q}_0^{\sharp}\bigl(s';W(t)\bigr)-\hat{Q}^{\sharp}_0(s';W^*)\Bigr) \cdot \Bigl( \hat{Q}_0\bigl(x;W(t)\bigr)-\hat{Q}_0(x;W^*) \Bigr) \Bigr]\\
&\quad\le \gamma/(\gamma+\nu)\cdot\EE_{\mue} \Bigl[ \Bigl(\hat{Q}_0\bigl(x;W(t)\bigr)-\hat{Q}_0(x;W^*)\Bigr)^2 \Bigr].
\$
Therefore, from \eqref{517325} we obtain
\#\label{518910}
\bigl(\overline{z}_0(t)-\overline{z}^*_0\bigr)^\top \bigl(W(t)-W^*\bigr) \ge \nu/(\gamma+\nu)\cdot\EE_{\mue} \Bigl[ \Bigl(\hat{Q}_0\bigl(x;W(t)\bigr)-\hat{Q}_0(x;W^*)\Bigr)^2 \Bigr].
\#

Similar to the proof of Theorem \ref{mainthm1}, for $\|z(t)-\overline{z}^*_0 \|_2^2$ on the right-hand side of \eqref{429301}, we have
\#\label{518915}
&\EE_{W, \mue}\bigl[\|z(t)-\overline{z}_0^*\|_2^2\bigr] \notag\\
&\quad \le \EE_{W, \mue}\bigl[\|z(t)-\overline{z}(t)\|_2^2\bigr]+ \EE_W\bigl[ 2\|\overline{z}(t)-\overline{z}_0(t)\|_2^2+2\|\overline{z}_0(t)-\overline{z}_0^*\|_2^2\bigr],
\#
where the expectation of $\|\overline{z}(t)-\overline{z}_0(t)\|_2^2$ on the right-hand side is characterized by Lemma \ref{zdiff}, while the expectation of $\|\overline{z}_0(t)-\overline{z}_0^*\|_2^2$ has the following upper bound,
\#\label{518920}
&\EE_{W}\bigl[\|\overline{z}_0(t)-\overline{z}_0^*\|_2^2\bigr]\notag\\
&\quad=\EE_{W, \mue}\Bigl[ \Bigl(\hat{Q}_0\bigl(x;W(t)\big)-\gamma\hat{Q}_0^{\sharp}\bigl(s';W(t)\bigr)-\hat{Q}_0(x;W^*)+\gamma\hat{Q}^{\sharp}_0(s';W^*)\Bigr)\cdot \nabla_W \hat{Q}_0\bigl(x;W(t)\bigr)  \Bigr] ^2\notag\\
&\quad\le \EE_{W, \mue}\Bigl[ \Bigl(\hat{Q}_0\bigl(x;W(t)\big)-\gamma\hat{Q}_0^{\sharp}\bigl(s';W(t)\bigr)-\hat{Q}_0(x;W^*)+\gamma\hat{Q}^{\sharp}_0(s';W^*)\Bigr)^2 \Bigr]\notag\\
&\quad\le 4\EE_{W, \mue}\Bigl[ \Bigl(\hat{Q}_0\bigl(x;W(t)\big)-\hat{Q}_0(x;W^*)\Bigr)^2 \Bigr].
\#
Here the last inequality follows from Assumption \ref{cond1} as $\gamma/(\gamma+\nu) < 1$. 

Plugging \eqref{518905}, \eqref{518910}, \eqref{518915}, and \eqref{518920} into \eqref{429301} yields the following inequality, which parallels Lemma \ref{sdl},
\#\label{5176461}
&\EE_{W, \mue}\bigl[\|W(t+1)-W^*\|_2^2\bigr]\\
&\quad\le \EE_W\bigl[ \|W(t)-W^*\|_2^2\bigr]-\bigl({2\eta\nu}/(\gamma+\nu)-8\eta^2\bigr) \cdot \EE_{W, \mue}\Bigl[ \Bigl(\hat{Q}_0\bigl(x;W(t)\big)-\hat{Q}_0(x;W^*)\Bigr)^2 \Bigr]\notag\\
&\quad\qquad +\EE_{W}\bigl[2\eta^2\cdot\|\overline{z}(t)-\overline{z}_0(t)\|^2_2+2\eta B\cdot\|\overline{z}(t)-\overline{z}_0(t)\|_2 \bigr]+\EE_{W, \mue}\bigl[\eta^2\cdot\|z(t)-\overline{z}(t)\|^2_2\bigr].\notag
\#
Rearranging terms in \eqref{5176461}, we obtain
\#\label{517646}
&\EE_{W, \mue}\Bigl[ \Bigl(\hat{Q}_0\bigl(x;W(t)\big)-\hat{Q}_0(x;W^*)\Bigr)^2 \Bigr]\notag\\
&\quad\le
\bigl({2\eta\nu}/(\gamma+\nu)-8\eta^2\bigr)^{-1}\cdot \Bigl(\EE_W\bigl[ \|W(t)-W^*\|_2^2\bigr]-\EE_{W, \mue}\bigl[\|W(t+1)-W^*\|_2^2\bigr] \Bigr)\\
&\quad\qquad +\EE_{W}\bigl[2\eta^2\cdot\|\overline{z}(t)-\overline{z}_0(t)\|^2_2+2\eta B\cdot\|\overline{z}(t)-\overline{z}_0(t)\|_2 \bigr]+\EE_{W, \mue}\bigl[\eta^2\cdot\|z(t)-\overline{z}(t)\|^2_2\bigr].\notag
\#

In parallel with Lemma \ref{bdvar}, we establish an upper bound of the variance $\EE_{W, \mue}[\|z(t)-\overline{z}(t)\|^2_2]$ in the right-hand side of \eqref{517646}, which is independent of $t$ and $m$. Note that by $\|\nabla_W \hat{Q}_t(s,a)\|_2\le1$, we have
\#\label{5181008}
\|z(t)\|^2_2&\le \bigl|\hat{Q}_t(s,a)-r(s,a)-\gamma\max_{a'\in\cA}\hat{Q}_t(s',a')\bigr|^2\cdot \|\nabla_W \hat{Q}_t(s,a)\|_2^2\notag\\
&\le 3\hat{Q}_t(s,a)^2+3\overline{r}^2+3\max_{a'\in\cA} \hat{Q}_t(s',a')^2.
\#
To characterize $\hat{Q}_t(s,a)^2$, we have
\#\label{5181010}
\hat{Q}_t(s,a)^2& = \bigl(\hat{Q}_t(s,a) -\hat{Q}_0(s,a) + \hat{Q}_0(s,a) \bigr)^2\notag\\
&\le 2\bigl|\hat{Q}_t(s,a)-\hat{Q}_0(s,a)\bigr)\bigr|^2+2\hat{Q}_0(s,a)^2\notag \\
&\le 2B^2+2\hat{Q}_0(s,a)^2,
\#
where the second inequality comes from \eqref{520946}. Similarly, to characterize $\max_{a'\in\cA} \hat{Q}_t(s',a')$ in \eqref{5181008}, we take maximum on the both side of \eqref{5181010} and obtain
\#\label{5181011}
\max_{a'\in\cA}\hat{Q}_t(s',a')^2\le 2B^2+2\max_{a'\in\cA}\hat{Q}_0(s',a')^2.
\#
Plugging \eqref{5181010} and \eqref{5181011} into \eqref{5181008}, we obtain
\#\label{517909}
\|z(t)\|^2_2&\le 6\hat{Q}_0(s,a)^2+3\overline{r}^2+6\max_{a'\in\cA} \hat{Q}_0(s',a')^2+12B^2.
\#
To upper bound the expectation of \eqref{517909}, it remains to characterize $\EE_{\text{init},\mue}[\hat{Q}_0(s,a)^2]$ and 
\$\EE_{\text{init},s'\sim\mue}\bigl[\max_{a'\in\cA}\hat{Q}_0(s',a')^2\bigr].\$ 
In fact, for any $(s,a)$, $\EE_{\text{init}}[\hat{Q}_0(s,a)^2]$ has the following uniform upper bound,
\#\label{5181025}
\EE_{\text{init}}[\hat{Q}_0(s,a)^2] = \EE_{w\sim N(0,I_d/d)}[\sigma(w^\top x)^2]\le \EE_{w\sim N(0,I_d/d)} \bigl[\|w\|_2^2\bigr]=1.
\#
For $\EE_{\text{init},s'\sim\mue}[\max_{a'\in\cA}\hat{Q}_0(s',a')^2]$, we use the inequality
\$
&\EE_{\text{init},s'\sim\mue}\bigl[\max_{a'\in\cA}\hat{Q}_0(s',a')^2\bigr]\\
&\quad\le\EE_{\text{init},s'\sim\mue}\Bigl[\sum_{a'\in\cA}\hat{Q}_0(s',a')^2\Bigr]\\
&\quad= \sum_{a'\in\cA}\EE_{s'\sim\mue}\bigl[ \EE_{\text{init}}[\hat{Q}_0(s',a')^2\,|\,(s',a')]  \bigr] \le |\cA|\cdot \EE_{w\sim N(0,I_d/d)} \bigl[\|w\|_2^2\bigr],
\$
which gives us the same upper bound as in \eqref{5181025} but with an additional factor $|\cA|$. Therefore, we know that the variance $\EE_{\text{init},\mue}[\|z(t)-\overline{z}(t)\|^2_2]$ in the expectation of \eqref{517646} has an upper bound $\sigma_z^2=O(B^2)$, according to the fact that $\EE_{\text{init},\mue}[\|z(t)-\overline{z}(t)\|^2_2]\le\EE_{\text{init},\mue}[\|z(t)\|^2_2]$. 

With the variance term upper bounded, we take expectation on \eqref{517646} with respect to the random initialization and obtain
\#\label{524116}
&\EE_{\text{init}, \mue}\Bigl[ \Bigl(\hat{Q}_0\bigl(x;W(t)\big)-\hat{Q}_0(x;W^*)\Bigr)^2 \Bigr]\\
&\quad\le
\bigl({2\eta\nu}/(\gamma+\nu)-8\eta^2\bigr)^{-1}\cdot \Bigl(\EE_{\text{init}}\bigl[ \|W(t)-W^*\|_2^2\bigr]-\EE_{\text{init}}\bigl[\|W(t+1)-W^*\|_2^2\bigr]+\eta^2\sigma_z^2 \Bigr)\notag\\
&\quad\qquad +O(B^3m^{-1/2}+B^{5/2}m^{-1/4}).\notag
\#
We set $\eta=\min\{1/\sqrt{T}, \nu/8(\gamma+\nu) \}$ and telescope \eqref{524116} for $t=0,\ldots,T-1$. Then the rest proof mirrors that of Theorem \ref{mainthm2}. Thus, we complete the proof of Theorem \ref{thm3}.
\end{proof}

%%%%%%%%%%%%%%%%%%%%%%%%%%%%%%%%%%%%%%%%%%%%%%%%%%

\section{From Neural Soft Q-Learning to Policy Gradient}\label{secfqs}
\subsection{Global Convergence of Neural Soft Q-Learning}
We extend the global convergence of neural Q-learning in Section \ref{qlearning} to neural soft Q-learning, where the max operator is replaced by a more general softmax operator. More specifically, we consider the soft Bellman optimality operator
\#\label{5181216}
&\cT_{\beta}(Q)(s,a)=\EE\bigl[r(s,a)+\gamma\cdot\softmax_{a'\in\cA} Q(s',a')\,\big|\, s'\sim\cP(\cdot\,|\,s,a)\bigr],\notag\\
&\text{where}~ \softmax_{a'\in\cA} Q(s',a') = \beta^{-1}\cdot \log\sum_{a'\in\cA}\exp{\bigl(\beta\cdot Q(s',a')}\bigr),
\#
which in parallel with \eqref{430421} corresponds to the update
\$
\theta'\leftarrow\theta-\eta \cdot \big( \hat{Q}_{\theta}(s,a)-r(s,a)-\gamma\cdot\softmax_{a'\in\cA}\hat{Q}_{\theta}(s',a') \bigr)\cdot \nabla_{\theta}\hat{Q}_{\theta}(s,a).
\$
See Algorithm \ref{softq} for a detailed description of such neural soft Q-learning algorithm.

In parallel with Assumption \ref{cond1}, we require the following regularity condition on the exploration policy $\pie$.
\begin{assumption}[Regularity of Exploration Policy $\pie$]\label{cond2} There exists a constant $\nu'>0$ such that for any $W_1,W_2\in S_B$, it holds that
\$
&\EE_{x\sim\mue}\bigl[\bigl( \hat{Q}_0(x;W_1)-\hat{Q}_0(x;W_2) \bigr)^2 \bigr]\notag\\
&\quad\ge (\gamma+\nu')^{2}\cdot\EE_{s\sim\mue}\bigl[\bigl(\softmax_{a\in\cA} \hat{Q}_0(s,a;W_1)-\softmax_{a\in\cA} \hat{Q}_0(s,a;W_2)\bigr)^2\bigr].
\$
\end{assumption}
We remark that, when $\beta\rightarrow\infty$, the softmax operator converges to the max operator, which implies that Assumptions \ref{cond2} and \ref{cond1} are equivalent.

%In parallel with Assumption \ref{asmp2}, we require another regularity condition on $\mue$.
%\begin{assumption}\label{asmp3}
%There exists a constant $c_4>0$ such that for any vector $w\in\real^d$ and $\tau\ge0$, it holds that
%\#\label{44145}
%\EE_{s\sim\mue,a\in\cB_{s}}\bigl[\softmax_{a\in\cA}\ind\{|w^\top (s,a)|\le  b\} \bigr]\le  c_3\cdot b/\|w\|_2.
%\#
%\end{assumption}

The approximate stationary point $W^*$ of the projected soft Q-learning satisfies 
\$
\hat{Q}_0(\cdot\,;W^*)=\Pi_{\cF_{B,m}}\cT_{\beta}\hat{Q}_0(\cdot\,;W^*),
\$
where $\hat{Q}_0(\cdot\,;W^*)$ uniquely exists by the same proof of Lemma \ref{qlsue}. Under the above regularity condition, we can extend Theorem \ref{thm3} to cover neural soft Q-learning. 

\begin{theorem}[Convergence of Stochastic Update]\label{thm4}
We set $\eta$ to be of order $T^{-1/2}$ in Algorithm \ref{softq}. Under Assumptions \ref{cond2} and \ref{asmp2}, the output $\hat{Q}_{\text{out}}$ of Algorithm \ref{softq} satisfies
\$
\EE_{\text{init},\mue}\bigl[ \bigl( \hat{Q}_{\text{out}}(x)-\hat{Q}_0(x;W^*)\bigr)^2 \bigr]= O(B^2T^{-1/2}+B^3m^{-1/2}+B^{5/2}m^{-1/4}).
\$
\end{theorem}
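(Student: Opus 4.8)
The plan is to repeat the argument for Theorem \ref{thm3} essentially verbatim, replacing the max operator throughout by the softmax operator $\softmax_{a\in\cA}$ and replacing Assumption \ref{cond1} by Assumption \ref{cond2}. The single structural property that makes this substitution transparent is that, like the max, the softmax is nonexpansive in the $\ell_\infty$ norm: since the gradient of $\softmax$ lies in the probability simplex, for any functions $Q_1,Q_2$ on $\cA$ we have $|\softmax_{a\in\cA}Q_1(s,a)-\softmax_{a\in\cA}Q_2(s,a)|\le\max_{a\in\cA}|Q_1(s,a)-Q_2(s,a)|$. I would first record this inequality as the workhorse lemma, since it lets every place where the proof of Theorem \ref{thm3} exploited the nonexpansiveness of the max go through unchanged.

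With this in hand, existence and uniqueness of $W^*$ follow exactly as in Lemma \ref{qlsue}: nonexpansiveness gives $\|\cT_{\beta} Q_1-\cT_{\beta} Q_2\|_{\mue}^2=\gamma^2\EE_{s\sim\mue}[(\softmax_{a}Q_1(s,a)-\softmax_{a}Q_2(s,a))^2]$, and Assumption \ref{cond2} turns the right-hand side into a $\gamma/(\gamma+\nu')$-contraction, so $\Pi_{\cF_{B,m}}\cT_{\beta}$ has a unique fixed point $\hat{Q}_0(\cdot\,;W^*)$ in the complete set $\cF_{B,m}$. Next I would establish the soft analog of Lemma \ref{zdiff}, bounding $\EE_{\text{init}}[\|\overline{z}(t)-\overline{z}_0(t)\|_2^2]=O(B^3m^{-1/2})$ for the softmax semigradients. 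The decomposition in \eqref{518830}--\eqref{517609} is unchanged, and the only new term, $\EE[|\softmax_a\hat{Q}_t(s,a)-\softmax_a\hat{Q}_0(s,a;W(t))|^2]$, is dominated via the workhorse inequality by $\EE[\max_a|\hat{Q}_t(s,a)-\hat{Q}_0(s,a;W(t))|^2]$, which is already controlled by Lemma \ref{maxqdiff}; thus Lemma \ref{maxqdiff} is reused without modification.

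The heart of the argument is the one-point monotonicity of the locally linearized soft semigradient. Defining $\overline{z}_0(t)$ and $\overline{z}_0^*$ by replacing $\cT$ with $\cT_{\beta}$ in \eqref{0517539}--\eqref{0517540}, the computation in \eqref{517325} gives $(\overline{z}_0(t)-\overline{z}_0^*)^\top(W(t)-W^*)=\EE_{\mue}[(\hat{Q}_0(x;W(t))-\hat{Q}_0(x;W^*))^2]-\gamma\EE_{\mue}[(\softmax_{a'}\hat{Q}_0(s',a';W(t))-\softmax_{a'}\hat{Q}_0(s',a';W^*))(\hat{Q}_0(x;W(t))-\hat{Q}_0(x;W^*))]$. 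Applying H\"{o}lder's inequality and then Assumption \ref{cond2} (using that $s'$ shares the marginal $\mue$ with $s$) yields $(\overline{z}_0(t)-\overline{z}_0^*)^\top(W(t)-W^*)\ge\frac{\nu'}{\gamma+\nu'}\EE_{\mue}[(\hat{Q}_0(x;W(t))-\hat{Q}_0(x;W^*))^2]$, exactly mirroring \eqref{518910}. The remaining pieces—the estimate $\|\overline{z}_0(t)-\overline{z}_0^*\|_2^2\le4\EE_{\mue}[(\hat{Q}_0(x;W(t))-\hat{Q}_0(x;W^*))^2]$ of \eqref{518915} and the variance bound $\EE_{\text{init},\mue}[\|z(t)-\overline{z}(t)\|_2^2]\le\sigma_z^2=O(B^2)$—go through as in \eqref{517909}, because $\softmax_{a'}\hat{Q}_t(s',a')$ is sandwiched between $\max_{a'}\hat{Q}_t(s',a')$ and $\max_{a'}\hat{Q}_t(s',a')+\beta^{-1}\log|\cA|$, so its square is at most $2\max_{a'}\hat{Q}_t(s',a')^2+2\beta^{-2}\log^2|\cA|$ and the extra additive constant is absorbed into $\sigma_z^2=O(B^2)$.

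Assembling these into the stochastic descent inequality \eqref{5176461}, choosing $\eta=\min\{1/\sqrt{T},\nu'/(8(\gamma+\nu'))\}$, telescoping over $t=0,\dots,T-1$, and applying Jensen's inequality together with Lemma \ref{qdiff} to pass from $\hat{Q}_0(\cdot\,;\overline{W})$ to the output $\hat{Q}_{\text{out}}$ gives the claimed $O(B^2T^{-1/2}+B^3m^{-1/2}+B^{5/2}m^{-1/4})$ rate, precisely as in the final steps of Theorem \ref{thm3}. I expect the only point requiring genuine care to be the verification that softmax nonexpansiveness suffices to recycle Lemma \ref{maxqdiff} and to reproduce every estimate where the proof of Theorem \ref{thm3} invoked the nonexpansiveness of $\max$; once that reduction is in place, the monotonicity and telescoping steps are routine.
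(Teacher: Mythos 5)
Your proposal is correct and follows essentially the same route as the paper: the paper's proof likewise reduces everything to Theorem \ref{thm3} by establishing softmax nonexpansiveness (its Lemma \ref{softqdiff}, proved via the variational duality $\softmax_{a}Q = \max_{\pi\in\Delta}\EE_{a\sim\pi}[Q]+\beta^{-1}H(\pi)$ rather than your gradient-in-the-simplex argument, but to the same effect), recycling Lemma \ref{maxqdiff}, and absorbing the additive $\beta^{-1}\log|\cA|$ term into the variance bound before telescoping under Assumption \ref{cond2}. The only differences are cosmetic choices of justification, not of structure.
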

\begin{proof}
The proof of Theorem \ref{thm4} mirrors that of Theorem \ref{thm3} with the max operator replaced by the softmax operator in \eqref{5181216}. We prove that the same claim of Lemma \ref{zdiff} holds under Assumption \ref{asmp2}, for which it suffices to upper bound (i), (ii), and (iii) in \eqref{517609}. Note that (i) does not involve the max operator. For (ii), we lay out the following lemma.
\begin{lemma}\label{softqdiff}
For any $W\in S_B$ and the constant $c_4$ in Lemma \ref{maxqdiff}, we have
\$
\EE_{\text{init},s\sim\mue}\Bigl[ \Bigl( \softmax_{a\in\cA}\hat{Q}_t(s,a)-\softmax_{a\in\cA}\hat{Q}_0\bigl(s,a;W(t)\bigr) \Bigr)^2\Bigr]\le 
c_4 B^3\cdot m^{-1/2}.
\$
\end{lemma}
\begin{proof}
The softmax operator has the following duality. For any function $Q(s,a)$, it holds that
\#\label{518135}
\softmax_{a\in\cA} Q(s,a)= \max_{\pi(s)\in\Delta}  \EE_{a\sim\pi(s)} [ Q(s,a)] + \beta^{-1}\cdot H\bigl(\pi(s)\bigr),
\#
where $\Delta$ is the set of all probability distributions over $\cA$ and $H(\pi(s))$ is the entropy of $\pi(s)$. Hence, we obtain 
\$
\bigl| \softmax_{a\in\cA}\hat{Q}_t(s,a)-\softmax_{a\in\cA}\hat{Q}_0\bigl(s,a;W(t)\bigr)  \bigr| &\le \max_{\pi(s)\in\Delta} \bigl| \EE_{a\sim\pi(s)}\bigl[\hat{Q}_t(s,a)-\hat{Q}_0\bigl(s,a;W(t)\bigr) \bigr]\bigr|\\
&=\max_{a\in\cA}\bigl| \hat{Q}_t(s,a)-\hat{Q}_0\bigl(s,a;W(t)\bigr) \bigr|.
\$
By applying Lemma \ref{maxqdiff}, we complete the proof of Lemma \ref{softqdiff}. 
\end{proof}
Meanwhile, note that the upper bound in \eqref{518125} of (iii) in \eqref{517609} still holds by Lemma \ref{mulm3}. Thus, the claim of Lemma \ref{zdiff} holds for neural soft Q-learning. Moreover, \eqref{517646} also holds for neural soft Q-learning under Assumption \ref{cond2}. To further extend the upper bound of $\EE_{\text{init},\mue}[\|z(t)-\overline{z}(t)\|^2_2]$ to neural soft Q-learning, it remains to upper bound $\softmax_{a'\in\cA}\hat{Q}_t(s',a')$, which replaces $\max_{a'\in\cA}\hat{Q}_t(s',a')$ in \eqref{517909}, by 
\$
\bigl|\softmax_{a'\in\cA}\hat{Q}_t(s',a')\bigr|\le \max_{a'\in\cA}|\hat{Q}_t(s',a')|+\beta^{-1} \cdot\log|\cA|,
\$
where $\beta$ is the parameter of the softmax operator. Here the inequality follows from \eqref{518135}. The additional term $\beta^{-1} \cdot\log|\cA|$ is independent of $t$ and $m$. Thus, we obtain an upper bound of the variance $\EE_{\text{init},\mue}[\|z(t)-\overline{z}(t)\|^2_2]$, which is independent of $t$ and $m$. With \eqref{517646} and Lemma \ref{zdiff}, the proof of Theorem \ref{thm4} follows from that of Theorem \ref{thm3}.
\end{proof}

\begin{algorithm}
\caption{Neural Soft Q-Learning}
\begin{algorithmic}[1]
\STATE \textbf{Initialization:} $b_r\sim\text{Unif}(\{-1,1\})$, $W_r(0)\sim N(0,I_d/d)$ $(r\in[m])$, $\overline{W}=W(0)$,\\
{\color{white}\textbf{Initialization:}} $S_B=\{W\in\real^{md}:\|W-W(0)\|_2\le B\}$ $(B>0)$,\\
{\color{white}\textbf{Initialization:}} exploration policy $\pie$ such that $\pie(a\,|\,s)>0$ for any $(s,a)\in\cS\times\cA$ \\
\STATE \textbf{For} {$t=0$ to $T-2$}:
\STATE \hspace{0.15in} Sample a tuple $(s,a, r, s')$ from the stationary distribution $\mue$ of the exploration policy $\pie$
\STATE \hspace{0.15in} Bellman residual calculation: $\delta \leftarrow \hat{Q}(s,a;W(t))-r-\gamma\softmax_{a'\in\cA}\hat{Q}(s',a';W(t))$
\STATE \hspace{0.15in} TD update: $\tilde{W}(t+1)\leftarrow W(t)-\eta\delta\cdot\nabla_{W}\hat{Q}(s,a;W(t))$ \\
\STATE \hspace{0.15in} Projection: $W(t+1)\leftarrow \argmin_{W\in S_B}\|W-\tilde{W}(t+1)\|_2$
\STATE \hspace{0.15in} Averaging: $\overline{W}\leftarrow \frac{t+1}{t+2}\cdot \overline{W}+\frac{1}{t+2}\cdot W(t+1)$
\STATE \textbf{End For} 
\STATE \textbf{Output:}  $\hat{Q}_\text{out}(\cdot)\leftarrow \hat{Q}(\cdot\,;\overline{W})$
\end{algorithmic}\label{softq}
\end{algorithm}

\subsection{Implication for Policy Gradient}
In this section, we briefly summarize the equivalence between policy gradient algorithms and neural soft Q-learning \citep{schulman2017equivalence, haarnoja2018soft}, which implies that our results are extendable to characterize a variant of the policy gradient algorithm. 

We define $\pi_\theta$ as the Boltzmann policy corresponding to the Q-function $Q_\theta$, which is parametrized  by $\theta$,
\# 
&\pi_{\theta}(a\,|\,s)=\overline{\pi}(a\,|\,s)\cdot \exp\Bigl(\beta\cdot\bigl( Q_{\theta}(s,a)-V_\theta(s)\bigr)\Bigr), \label{430734}\\
&\text{where~} V_\theta(s)=\beta^{-1}\cdot\log\Bigl(\EE_{a\sim\overline{\pi}(\cdot\,|\,s) }\bigl[ \exp\bigl(\beta\cdot Q_{\theta}(s,a)\bigr) \bigr]\Bigr). \label{430735}
\#
Here $\overline{\pi}$ is the uniform policy and $V_\theta$ is the partition function. In the context of neural soft Q-learning, we use the parametrization
\$
Q_{\theta}(s,a) = \hat{Q}(x;W)=\frac{1}{\sqrt{m}}\sum_{r=1}^m b_r\sigma(W_r^\top x).
\$
From \eqref{430734} we have
\# \label{430505}
Q_\theta(s,a)=V_{\theta}(s)+\beta^{-1}\cdot \log\bigl( \pi_{\theta}(a\,|\,s)/\overline{\pi}(a\,|\,s)\bigr).
\#
In the sequel, we show that the population semigradient in soft Q-learning, which is defined in \eqref{0517538}, equals a variant of population policy gradient, given that the exploration policy $\pie$ in \eqref{0517538} is $\pi_\theta$. Recall that the population semigradient in soft Q-learning is given by
\# \label{430506}
&\EE_{(s,a,s')\sim\mu_{\theta}}\bigl[ \nabla_\theta Q_\theta(s,a)\cdot
\bigl(
Q_\theta(s,a)-r(s,a)-\gamma\cdot\softmax_{a'\in\cA} Q_\theta(s',a')
\bigr)
 \bigr]\notag\\
 &\quad=\EE_{(s,a,s')\sim\mu_{\theta}}\bigl[ \nabla_\theta Q_\theta(s,a) \cdot 
\bigl(
Q_\theta(s,a)-r(s,a)
-\gamma\cdot V_\theta(s')
\bigr)\bigr],
\#
where $\mu_\theta$ is the stationary distribution of $\pi_\theta$. For notational simplicity, we define
\# \label{430504}
\xi=r(s,a)-\beta^{-1}\cdot D_{\text{KL}}\bigl(\pi_\theta(\cdot\,|\,s)\,\|\,\overline{\pi}(\cdot\,|\,s)\bigr)+\gamma\cdot V_\theta(s')-V_\theta(s).
\#
Plugging \eqref{430505} and \eqref{430504} into \eqref{430506}, we obtain
\# \label{430512}
&\EE_{(s,a,s')\sim\mu_{\theta}}\bigl[ \nabla_\theta Q_\theta(s,a) \cdot 
\bigl(
Q_\theta(s,a)-r(s,a)
-\gamma\cdot V_\theta(s')
\bigr)\bigr]\\
&\quad=\EE_{(s,a,s')\sim\mu_{\theta}}\Bigl[ \nabla_\theta Q_\theta(s,a)\cdot
\Bigl(\beta^{-1}\cdot \log\bigl( \pi_{\theta}(a\,|\,s)/\overline{\pi}(a\,|\,s)\bigr)
-\beta^{-1}\cdot D_{\text{KL}}\bigl(\pi_\theta(\cdot\,|\,s)\,\|\,\overline{\pi}(\cdot\,|\,s)\bigr)-\xi
\Bigr)\Bigr].\notag
\#
Taking gradient on the both sides of \eqref{430505}, we obtain
\# \label{430513}
\nabla_\theta Q_\theta(s,a)=\nabla_\theta V_\theta(s)+\beta^{-1} \cdot \nabla_\theta \log \bigl( \pi_\theta(a\,|\,s) \bigr).
\#
Plugging \eqref{430513} into the right-hand side of \eqref{430512} yields
\#\label{430516}
&\EE_{(s,a,s')\sim\mu_{\theta}}\bigl[ \nabla_\theta Q_\theta(s,a) \cdot 
\bigl(
Q_\theta(s,a)-r(s,a)
-\gamma\cdot V_\theta(s')
\bigr)\bigr]\\
&\quad =\EE_{(s,a,s')\sim\mu_{\theta}}\Bigl[
\beta^{-1}\cdot\nabla_\theta V_\theta(s) \cdot \Bigl(\log\bigl( \pi_{\theta}(a\,|\,s)/\overline{\pi}(a\,|\,s)\bigr)
-D_{\text{KL}}\bigl(\pi_\theta(\cdot\,|\,s)\,\|\,\overline{\pi}(\cdot\,|\,s)\bigr)
\Bigr)-\nabla_\theta V_\theta(s)\cdot\xi \notag \\
&\quad\qquad 
+\beta^{-2}\cdot  \nabla_\theta \log \bigl( \pi_\theta(a\,|\,s) \bigr)\cdot \log\bigl( \pi_{\theta}(a\,|\,s)/\overline{\pi}(a\,|\,s)\bigr)
-\beta^{-2}\cdot \nabla_\theta\log \bigl( \pi_\theta(a\,|\,s) \bigr) \cdot D_{\text{KL}}\bigl(\pi_\theta(\cdot\,|\,s)\,\|\,\overline{\pi}(\cdot\,|\,s)\bigr)\notag\\
&\quad\qquad -\beta^{-1}\cdot \nabla_\theta\log \bigl( \pi_\theta(a\,|\,s) \bigr) \cdot \xi
\Bigr].\notag
\#
By the definition of the KL-divergence, we have
\$
\EE_{a\sim\pi_\theta(\cdot\,|\,s)}\bigl[ \log\bigl( \pi_{\theta}(a\,|\,s)/\overline{\pi}(a\,|\,s)\bigr)\bigr]=D_{\text{KL}}\bigl(\pi_\theta(\cdot\,|\,s)\,\|\,\overline{\pi}(\cdot\,|\,s)\bigr).
\$
Thus, on the right-hand side of \eqref{430516}, we have 
\#\label{518236}
\EE_{(s,a,s')\sim\mu_{\theta}}\Bigl[
\beta^{-1}\cdot\nabla_\theta V_\theta(s)\Bigl(\log\bigl( \pi_{\theta}(a\,|\,s)/\overline{\pi}(a\,|\,s)\bigr)
-D_{\text{KL}}\bigl(\pi_\theta(\cdot\,|\,s)\,\|\,\overline{\pi}(\cdot\,|\,s)\bigr)
\Bigr)\Bigr]=0.
\#
Also, since 
\#\label{430822}
\EE_{a\sim\pi_\theta(\cdot\,|\,s)}\bigl[ \nabla_\theta\log \bigl( \pi_\theta(a\,|\,s) \bigr)\bigr]=\sum_{a\in\cA} \nabla_\theta \pi_\theta(a\,|\,s)=\nabla_\theta 1= 0,
\#
on the right-hand side of \eqref{430516}, we have
\#\label{518237}
\EE_{(s,a,s')\sim\mu_{\theta}}\Bigl[\beta^{-2}\cdot \nabla_\theta\log \bigl( \pi_\theta(a\,|\,s) \bigr) \cdot D_{\text{KL}}\bigl(\pi_\theta(\cdot\,|\,s)\,\|\,\overline{\pi}(\cdot\,|\,s)\bigr)
\Bigr]=0.
\#
Plugging \eqref{518236} and \eqref{518237} into \eqref{430516}, we obtain 
\# \label{430731}
&\EE_{(s,a,s')\sim\mu_{\theta}}\bigl[ \nabla_\theta Q_\theta(s,a) \cdot 
\bigl(
Q_\theta(s,a)-r(s,a)
-\gamma\cdot V_\theta(s')
\bigr)\bigr]\\
&\quad = \underbrace{\EE_{(s,a,s')\sim\mu_{\theta}}\bigl[
\beta^{-2}\cdot  \nabla_\theta \log \bigl( \pi_\theta(a\,|\,s) \bigr) \cdot \log\bigl( \pi_{\theta}(a\,|\,s)/\overline{\pi}(a\,|\,s)\bigr)
 -\beta^{-1}\cdot \nabla_\theta\log \bigl( \pi_\theta(a\,|\,s) \bigr) \cdot \xi \bigr]}_{\displaystyle \text{(ii)}}\notag\\
 &\quad\qquad+
\underbrace{\EE_{(s,a,s')\sim\mu_{\theta}}[-\nabla_\theta V_\theta(s)\cdot\xi
]}_{\displaystyle \text{(i)}}.\notag
\#
We characterize (i) and (ii) on the right-hand side of \eqref{430731}. For (i), by the definition of $\xi$ in \eqref{430504}, we have
\#\label{518250}
&\EE_{(s,a,s')\sim\mu_{\theta}}[-\nabla_\theta V_\theta(s)\cdot\xi
]\notag\\
&\quad=\EE_{(s,a,s')\sim\mu_{\theta}}\Bigl[- \nabla_\theta V_\theta(s) \cdot \Bigl(
r(s,a)-\beta^{-1}\cdot D_{\text{KL}}\bigl(\pi_\theta(\cdot\,|\,s)\,\|\,\overline{\pi}(\cdot\,|\,s)\bigr)+\gamma\cdot V_\theta(s')-V_\theta(s)
\Bigr)\Bigr]\notag\\
&\quad=\EE_{(s,a,s')\sim\mu_{\theta}}\bigl[-\nabla_\theta 
\bigl( V_\theta(s)-\cT^{\tilde{\pi}}_{\text{KL}} V_{\tilde{\theta}}(s) \bigr)^2\big/2
\bigr]\big|_{\tilde{\pi}=\pi_\theta,\tilde{\theta}=\theta}.
\#
Here the operator $\cT^{\tilde{\pi}}_{\text{KL}}$ is defined as
\$
\cT^{\tilde{\pi}}_{\text{KL}} V(s)=\EE
\bigl[
r(s,a)-\beta^{-1}\cdot D_{\text{KL}}\bigl(\tilde{\pi}(\cdot\,|\,s)\,\|\,\overline{\pi}(\cdot\,|\,s)\bigr)
+\gamma\cdot V(s')
\,\big|\, a\sim\tilde{\pi}(\cdot\,|\,s),s'\sim\cP(\cdot\,|\,s,a)\bigr],
\$
which is the Bellman evaluation operator for the value function $V^{\tilde{\pi}(s)}$ associated with the KL-regularized reward 
\#\label{sreward}
r_{\text{KL}}(s,a,s')=r(s,a)-\beta^{-1}\cdot D_{\text{KL}}\bigl(\tilde{\pi}(\cdot\,|\,s)\,\|\,\overline{\pi}(\cdot\,|\,s)\bigr).\#
By \eqref{518250}, 
  $\EE_{(s,a,s')\sim\mu_{\theta}}[-\nabla_\theta V_\theta(s)\cdot\xi
]$ is the population semigradient for the evaluation of policy $\pi_\theta$. Now we characterize (ii) in \eqref{430731}. First, by the definition of the KL-divergence, we have
\$
\nabla_\theta D_{\text{KL}}\bigl( \pi_\theta(\cdot\,|\,s)\,\|\,\overline{\pi}(\cdot\,|\,s) \bigr)
&=\sum_{a\in\cA}\nabla_\theta \bigl[ \pi_\theta(a\,|\,s) \cdot \log\bigl(\pi_\theta(a\,|\,s)/\overline{\pi}(a\,|\,s) \bigr)
\bigr]\\
&=\sum_{a\in\cA} \nabla_\theta\pi_\theta(a\,|\,s) \cdot \log\bigl(\pi_\theta(a\,|\,s)/\overline{\pi}(a\,|\,s) \bigr)\\
&=\EE_{a\sim\pi_{\theta}(\cdot\,|\,s)}\bigl[ \nabla_{\theta} \log\pi_\theta(a\,|\,s) \cdot \log\bigl(\pi_\theta(a\,|\,s)/\overline{\pi}(a\,|\,s) \bigr) \bigr].
\$
Here the second equality follows from \eqref{430822}. Hence, (ii) in \eqref{430731} takes the form
\$
&\EE_{(s,a,s')\sim\mu_{\theta}}\bigl[
\beta^{-2}\cdot  \nabla_\theta \log \bigl( \pi_\theta(a\,|\,s) \bigr) \cdot \log\bigl( \pi_{\theta}(a\,|\,s)/\overline{\pi}(a\,|\,s)\bigr)
 -\beta^{-1}\cdot \nabla_\theta\log \bigl( \pi_\theta(a\,|\,s) \bigr) \cdot \xi \bigr]\\
 &\quad=
 -\beta^{-1} \cdot \EE_{(s,a,s')\sim\mu_{\theta}}\Bigl[
 \underbrace{\nabla_\theta\log \bigl( \pi_\theta(a\,|\,s) \bigr) \cdot \xi}_{\displaystyle\text{(ii).a}}
 - \underbrace{ \nabla_\theta \Bigl(\beta^{-1}\cdot D_{\text{KL}}\bigl( \pi_\theta(\cdot\,|\,s)\,\|\,\overline{\pi}(\cdot\,|\,s) \bigr)\Bigr)}_{\displaystyle\text{(ii).b}}
 \Bigr].
\$
We show that (ii) in \eqref{430731} is the population policy gradient. For (ii).a, note that $\xi$ defined in \eqref{430504} is an unbiased estimator of the advantage function $Q^{\pi_\theta}(s,a)-V^{\pi_\theta}(s)$ associated with the KL-regularized reward $r_{\text{KL}}$ defined in \eqref{sreward}. If we denote by $J_{\text{KL}}(\pi_\theta)$ the expected total reward, then (ii).a is an estimator of the population policy gradient $\nabla_\theta J_{\text{KL}}(\pi_\theta)$. For (ii).b, it is the gradient of the entropy regularization
\$
\beta^{-1} \cdot \EE_{s\sim\mu_{\theta}}\bigl[ D_{\text{KL}}\bigl( \pi_\theta(\cdot\,|\,s)\,\|\,\overline{\pi}(\cdot\,|\,s) \bigr)\bigr] = \beta^{-1} \cdot \EE_{s\sim\mu_{\theta}} \bigl[H\bigl(\pi_\theta(\cdot\,|\,s)\bigr)\bigr].
\$
Therefore, we recover the policy gradient update in the Q-learning updating scheme. Combining (i) and (ii) in \eqref{430731}, we obtain a variant of the policy gradient algorithm, which is connected with the soft actor-critic algorithm \citep{haarnoja2018soft}. Hence, our global convergence of neural soft Q-learning extends to a variant of the actor-critic algorithm. See Algorithm \ref{softpg} for a detailed description of such an algorithm with $\hat{Q}_\theta$ parametrized by a two-layer neural network, which can also be extended to allow for a multi-layer neural network. In parallel with $V_{\theta}(s)$ in \eqref{430735}, we define
\$
\hat{V}(s;W)=\beta^{-1}\cdot\log\Bigl(\EE_{a\sim\overline{\pi}(\cdot\,|\,s) }\bigl[ \exp\bigl(\beta\cdot \hat{Q}(s,a;W)\bigr) \bigr]\Bigr).
\$

\begin{algorithm}
\caption{Neural Soft Actor-Critic}
\begin{algorithmic}[1]
\STATE \textbf{Initialization:} $b_r\sim\text{Unif}(\{-1,1\})$, $W_r(0)\sim N(0,I_d/d)$ $(r\in[m])$, $\overline{W}=W(0)$,
 \label{pgstep2}\\
{\color{white}\textbf{Initialization:}} $S_B=\{W\in\real^{md}:\|W-W(0)\|_2\le B\}$ $(B>0)$\\
\STATE \textbf{For} {$t=0$ to $T-2$}:
\STATE \hspace{0.15in} Policy Update: $\pi_t(a\,|\,s)\propto \overline{\pi}(a\,|\,s)\cdot \exp(\beta\cdot\hat{Q}(s,a;W(t)))$
\STATE \hspace{0.15in} Sample a tuple $(s,a, r, s')$ from the stationary distribution $\mu_t$ of policy $\pi_t$
\STATE \hspace{0.15in} Reward regularization: $r_{\text{KL}}\leftarrow r-\beta^{-1}\cdot D_{\text{KL}}(\pi_t(\cdot\,|\,s)\,\|\,\overline{\pi}(\cdot\,|\,s))$
\STATE \hspace{0.15in} Bellman residual calculation: $\xi \leftarrow r_{\text{KL}}+\gamma \hat{V}(s';W(t))-\hat{V}(s;W(t))$
\STATE \hspace{0.15in} Actor Update:\\
\STATE \hspace{0.15in} \quad\qquad
 $\tilde{W}(t+1)\leftarrow W(t)+\eta\cdot(\beta^{-1}\cdot\xi\cdot\nabla_{W}\log\pi_t(a\,|\,s)-\beta^{-2}\cdot\nabla_WD_{\text{KL}}(\pi_t(\cdot\,|\,s)\,\|\,\overline{\pi}(\cdot\,|\,s)))$\\
\STATE \hspace{0.15in} Critic update: $\tilde{W}'(t+1)\leftarrow\tilde{W}(t+1)+\eta\cdot\xi\cdot\nabla_W \hat{V}(s;W(t)) $
\STATE \hspace{0.15in} Projection: $W(t+1)\leftarrow \argmin_{W\in S_B}\|W-\tilde{W}'(t+1)\|_2$
\STATE \hspace{0.15in} Averaging: $\overline{W}\leftarrow \frac{t+1}{t+2}\cdot \overline{W}+\frac{1}{t+2}\cdot W(t+1)$
\STATE \textbf{End For} 
\STATE \textbf{Output:}  $\hat{Q}_\text{out}(\cdot)\leftarrow \hat{Q}(\cdot\,;\overline{W})$, $\pi_{\text{out}}(a\,|\,s)\propto \overline{\pi}(a\,|\,s)\cdot \exp(\beta\cdot\hat{Q}_\text{out}(s,a))$
\end{algorithmic}\label{softpg}
\end{algorithm}

%%%%%%%%%%%%%%%%%%%%%%%%%%%%%%%%%%%%%%%%%%%%%%%%%%

% !TEX root = neural_TD.tex
%\begin{flushleft}
\section{Extension to Multi-Layer Neural Networks}\label{deep}
In this section, we generalize our main results in Section \ref{tr} to the setting where the Q-function is parametrized by a multi-layer neural network. Similar to the setting with a two-layer neural network, we represent the state-action pair $(s,a)\in\cS\times\cA$ by a vector $x=\psi(s,a)\in\cX\subseteq\real^d$ with $d>2$, where $\psi$ is a given one-to-one feature map. With a slight abuse of notation, we use $(s,a)$ and $x$ interchangeably. Without loss of generality, we assume that $\|x\|_2=1$. The Q-function is parametrized by
\$
x^{(0)}&=Ax,\quad x^{(h)}=\frac{1}{\sqrt{m}}\cdot\sigma(W^{(h)}x^{(h-1)})~\text{for any}~h\in[H],\quad y=b^\top x^{(H)},
\$
where $A\in\RR^{m\times d}$, $W^{(h)}\in\RR^{m\times m}$, and $b\in\RR^m$ are the weights. Here $x^{(h)}$ corresponds to the $(h+1)$-th hidden layer and $y$ gives $\hat{Q}(x;W)$. For notational simplicity, we define
\$
W = \bigl(\vec(W^{(1)})^\top, \ldots, \vec(W^{(H)})^\top \bigr)^\top \in \RR^{Hm^2}.
\$
 Each entry of $A$ and $\{W^{(h)}\}_{h=1}^H$ is independently initialized by $N(0,2)$, while each entry of $b$ is independently initialized by $N(0,1)$. During training, we only update $W$ using the TD update in \eqref{5131033}, while keeping $A$ and $b$ fixed as the random initialization \citep{allen2018convergence, gao2019convergence}. 

Similar to \eqref{q0}, we redefine the locally linearized Q-function as
\# \label{200305624}
\hat{Q}_0(x;W) = \hat{Q}\bigl(x;W(0)\bigr) + \bigl\la \nabla_W \hat{Q}\bigl(x;W(0)\bigr), W-W(0) \bigr\ra,
\#
where $W(0)$ is the random initialization of $W$. Also, we redefine
\$
S_B&=\bigl\{ W\in\RR^{Hm^2}: \|W^{(h)}-W^{(h)}(0)\|_{\tf}\le B~\text{for any}~h\in[H] \bigr\}, \\
\cF_{B,m} &= \bigl\{ \hat{Q}\bigl(x;W(0)\bigr) + \bigl\la \nabla_W \hat{Q}\bigl(x;W(0)\bigr), W-W(0) \bigr\ra : W\in S_B   \bigr\}.
\$
Correspondingly, we redefine $g(t)$, $\overline{g}(t)$, $g_0(t)$, and $\overline{g}_0(t)$ by plugging the redefined $\hat{Q}$ and $\hat{Q}_0$ into \eqref{514210}, \eqref{518430}, and \eqref{518431}, respectively.  Also, we redefine $W^*$ and $\overline{g}^*_0$ by plugging the redefined $S_B$, $\cF_{B,m}$, $\hat{Q}$, and $\hat{Q}_0$ into \eqref{spdef} and \eqref{516720}, respectively.

%With the new definitions of $\hat{Q}$ and $\hat{Q}_0$ above, we have the new $g(t), g_0(t), \overline{g}(t)$, and $\overline{g}_0(t)$, which are defined in the same way. Also, with the new definitions of $S_B$ and $\cF_{B,m}$, there is the corresponding new approximate stationary point $W^*$ and $\overline{g}^*_0$.

In the sequel, we establish the global convergence of neural TD with a multi-layer neural network. Note that we abandon Assumption \ref{asmp1} at the cost of a slightly worse upper bound of the error of local linearization, which is characterized by the following lemma.

\begin{lemma}\label{1223124}
Let $m=\Omega(d^{3/2}\log^{3/2}(m^{1/2}/B)/(BH^{3/2}))$ and $B=O(m^{1/2}H^{-6}\log^{-3} m)$. With probability at least $1-e^{-\Omega(\log^2 m)}$ with respect to the random initialization, it holds for any $W\in S_B$ and $x\in\RR^d$ with $\|x\|_2=1$ that
\#
\bigl\| \nabla_W \hat{Q}(x;W) - \nabla_W \hat{Q}\bigl(x;W(0)\bigr) \bigr\|_2&=O(B^{1/3}m^{-1/6}H^{5/2}\log^{1/2}m),\notag\\
\|\nabla_W \hat{Q}(x;W)\|_2&=O(H), \notag\\ %\label{2003111254} \\
\bigl| \hat{Q}\bigl(x;W(0)\bigr) \bigr| &= O(\log m).\notag
\#
\end{lemma}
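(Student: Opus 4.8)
The plan is to reduce all three bounds to standard forward- and backward-propagation estimates for overparametrized deep ReLU networks, following the near-linear (neural tangent kernel) analysis of \cite{allen2018convergence, gao2019convergence}. I would first fix notation: let $D^{(h)}=\mathrm{diag}\bigl(\ind\{(W^{(h)}x^{(h-1)})_i>0\}\bigr)$ be the activation pattern at layer $h$, so that the forward recursion reads $x^{(h)}=m^{-1/2}D^{(h)}W^{(h)}x^{(h-1)}$, and record the backpropagation identity $\nabla_{W^{(h)}}\hat{Q}(x;W)=m^{-1/2}v^{(h)}(x^{(h-1)})^\top$, where the backward signal $v^{(h)}$ is obtained by propagating $b$ through the matrices $D^{(\ell)},W^{(\ell)}$ for $\ell>h$. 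Since $\|\nabla_W\hat{Q}\|_2^2=\sum_{h=1}^H\|\nabla_{W^{(h)}}\hat{Q}\|_{\tf}^2$, every claim in the lemma becomes a statement about the norms and the stability (in $W$) of the forward vectors $x^{(h)}$ and the backward vectors $v^{(h)}$.

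The forward pass and the third bound come first. The $N(0,2)$ scaling exactly compensates the factor $1/2$ lost by ReLU, so $\EE[\|x^{(h)}(0)\|_2^2\mid x^{(h-1)}(0)]=\|x^{(h-1)}(0)\|_2^2$; a layerwise Bernstein concentration over the $m$ coordinates, union bounded over $h\in[H]$, then gives $\|x^{(h)}(0)\|_2=\Theta(1)$ for all $h$ off an $e^{-\Omega(\log^2 m)}$ event, which is where the width condition $m=\Omega(d^{3/2}\log^{3/2}(m^{1/2}/B)/(BH^{3/2}))$ enters to make the relative fluctuations negligible. The third bound is then immediate: conditioned on $x^{(H)}(0)$, the output $\hat{Q}(x;W(0))=b^\top x^{(H)}(0)$ is centered Gaussian with variance $\|x^{(H)}(0)\|_2^2=O(1)$, so a Gaussian tail bound yields $|\hat{Q}(x;W(0))|=O(\log m)$. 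The second bound follows from the same machinery applied to the backward pass: $v^{(h)}$ is a product of $H-h$ random matrices $m^{-1/2}D^{(\ell)}W^{(\ell)}$ applied to $b$, and the norm-preservation and concentration give $\|v^{(h)}\|_2=O(\sqrt{H})$ uniformly; combined with $\|x^{(h-1)}\|_2=O(1)$ this gives $\|\nabla_{W^{(h)}}\hat{Q}\|_{\tf}=O(\sqrt{H})$ per layer, and summing the squares over $H$ layers yields $\|\nabla_W\hat{Q}\|_2=O(H)$, which persists for general $W\in S_B$ once the stability estimates below are in place.

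The heart of the lemma is the gradient-stability bound, and I expect this to be the main obstacle. For $W\in S_B$ the key ingredient is a sparse sign-change lemma: since $\|W^{(h)}-W^{(h)}(0)\|_{\tf}\le B$, a coordinate of $D^{(h)}$ can flip only when $|(W^{(h)}(0)x^{(h-1)})_i|$ is smaller than the induced perturbation, and a small-ball argument bounds the flipped fraction at each layer by $O(B^{2/3}m^{-1/3}\,\mathrm{poly}(H))$. Propagating this through the forward recursion controls $\|x^{(h)}-x^{(h)}(0)\|_2$, propagating it through the backward recursion controls $\|v^{(h)}-v^{(h)}(0)\|_2$, and plugging the per-layer products $v^{(h)}(x^{(h-1)})^\top$ into the gradient and summing yields the stated rate $O(B^{1/3}m^{-1/6}H^{5/2}\log^{1/2}m)$. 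The difficulty is precisely this coupled forward--backward perturbation analysis: a sign flip in an early layer perturbs every subsequent $x^{(\ell)}$, while the backward product over $H-h$ matrices amplifies errors, so one must show these perturbations compound only polynomially in depth rather than exploding. This is where the $H^{5/2}$ factor and the slower $m^{-1/6}$ rate (as opposed to the $m^{-1/2}$ seen in the two-layer Lemma~\ref{qdiff}) arise, and where the upper bound $B=O(m^{1/2}H^{-6}\log^{-3}m)$ is needed to keep the network inside the near-linear regime.

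Finally, because all three bounds must hold uniformly over the uncountable set $S_B$ and over $x$, I would upgrade the fixed-$W$ estimates to uniform ones by a standard $\epsilon$-net argument over $S_B$ combined with the Lipschitz continuity in $W$ of $x^{(h)}$, $v^{(h)}$, and $\nabla_W\hat{Q}$. The cardinality of the net contributes only logarithmically, so it is absorbed into the $\log m$ factors and the $e^{-\Omega(\log^2 m)}$ failure probability, completing the argument.
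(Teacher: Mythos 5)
Your proposal is correct and takes essentially the same route as the paper: the paper's proof of this lemma simply defers to Lemmas A.5 and A.6 of \cite{gao2019convergence} (for the gradient-stability and gradient-norm bounds) and Theorem 1 of \cite{allen2018convergence} (for the bound on $|\hat{Q}(x;W(0))|$), and your forward/backward propagation estimates, sparse sign-flip (small-ball) argument, and $\epsilon$-net uniformization are precisely the machinery underlying those cited proofs. The only difference is that you reconstruct those arguments explicitly, whereas the paper outsources them by citation.
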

\begin{proof}
See \cite{allen2018convergence, gao2019convergence} for a detailed proof. In detail, following the proofs of Lemmas A.5 and A.6 in \cite{gao2019convergence}, we have that the first and second equalities hold with probability at least $1-O(H)\cdot e^{-\Omega(B^{2/3}m^{2/3}H)}$. Also, following the proof of Theorem 1 in \cite{allen2018convergence}, we have that the third equality holds with probability at least $1-e^{-\Omega(\log^2 m)}$, which concludes the proof of Lemma \ref{1223124}.
\end{proof}

The following lemma replaces Lemmas \ref{qdiff}, \ref{gdiff}, and \ref{bdvar}.

\begin{lemma}\label{1223523}
Under the same condition of Lemma \ref{1223124}, with probability at least $1-e^{-\Omega(\log^2 m)}$ with respect to the random initialization, it holds for any $t\in[T]$ and $x\in\RR^d$ with $\|x\|_2= 1$ that
\#
\bigl| \hat{Q}\bigl(x;W(t)\bigr)-\hat{Q}_0\bigl(x;W(t)\bigr)  \bigr| &= O(B^{4/3}m^{-1/6}H^{3}\log^{1/2}m),\label{20030612441}\\
\| \overline{g}(t) - \overline{g}_0(t) \|_2&=O(B^{4/3}m^{-1/6}H^{4}\log^{3/2} m),\label{20030612442}\\
\| g(t) - \overline{g}(t) \|_2^2&=O(B^2H^5\log^2m).\label{20030612443}
\#
\end{lemma}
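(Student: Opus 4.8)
The plan is to derive all three estimates from the single analytic input of Lemma~\ref{1223124}, which in the multi-layer setting replaces the indicator-function computations (and Assumption~\ref{asmp1}) behind the two-layer Lemmas~\ref{qdiff}, \ref{gdiff}, and \ref{bdvar}. I would work throughout on the high-probability event of Lemma~\ref{1223124}, on which the gradient-stability, gradient-norm, and initialization bounds hold simultaneously for all $x$ with $\|x\|_2=1$ and all $W\in S_B$. Two elementary facts about $S_B$ are used repeatedly: it is convex, being an intersection of Frobenius-norm balls, so the segment $W_s=W(0)+s(W(t)-W(0))$ stays in $S_B$ for $s\in[0,1]$; and $\|W(t)-W(0)\|_2=\bigl(\sum_{h=1}^H\|W^{(h)}(t)-W^{(h)}(0)\|_{\tf}^2\bigr)^{1/2}\le\sqrt{H}\,B$.

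For \eqref{20030612441}, I would write the linearization error as a first-order Taylor remainder. Since $\hat{Q}(x;\cdot)$ is piecewise linear, hence differentiable along $W_s$ for almost every $s$, the fundamental theorem of calculus and \eqref{200305624} give
\$
\hat{Q}\bigl(x;W(t)\bigr)-\hat{Q}_0\bigl(x;W(t)\bigr)=\int_0^1\bigl\la\nabla_W\hat{Q}(x;W_s)-\nabla_W\hat{Q}\bigl(x;W(0)\bigr),\,W(t)-W(0)\bigr\ra\,ds.
\$
Cauchy--Schwarz inside the integral, followed by the first bound of Lemma~\ref{1223124} and $\|W(t)-W(0)\|_2\le\sqrt{H}B$, then yields $O(B^{1/3}m^{-1/6}H^{5/2}\log^{1/2}m)\cdot\sqrt{H}B=O(B^{4/3}m^{-1/6}H^{3}\log^{1/2}m)$, as claimed.

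Next, for \eqref{20030612442} I would reuse the decomposition from the proof of Lemma~\ref{gdiff}, noting that $\nabla_W\hat{Q}_0(x;W)=\nabla_W\hat{Q}(x;W(0))$ here, so that
\$
\overline{g}(t)-\overline{g}_0(t)=\EE_\mu\Bigl[(\delta-\delta_0)\,\nabla_W\hat{Q}\bigl(x;W(t)\bigr)+\delta_0\bigl(\nabla_W\hat{Q}\bigl(x;W(t)\bigr)-\nabla_W\hat{Q}\bigl(x;W(0)\bigr)\bigr)\Bigr].
\$
The triangle inequality reduces this to three factors: $|\delta-\delta_0|\le|\hat{Q}(x;W(t))-\hat{Q}_0(x;W(t))|+\gamma|\hat{Q}(x';W(t))-\hat{Q}_0(x';W(t))|$, controlled by \eqref{20030612441}; the factor $\|\nabla_W\hat{Q}(x;W(t))\|_2=O(H)$ and the gradient difference, from the second and first bounds of Lemma~\ref{1223124}; and $|\delta_0|=O(\log m+H^{3/2}B+\overline{r})$, obtained from $|\hat{Q}_0(x;W(t))|\le|\hat{Q}(x;W(0))|+\|\nabla_W\hat{Q}(x;W(0))\|_2\,\|W(t)-W(0)\|_2=O(\log m+H^{3/2}B)$. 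Collecting the two products and keeping the dominant power of each of $B$, $H$, and $\log m$ gives \eqref{20030612442}. For the variance bound \eqref{20030612443}, since $\overline{g}(t)=\EE_\mu[g(t)]$ I would use $\|g(t)-\overline{g}(t)\|_2\le 2\sup\|g(t)\|_2$ on the same event; from $g(t)=\delta\cdot\nabla_W\hat{Q}(x;W(t))$, the bound $\|\nabla_W\hat{Q}(x;W(t))\|_2=O(H)$, and $|\delta|=O(\log m+H^{3/2}B+\overline{r})$ (using $|\hat{Q}(x;W(t))|\le|\hat{Q}(x;W(0))|+\int_0^1\|\nabla_W\hat{Q}(x;W_s)\|_2\,ds\,\|W(t)-W(0)\|_2$), I would get $\|g(t)\|_2=O(H\log m+H^{5/2}B+H\overline{r})$, and squaring gives \eqref{20030612443}.

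The main obstacle is bookkeeping rather than conceptual: each estimate must be made uniform over $\{x:\|x\|_2=1\}$ and over $W\in S_B$ so that it survives both the expectation $\EE_\mu$ and the substitution of the realized iterate $W(t)$, and the polynomial factors in $H$, $B$, and $\log m$ must be tracked carefully through every product so that the dominant term matches the stated exponents under the scaling $m=\Omega(d^{3/2}\log^{3/2}(m^{1/2}/B)/(BH^{3/2}))$ and $B=O(m^{1/2}H^{-6}\log^{-3}m)$. A secondary technical point is to justify the integral representation in \eqref{20030612441} despite the nondifferentiability of ReLU, which follows from the piecewise linearity of $\hat{Q}(x;\cdot)$ along the segment $W_s$, exactly as in \cite{allen2018convergence, gao2019convergence}.
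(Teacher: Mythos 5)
Your proposal follows essentially the same route as the paper's proof: the same Taylor-remainder integral representation combined with Lemma \ref{1223124} for \eqref{20030612441}, the same splitting of $\overline{g}(t)-\overline{g}_0(t)$ into a $(\delta-\delta_0)\cdot\nabla_W\hat{Q}$ term and a $\delta_0\cdot(\nabla_W\hat{Q}(x;W(t))-\nabla_W\hat{Q}(x;W(0)))$ term with the identical three ingredient bounds for \eqref{20030612442}, and the same direct bound $\|g(t)\|_2=O(BH^{5/2}\log m)$ (gradient norm $O(H)$ times residual $O(BH^{3/2}\log m)$) for \eqref{20030612443}. It is correct, with only cosmetic differences (e.g., you bound $\|g(t)-\overline{g}(t)\|_2$ by $2\sup\|g(t)\|_2$ where the paper invokes Jensen's inequality plus the triangle inequality).
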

\begin{proof}
We prove the three equalities one by one.
\vskip4pt
\noindent
\textbf{Proof of \eqref{20030612441}:}
Note that $W\in S_B$ implies $\|W-W(0)\|_2\le B\sqrt{H}$. Hence, we have
\$
&\hat{Q}\bigl(x;W(t)\bigr)-\hat{Q}_0\bigl(x;W(t)\bigr) \\
&\quad = \hat{Q}\bigl(x;W(0)\bigr) + \int_{0}^1 \bigl\la \nabla_W \hat{Q}\bigl(x;(1-s)W(0)+sW(t)\bigr),W(t)-W(0) \bigr\ra ds \\
&\quad\qquad - \hat{Q}\bigl(x;W(0)\bigr) - \bigl\la \nabla_W \hat{Q}\bigl(x;W(0)\bigr), W(t)-W(0) \bigr\ra \\
&\quad=
\int_{0}^1 \bigl\la \nabla_W \hat{Q}\bigl(x;(1-s)W(0)+sW(t)\bigr)-\nabla_W \hat{Q}\bigl(x;W(0)\bigr),W(t)-W(0) \bigr\ra ds \\
&\quad \le \int_{0}^1 \bigl\| \nabla_W \hat{Q}\bigl(x;(1-s)W(0)+sW(t)\bigr)-\nabla_W \hat{Q}\bigl(x;W(0)\bigr) \bigr\|_2 \cdot \| W(t)-W(0) \|_2 ds.
\$
Applying Lemma \ref{1223124} and the fact that $(1-s)W(0)+sW(t)\in S_B$ for any $s\in[0,1]$, we obtain \eqref{20030612441}.

\vspace{4pt}
\noindent
\textbf{Proof of \eqref{20030612442}:}
Similar to \eqref{518425}, we have
\#\label{1223555}
&\| \overline{g}(t) - \overline{g}_0(t) \|^2_2 \notag\\
&\quad\le 2\Bigl\| \EE_\mu\Bigl[\Bigl(\delta\bigl(x,r,x';W(t)\bigr)-\delta_0\bigl(x,r,x';W(t)\bigr)\Bigr)\cdot \nabla_W\hat{Q}\bigl(x;W(t)\bigr) \Bigr]  \Bigr\|_2^2 \notag \\
&\quad\qquad  + 2\Bigl\| \EE_\mu\Bigl[\delta_0\bigl(x,r,x';W(t)\bigr)\cdot \Bigl(\nabla_W\hat{Q}\bigl(x;W(t)\bigr)-\nabla_W\hat{Q}\bigl(x;W(0)\bigr) \Bigr)\Bigr]  \Bigr\|_2^2 \notag\\
&\quad\le 2\underbrace{\EE_\mu\Bigl[\bigl|\delta\bigl(x,r,x';W(t)\bigr)-\delta_0\bigl(x,r,x';W(t)\bigr)\bigr|^2\Bigr]}_{\displaystyle{\text{(i)}}}
\cdot \underbrace{\EE_\mu\Bigl[ \bigl\|\nabla_W\hat{Q}\bigl(x;W(t)\bigr)   \bigr\|_2^2\Bigr]}_{\displaystyle{\text{(ii)}}} \\
&\quad\qquad  + 2\underbrace{\EE_\mu\Bigl[ \bigl|\delta_0\bigl(x,r,x';W(t)\bigr)\bigr|^2\Bigr]}_{\displaystyle{\text{(iii)}}}
\cdot \underbrace{ \EE_\mu\Bigl[\bigl\|\nabla_W\hat{Q}\bigl(x;W(t)\bigr)-\nabla_W\hat{Q}\bigl(x;W(0)\bigr) \bigr\|_2^2\Bigr] }_{\displaystyle{\text{(iv)}}}. \notag
\#
To upper bound (i), recall the definitions of $\delta(x,r,x';W)$ and $\delta_0(x,r,x';W)$, 
\$
\delta\bigl(x,r,x';W(t)\bigr)&=\hat{Q}\bigl(x;W(t)\bigr)-r-\gamma\hat{Q}\bigl(x';W(t)\bigr),\\
\delta_0\bigl(x,r,x';W(t)\bigr)&=\hat{Q}_0\bigl(x;W(t)\bigr)-r-\gamma\hat{Q}_0\bigl(x';W(t)\bigr).
\$
Following from \eqref{20030612441}, which is proved previously, we have
\$
&\bigl|\delta\bigl(x,r,x';W(t)\bigr)-\delta_0\bigl(x,r,x';W(t)\bigr)\bigr| \\
&\quad\le \bigl|\hat{Q}\bigl(x;W(t)\bigr)-\hat{Q}_0\bigl(x;W(t)\bigr)\bigr|+\bigl|\hat{Q}\bigl(x';W(t)\bigr)-\hat{Q}_0\bigl(x';W(t)\bigr)\bigr| \\
&\quad = O(B^{4/3}m^{-1/6}H^{3}\log^{1/2} m),
\$
which implies
\#\label{12235531}
\EE_\mu\Bigl[ \bigl|\delta\bigl(x,r,x';W(t)\bigr)-\delta_0\bigl(x,r,x';W(t)\bigr)\bigr|^2\Bigr] = O(B^{8/3}m^{-1/3}H^{6}\log m).
\#
To upper bound (ii), by Lemma \ref{1223124} we have
\#\label{12235532}
\EE_\mu\Bigl[\bigl\| \nabla_W\hat{Q}\bigl(x;W(t)\bigr) \bigr\|_2^2 \Bigr] = O(H^2). 
\#
To upper bound (iii), by the triangle inequality we have
\#\label{1223511}
\bigl|\delta_0\bigl(x,r,x';W(t)\bigr)\bigr| \le
\bigl|\hat{Q}_0\bigl(x;W(t)\bigr)\bigr| + |r| + \bigl|\gamma\hat{Q}_0\bigl(x';W(t)\bigr)\bigr|.
\#
By \eqref{200305624}, we have
\#\label{1223512}
\bigl|\hat{Q}_0\bigl(x;W(t)\bigr)\bigr| &\le \bigl|\hat{Q}_0\bigl(x;W(0)\bigr)\bigr| + \big\| \nabla_W \hat{Q}_0\bigl(x, W(0)\bigr) \bigr\|_2 \cdot \| W(t) - W(0) \|_2 \notag \\
&=O(\log m)+O(H)\cdot O(BH^{1/2})=O(BH^{3/2}\log m).
\#
Also, the same upper bound holds for $|\hat{Q}_0(x';W(t))|$. Thus, from \eqref{1223511} we obtain
\#\label{1223513}
\EE_{\mu}\Bigl[ \bigl|\delta_0\bigl(x,r,x';W(t)\bigr)\bigr|^2\Bigr] = O(B^2H^3\log^2m).
\#
To upper bound (iv), by Lemma \ref{1223124} we have
\$
\bigl\|\nabla_W\hat{Q}\bigl(x;W(t)\bigr)-\nabla_W\hat{Q}\bigl(x;W(0)\bigr) \bigr\|_2 = O(B^{1/3}m^{-1/6}H^{5/2}\log^{1/2} m),
\$
which implies
\#\label{12235534}
\EE_\mu\Bigl[\bigl\|\nabla_W\hat{Q}\bigl(x;W(t)\bigr)-\nabla_W\hat{Q}\bigl(x;W(0)\bigr) \bigr\|_2^2\Bigr]=O(B^{2/3}m^{-1/3}H^{5}\log m).
\#
Plugging \eqref{12235531}, \eqref{12235532}, \eqref{1223513}, and \eqref{12235534} into \eqref{1223555}, we obtain
\$
\| \overline{g}(t) - \overline{g}_0(t) \|^2_2&=O(B^{8/3}m^{-1/3}H^{8}\log m) + O(B^{8/3}m^{-1/3}H^{8}\log^3 m)\\
&=O(B^{8/3}m^{-1/3}H^{8}\log^3 m).
\$
\vspace{4pt}
\noindent
\textbf{Proof of \eqref{20030612443}:}
By the redefinition of $g(t)$, we have
\#\label{200306115}
\|g(t)\|_2^2&= \bigl\| \delta\bigl(x,r,x';W(t)\bigr)\cdot \nabla_W \hat{Q}\bigl(x;W(t)\bigr)\bigr\|_2^2\notag \\
&=\bigl|\delta\bigl(x,r,x';W(t)\bigr)\bigr|^2 \cdot \bigl\|  \nabla_W \hat{Q}\bigl(x;W(t)\bigr) \bigr\|_2^2  .
\#
By \eqref{1223511} and \eqref{1223512}, we have
\#\label{200306114}
\bigl|\delta\bigl(x,r,x';W(t)\bigr)\bigr|^2 =O(B^2H^3\log^2m).
\#
Meanwhile, by Lemma \ref{1223124} we have
\#\label{200306113}
\bigl\|  \nabla_W \hat{Q}\bigl(x;W(t)\bigr) \bigr\|_2^2 = O(H^2).
\#
Combining \eqref{200306115}, \eqref{200306114}, and \eqref{200306113}, we obtain $\|g(t)\|_2^2 = O(B^2H^5\log^2m)$. Also, by the definition of $\overline{g}(t)$ and Jensen's inequality, we have
\$
\|\overline{g}(t)\|_2^2 \le \EE_\mu\bigl[\|g(t)\|_2^2\bigr]=O(B^2H^5\log^2m).
\$
Thus, by the triangle inequality, we obtain $\| g(t) - \overline{g}(t) \|_2^2=O(B^2H^5\log^2m)$.

Therefore, we conclude the proof of Lemma \ref{1223523}.
\end{proof}

Now we present the global convergence of neural TD with a multi-layer neural network.

\begin{theorem}\label{1223956}
Let $m=\Omega(d^{3/2}\log^{3/2}(m^{1/2}/B)/(BH^{3/2}))$ and $B=O(m^{1/2}H^{-6}\log^{-3} m)$. We set $\eta=1/\sqrt{T}$ and $H=O(T^{1/4})$ in Algorithm \ref{td0}. With probability at least $1-e^{-\Omega(\log^2 m)}$ with respect to the random initialization,  the output $\hat{Q}_{\text{out}}$ of Algorithm \ref{td0} satisfies
\$
\EE_{\overline{W},\mu}\bigl[
\bigl(\hat{Q}_{\text{out}}(x)-\hat{Q}_0(x;W^*)\bigr)^2 \bigr]=O(B^2H/\sqrt{T}+B^{8/3}m^{-1/6} H^8  \log^3 m),
\$
where the expectation is taken with respect to the randomness of $\overline{W}$ in Algorithm \ref{td0} and $x \sim \mu$.
\end{theorem}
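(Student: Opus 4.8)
The plan is to reproduce the argument for the two-layer case (Theorem~\ref{mainthm2}) almost verbatim, with the two-layer estimates of Lemmas~\ref{qdiff}, \ref{gdiff}, and \ref{bdvar} replaced by their multi-layer counterparts in Lemma~\ref{1223523}, and with all deterministic, initialization-dependent inequalities invoked only on the high-probability event of Lemma~\ref{1223124}. The first observation is that the redefined local linearization $\hat{Q}_0(x;W)$ in \eqref{200305624} is \emph{affine} in $W$, so $\nabla_W\hat{Q}_0(x;W)=\nabla_W\hat{Q}(x;W(0))$ is constant in $W$. Consequently, the existence and uniqueness of the approximate stationary point $W^*$ follow from the same contraction argument as in Lemma~\ref{unique} (here $\cT^{\pi}$ is a $\gamma$-contraction and $\Pi_{\cF_{B,m}}$ is nonexpansive onto the convex set $\cF_{B,m}$), and the one-point monotonicity computation \eqref{57137} carries over unchanged, giving $(\overline{g}_0(t)-\overline{g}_0^*)^\top(W(t)-W^*)\ge(1-\gamma)\cdot\EE_\mu[(\hat{Q}_0(x;W(t))-\hat{Q}_0(x;W^*))^2]$.

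Next I would establish the multi-layer analog of the stochastic descent lemma (Lemma~\ref{sdl}). Starting from the nonexpansiveness of the projection, $\|W(t+1)-W^*\|_2^2\le\|W(t)-W^*\|_2^2-2\eta\,(g(t)-\overline{g}_0^*)^\top(W(t)-W^*)+\eta^2\|g(t)-\overline{g}_0^*\|_2^2$, I take the conditional expectation, bound the inner product from below via the one-point monotonicity above (paying an error of order $B\sqrt{H}\,\|\overline{g}(t)-\overline{g}_0(t)\|_2$, since now $\|W(t)-W^*\|_2\le 2B\sqrt{H}$), and split $\EE[\|g(t)-\overline{g}_0^*\|_2^2]=\|\overline{g}(t)-\overline{g}_0^*\|_2^2+\EE[\|g(t)-\overline{g}(t)\|_2^2]$. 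The only structural change from the two-layer case enters here: the bound $\|\overline{g}_0(t)-\overline{g}_0^*\|_2^2\le\|\nabla_W\hat{Q}_0(x;W(0))\|_2^2\cdot\EE_\mu[(\hat{Q}_0(x;W(t))-\hat{Q}_0(x;W^*))^2]$ now costs a factor $\|\nabla_W\hat{Q}_0(x;W(0))\|_2^2=O(H^2)$ by Lemma~\ref{1223124}, rather than the constant $4$ used in \eqref{57139}. Thus the per-step decrease coefficient becomes $2\eta(1-\gamma)-O(\eta^2H^2)$, while the error of local linearization $\|\overline{g}(t)-\overline{g}_0(t)\|_2$ and the variance $\|g(t)-\overline{g}(t)\|_2^2$ are controlled by \eqref{20030612442} and \eqref{20030612443} of Lemma~\ref{1223523}.

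I would then telescope this descent inequality over $t=0,\ldots,T-1$, using $\|W(0)-W^*\|_2^2\le HB^2$ (each of the $H$ blocks of $W^*$ lies within Frobenius distance $B$ of $W(0)$). Dividing by $T$ times the coefficient $2\eta(1-\gamma)-O(\eta^2H^2)$ and setting $\eta=1/\sqrt{T}$, $H=O(T^{1/4})$ keeps $\eta^2H^2=O(T^{-1/2})$ strictly dominated by $\eta(1-\gamma)$, so the coefficient stays positive of order $\eta(1-\gamma)$ and its $T$-multiple is of order $\sqrt{T}$. This produces the leading $O(B^2H/\sqrt{T})$ term together with the accumulated local-linearization error, which after division by the $\Theta(\sqrt{T})$ denominator yields a contribution of the stated order $O(B^{8/3}m^{-1/6}H^8\log^3 m)$ originating from \eqref{20030612442}. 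Finally, since $\hat{Q}_0(\cdot\,;W)$ is affine in $W$, Jensen's inequality passes the iterate-averaged bound to the averaged weight $\overline{W}$, and \eqref{20030612441} of Lemma~\ref{1223523} converts $\hat{Q}_0(\cdot\,;\overline{W})$ into the output $\hat{Q}_{\text{out}}(\cdot)=\hat{Q}(\cdot\,;\overline{W})$, exactly as \eqref{520913}--\eqref{451032} do in the two-layer proof.

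The hard part is the coupling between the depth $H$ and the stepsize through the descent coefficient: because the gradient norm inflates to $O(H)$, the ``bad'' quadratic term scales as $\eta^2H^2$ and must be kept below the ``good'' linear term $\eta(1-\gamma)$; this tension is precisely what forces the scaling $H=O(T^{1/4})$ once $\eta=1/\sqrt{T}$, and tracking the resulting powers of $H$ through the linearization and variance estimates of Lemma~\ref{1223523} is the most delicate bookkeeping. A second, more conceptual difference from the two-layer analysis is that Lemmas~\ref{1223124} and \ref{1223523} hold only on an event of probability $1-e^{-\Omega(\log^2 m)}$ rather than in expectation over the initialization; hence the entire descent recursion must be run conditionally on this event, and the final expectation $\EE_{\overline{W},\mu}$ is taken over the sampling and state randomness alone. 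One must also check that the width condition $m=\Omega(d^{3/2}\log^{3/2}(m^{1/2}/B)/(BH^{3/2}))$ and the radius condition $B=O(m^{1/2}H^{-6}\log^{-3}m)$ required by Lemma~\ref{1223124} remain compatible with the chosen scaling $H=O(T^{1/4})$.
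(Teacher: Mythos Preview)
Your proposal is correct and follows essentially the same route as the paper: reestablish the stochastic descent lemma with the $O(H^2)$ inflation in the quadratic term and the $O(BH^{1/2})$ cross term, invoke Lemma~\ref{1223523} on the high-probability event of Lemma~\ref{1223124}, telescope with $\eta=1/\sqrt{T}$ and $H=O(T^{1/4})$, and finish via Jensen and \eqref{20030612441}. The paper's proof is organized exactly this way, and your identification of the $\eta^2H^2$ versus $\eta(1-\gamma)$ balance as the driver of the $H=O(T^{1/4})$ constraint matches the paper's reasoning precisely.
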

\begin{proof}
We first reestablish Lemma \ref{sdl}. Similar to \eqref{1223731}, for any $W(t)\ (t\in[T])$, the convexity of $S_B$ implies
\#\label{1225406}
\|W(t+1)-W^*\|^2_2&=\bigl\|\Pi_{S_B}\bigl(W(t)-\eta\cdot g(t)\bigr)-\Pi_{S_B}(W^*-\eta\cdot\overline{g}^*_0)\bigr\|_2^2\\
&\le \bigl\|\bigl(W(t)-\eta\cdot g(t)\bigr)-(W^*-\eta\cdot\overline{g}^*_0)\bigr\|_2^2\notag\\
&=\|W(t)-W^*\|_2^2-2\eta\cdot\bigl(g(t)-\overline{g}_0^*\bigr)^\top \bigl(W(t)-W^*\bigr)+\eta^2\cdot\|g(t)-\overline{g}_0^*\|_2^2.\notag
\#
Taking expectation on both sides with respect to the tuple $(x,r,x')\sim\mu$ conditional on $W(t)$, we obtain
\#\label{5161100deep}
&\EE_\mu\bigl[\|W(t+1)-W^*\|_2^2\,\big|\,W(t)\bigr]\notag\\
&\quad \le\|W(t)-W^*\|_2^2-2\eta\cdot\bigl(\overline{g}(t)-\overline{g}_0^*\bigr)^\top \bigl(W(t)-W^*\bigr) +\eta^2\cdot \EE_\mu\bigl[\|g(t)-\overline{g}_0^*\|^2_2\,\big|\,W(t)\bigr].
\#
For the inner product $(\overline{g}(t)-\overline{g}_0^*)^\top (W(t)-W^*)$ on the right-hand side of \eqref{5161100deep}, following the same proof of \eqref{5161023} and \eqref{57137}, we have
\#\label{0225104}
&\bigl(\overline{g}(t)-\overline{g}_0^*\bigr)^\top \bigl(W(t)-W^*\bigr) \\
&\quad=\bigl(\overline{g}_0(t)-\overline{g}_0^*\bigr)^\top \bigl(W(t)-W^*\bigr)+ \bigl(\overline{g}(t)-\overline{g}_0\bigr)^\top \bigl(W(t)-W^*\bigr)\notag\\
&\quad\ge (1-\gamma)\cdot \EE_\mu\Bigl[
\Bigl(\hat{Q}_0\bigl(x;W(t)\bigr)-\hat{Q}_0(x;W^*)\Bigr)^2\,\Big|\, W(t)\Bigr] -O(BH^{1/2}) \cdot \|\overline{g}(t)-\overline{g}_0(t)\|_2. \notag
\#
Meanwhile, for $\EE_\mu[\|g(t)-\overline{g}_0^*\|^2_2\,|\,W(t)]$ on the right-hand side of \eqref{5161100deep}, we have the decomposition
\#\label{0225105}
\EE_\mu\bigl[
\|g(t)-\overline{g}_0^*\|^2_2\,\big|\,W(t)
\bigr] &=
\|\overline{g}(t)-\overline{g}_0^*\|^2_2+
\EE_{\mu}\bigl[
\|g(t)-\overline{g}(t)\|^2_2
\,\big|\,W(t) \bigr] \notag\\
&\le O(H^2)\cdot\EE_\mu\Bigl[
\Bigl(\hat{Q}_0\bigl(x;W(t)\bigr)-\hat{Q}_0(x;W^*)\Bigr)^2\,\Big|\,W(t)\Bigr] \\
&\qquad  + 2\|\overline{g}(t)-\overline{g}_0(t)\|_2^2  + \EE_{\mu}\bigl[
\|g(t)-\overline{g}(t)\|^2_2
\,\big|\,W(t) \bigr]. \notag
\#
Here the inequality follows from \eqref{57138} and \eqref{57139}, where we plug $\| \nabla_W \hat{Q}(x;W)\|_2=O(H)$ into \eqref{57139} instead of $\| \nabla_W \hat{Q}(x;W)\|_2\le 1$. Combining \eqref{0225104} with \eqref{0225105} and taking expectation on the both sides of \eqref{5161100deep}, we obtain the following inequality, which corresponds to Lemma \ref{sdl},
\#\label{1223803}
&\EE_{W(t+1)}\bigl[\|W(t+1)-W^*\|_2^2\bigr]\\
&\quad\le \EE_{W(t)}\bigl[\|W(t)-W^*\|_2^2\bigr]-\bigl(2\eta(1-\gamma)-O(\eta^2 H^2)\bigr)\cdot\EE_{W(t), \mu}\Bigl[
\Bigl(\hat{Q}_0\bigl(x;W(t)\bigr)-\hat{Q}_0(x;W^*)\Bigr)^2\Bigr] \notag\\
&\quad\qquad +\underbrace{\EE_{W(t)}\bigl[2\eta^2\cdot\|\overline{g}(t)-\overline{g}_0(t)\|^2_2+O(\eta BH^{1/2})\cdot\|\overline{g}(t)-\overline{g}_0(t)\|_2 \bigr]}_{\displaystyle \text{Error of Local Linearization}}+\underbrace{\EE_{W(t), \mu} \bigl[ \eta^2\cdot \|g(t)-\overline{g}(t)\|^2_2\bigr]}_{\displaystyle \text{Variance of Semigradient}}. \notag
\#
Here the expectation on the left-hand side is taken with respect to the randomness of $W(t+1)$, which is determined by $W(t)$ and the tuple $(x,r,x')\sim\mu$ drawn at the current iteration. Applying Lemma \ref{1223523} to \eqref{1223803}, we have
\#\label{1223804}
&\bigl(2\eta(1-\gamma)-O(\eta^2 H^2)\bigr)\cdot\EE_{W(t), \mu}\Bigl[
\Bigl(\hat{Q}_0\bigl(x;W(t)\bigr)-\hat{Q}_0(x;W^*)\Bigr)^2\Bigr]  \\
&\quad \le \EE_{W(t)}\bigl[\|W(t)-W^*\|_2^2\bigr] - \EE_{W(t+1)}\bigl[\|W(t+1)-W^*\|_2^2\bigr] \notag\\
&\quad\qquad + O(\eta^2 B^{8/3}m^{-1/3} H^8 \log^3 m + \eta B^{7/3}m^{-1/6}H^{9/2}\log^{3/2}m + \eta^2 B^2H^5\log^2 m )\notag\\
&\quad \le
 \EE_{W(t)}\bigl[\|W(t)-W^*\|_2^2\bigr] - \EE_{W(t+1)}\bigl[\|W(t+1)-W^*\|_2^2\bigr] 
 + O(\eta B^{8/3}m^{-1/6} H^8 \log^3 m).\notag
\#
Let $\eta=1/\sqrt{T}$ and $H=O(T^{1/4})$. We have
\$
2\eta(1-\gamma)-O(\eta^2 H^2)=\Omega(1/\sqrt{T}).
\$
Telescoping \eqref{1223804} for $t=0,1,\ldots, T-1$, we obtain
\#\label{1225429}
&\frac{1}{T}\sum_{t=0}^{T-1}\EE_{W(t), \mu}\Bigl[
\Bigl(\hat{Q}_0\bigl(x;W(t)\bigr)-\hat{Q}_0(x;W^*)\Bigr)^2\Bigr]  \notag\\
&\quad\le O(1/\sqrt{T}) \cdot \|W(0)-W^*\|_2^2 + O(B^{8/3}m^{-1/6} H^8 \log^3 m)\notag\\
&\quad = O(B^2H/\sqrt{T}+B^{8/3} m^{-1/6} H^8  \log^3 m).
\#
Here the expectation is taken conditional on the random initialization of $\hat{Q}$. Following the same proof of Theorem \ref{mainthm2}, that is, using the triangle inequality and the upper bound of $|\hat{Q}(x;W(t)) - \hat{Q}_0(x;W(t))|$ in Lemma \ref{1223523}, we conclude the proof of Theorem \ref{1223956}.
\end{proof}

%\end{flushleft}

%%%%%%%%%%%%%%%%%%%%%%%%%%%%%%%%%%%%%%%%%%%%%%%%%%

% !TEX root = neural_TD.tex
%\begin{flushleft}
\section{Extension to Markov Sampling}\label{secmarkov}

Previously, we assume that the tuples $(x,r,x')$ in Algorithm \ref{td0} are independently sampled from the stationary distribution $\mu$ of policy $\pi$. In this section, we weaken such an assumption by allowing the tuples to be sequentially sampled from the $\beta$-mixing Markov chain induced by policy $\pi$. Our analysis extends Section 8 in \cite{bhandari2018finite}, which focuses on the setting with linear function approximation. In contrast, we stick to the same setting as in Appendix \ref{deep}, where the Q-function is parametrized by a multi-layer neural network. For notational simplicity, we omit the conditioning on the random initialization in all the following expectations.

The following assumption states that the Markov chain of states is $\beta$-mixing.
\begin{assumption}\label{asmpm}
There exist constants $\iota>0$ and $\beta\in(0,1)$ such that
\$
\sup_{s\in\cS} d_{\text{TV}} \bigl(\PP_t(\cdot\,|\,s_0=s), \mu_{\cS} \bigr) \le \iota \cdot \beta^t,
\$
where $\PP_t(\cdot\,|\,s_0=s)$ is the conditional distribution of $s_t$ given $s_0=s$, $\mu_\cS$ is the marginal distribution of $s$ under the stationary distribution $\mu$, and $d_{\text{TV}}$ denotes the total variation distance.
\end{assumption}

In the following, we establish the counterpart of Theorem \ref{1223956} under Markov sampling.
Taking conditional expectation on the both sides of \eqref{1225406}, we have
\#\label{1225417}
&\EE\bigl[\|W(t+1)-W^*\|_2^2\,\big|\,W(t)\bigr]\notag\\
&\quad \le\|W(t)-W^*\|_2^2-2\eta\cdot\bigl(\overline{g}(t)-\overline{g}_0^*\bigr)^\top \bigl(W(t)-W^*\bigr) 
 +\eta^2\cdot \EE\bigl[\|g(t)-\overline{g}_0^*\|^2_2\,\big|\,W(t)\bigr]\\
&\quad\qquad
+ \underbrace{2\eta\cdot\bigl(\overline{g}(t)-\EE[g(t)\,|\,W(t)]\bigr)^\top \bigl(W(t)-W^*\bigr)}_{\displaystyle \text{Markov Sampling Bias}}.\notag
\#
Here recall that $\overline{g}(t)$ is the population semigradient defined with respect to the stationary distribution. Rearranging terms in \eqref{1225417}, we have
\#\label{0225338}
&\bigl(\overline{g}(t)-\overline{g}_0^*\bigr)^\top \bigl(W(t)-W^*\bigr)\\
&\quad \le (2\eta)^{-1}\cdot\Bigl( \|W(t)-W^*\|_2^2 - \EE\bigl[\|W(t+1)-W^*\|_2^2\,\big|\,W(t)\bigr] \Bigr)
 +\eta/2\cdot \EE\bigl[\|g(t)-\overline{g}_0^*\|^2_2\,\big|\,W(t)\bigr]\notag\\
&\quad\qquad
+ \bigl(\overline{g}(t)-\EE[g(t)\,|\,W(t)]\bigr)^\top \bigl(W(t)-W^*\bigr).\notag
\#
Plugging \eqref{0225104} and \eqref{0225105} into \eqref{0225338}, and taking expectation with respect to the current iterate $W (t)$ and the tuple $(x,r,x')$ drawn at the current iteration, similar to \eqref{1223803} we obtain
\$
&\EE\bigl[\|W(t+1)-W^*\|_2^2\bigr]\\
&\quad\le \EE\bigl[\|W(t)-W^*\|_2^2\bigr]-\bigl(2\eta(1-\gamma)-O(H^2\eta^2)\bigr)\cdot\EE_{\mu}\Bigl[
\Bigl(\hat{Q}_0\bigl(x;W(t)\bigr)-\hat{Q}_0(x;W^*)\Bigr)^2\Bigr] \\
&\quad\qquad +\underbrace{\EE\bigl[2\eta^2\cdot\|\overline{g}(t)-\overline{g}_0(t)\|^2_2+2\eta BH^{1/2}\cdot\|\overline{g}(t)-\overline{g}_0(t)\|_2 \bigr]}_{\displaystyle \text{Error of Local Linearization}}+\underbrace{\EE\bigl[\eta^2\cdot\|g(t)-\overline{g}(t)\|^2_2\bigr]}_{\displaystyle \text{Variance of Semigradient}}  \\
&\quad\qquad + \underbrace{2\eta\cdot\EE\bigl[ \bigl(\overline{g}(t)-g(t)\bigr)^\top \bigl(W(t)-W^*\bigr)\bigr]}_{\displaystyle \text{Markov Sampling Bias}}. 
\$
%Here the expectation $\EE[\|g(t)-\overline{g}(t)\|^2_2]$ has the same upper bound as that of $\EE_\mu[\|g(t)-\overline{g}(t)\|^2_2]$ in Lemma \ref{1223523}, since its proof does not rely on the property of distribution $\mu$. 
Thus, similar to \eqref{1225429} we obtain
\#\label{1225933}
&\frac{1}{T}\sum_{t=0}^{T-1}\EE_{\mu}\Bigl[
\Bigl(\hat{Q}_0\bigl(x;W(t)\bigr)-\hat{Q}_0(x;W^*)\Bigr)^2\Bigr]  \\
&\quad = O(B^2HT^{-1/2}+B^{8/3}H^8m^{-1/6}  \log^3 m) + O\Bigl(\frac{1}{T}\sum_{t=0}^{T-1}\EE\bigl[\bigl(\overline{g}(t)-g(t)\bigr)^\top \bigl(W(t)-W^*\bigr)\bigr]\Bigr).\notag
\#
To upper bound the left-hand side of \eqref{1225933}, we upper bound $\EE[(\overline{g}(t)-g(t))^\top(W(t)-W^*)]$ in the following lemma.

\begin{lemma}\label{1225741} Let $m=\Omega(d^{3/2}\log^{3/2}(m^{1/2}/B)/(BH^{3/2}))$ and $B=O(m^{1/2}H^{-6}\log^{-3} m)$. We set $\eta=1/\sqrt{T}$ and $H=O(T^{1/4})$ in Algorithm \ref{td0}, where the tuples $(x_t,r_t,x_{t+1})$ are sampled from a Markov chain satisfying Assumption \ref{asmpm}. With probability at least $1-e^{-\Omega(\log^2 m)}$ with respect to the random initialization, for all $t=0,1,\ldots,T-1$, it holds that
\$
\EE\bigl[\bigl(\overline{g}(t)-g(t)\bigr)^\top \bigl(W(t)-W^*\bigr)\bigr] = O(B^2H^5/\sqrt{T}\cdot\log^2 m\cdot \log T + B^{7/3}m^{-1/6}H^{9/2}\log^{3/2} m).
\$
\end{lemma}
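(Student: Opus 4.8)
The plan is to control the \emph{Markov sampling bias} $\EE[(\overline{g}(t)-g(t))^\top(W(t)-W^*)]$ by a mixing-time argument in the spirit of Section~8 of \cite{bhandari2018finite}, adapted to the neural semigradient. This term is nonzero precisely because the tuple $O_t=(x_t,r_t,x_{t+1})$ drawn at iteration $t$ is correlated with the iterate $W(t)$, which is a deterministic function of $O_0,\ldots,O_{t-1}$; had $O_t$ been an independent draw from $\mu$, we would have $\EE[g(t)\,|\,W(t)]=\overline{g}(t)$ and the bias would vanish. To decouple them I would fix a mixing time $\tau=\Theta(\log T)$ and compare $W(t)$ against the earlier iterate $W(t-\tau)$, which is measurable with respect to the $\sigma$-field $\mathcal{F}_{t-\tau}$ generated by $O_0,\ldots,O_{t-\tau-1}$ and is therefore almost independent of $O_t$ by Assumption \ref{asmpm}.

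Writing $\overline{g}(t)-g(t)=\overline{g}(W(t))-g(W(t),O_t)$, I would insert the intermediate iterate $W(t-\tau)$ both inside the semigradients and inside the factor $W(t)-W^*$, splitting the inner product into three families: (i) \emph{iterate-drift} terms carrying a factor $W(t)-W(t-\tau)$ or a gradient difference; (ii) the \emph{clean conditional-bias} term $\EE[(\overline{g}(W(t-\tau))-g(W(t-\tau),O_t))^\top(W(t-\tau)-W^*)]$; and (iii) \emph{nonlinearity-correction} terms produced by shifting gradient maps in $W$. For (i), I would use $\|W(t)-W(t-\tau)\|_2\le\sum_{i=t-\tau}^{t-1}\eta\|g(i)\|_2\le\tau\eta\cdot O(BH^{5/2}\log m)$ (valid despite the projection, which is nonexpansive) via the pointwise bound $\|g(i)\|_2^2=O(B^2H^5\log^2 m)$ from \eqref{20030612443}, together with the diameter bound $\|W(t-\tau)-W^*\|_2=O(BH^{1/2})$ on $S_B$; with $\tau\eta=O(\log T/\sqrt T)$ this produces the leading term $O(B^2H^5\log^2 m\cdot\log T/\sqrt T)$. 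For (ii), conditioning on $\mathcal{F}_{t-\tau}$ makes $W(t-\tau)-W^*$ deterministic, so the term is at most $\|W(t-\tau)-W^*\|_2$ times the norm of $\EE[g(W(t-\tau),O_t)\,|\,\mathcal{F}_{t-\tau}]-\overline{g}(W(t-\tau))$, which by the total-variation bound $\iota\beta^{\tau}$ and the pointwise bound on $\|g\|_2$ is $O(B^2H^3\log m\cdot\iota\beta^{\tau})$; choosing $\tau$ so that $\beta^{\tau}\le 1/\sqrt T$ renders this dominated by the leading term.

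The correction terms (iii) are where the neural setting departs from the linear analysis. Swapping $g(W(t),O_t)$ for $g(W(t-\tau),O_t)$ requires controlling $\|g(W(t),O_t)-g(W(t-\tau),O_t)\|_2$; since the ReLU semigradient is \emph{not} pointwise Lipschitz in $W$, I would split $g=\delta\cdot\nabla_W\hat Q$ and bound the residual difference by the genuine Lipschitzness of $\hat Q$ (its gradient has norm $O(H)$ by Lemma \ref{1223124}, so $|\hat Q(x;W(t))-\hat Q(x;W(t-\tau))|=O(H)\|W(t)-W(t-\tau)\|_2$, feeding back into family (i)), while the gradient-map difference is handled by the \emph{uniform} perturbation bound $\|\nabla_W\hat Q(x;W)-\nabla_W\hat Q(x;W(0))\|_2=O(B^{1/3}m^{-1/6}H^{5/2}\log^{1/2}m)$ of Lemma \ref{1223124}, combined with $|\delta(O_t;W(t))|=O(BH^{3/2}\log m)$ from \eqref{200306114}; multiplying by $\|W(t-\tau)-W^*\|_2=O(BH^{1/2})$ yields exactly the second reported term $O(B^{7/3}m^{-1/6}H^{9/2}\log^{3/2}m)$. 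The same perturbation bound, together with the affine structure of the linearized semigradient $\overline{g}_0$ (whose gradient factor $\nabla_W\hat Q_0(\cdot;W(0))$ is constant in $W$) and the nonlinearity bound \eqref{20030612442}, lets me estimate the gradient difference $\overline{g}(W(t))-\overline{g}(W(t-\tau))$ appearing in the remaining drift terms.

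The main obstacle, as anticipated, is the absence of pointwise Lipschitzness of the ReLU semigradient in $W$, which is exactly where the linear proof of \cite{bhandari2018finite} uses affinity of the update. Circumventing it forces the two-part split above---isolating the genuinely smooth scalar $\hat Q$ from the discontinuous gradient map and absorbing the latter through the initialization-anchored bound of Lemma \ref{1223124}---and requires carefully tracking the resulting powers of $H$ and $\log m$ so that the drift terms collapse to the $O(\log T/\sqrt T)$ rate after setting $\tau=\Theta(\log T)$, $\eta=1/\sqrt T$, and $H=O(T^{1/4})$. A minor additional point is the regime $t<\tau$, where one cannot step back a full $\tau$ iterations; there I would anchor at the deterministic $W(0)$ and use $\|W(t)-W(0)\|_2\le t\eta\cdot O(BH^{5/2}\log m)\le\tau\eta\cdot O(BH^{5/2}\log m)$, which is absorbed into the same leading term.
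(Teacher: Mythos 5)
Your proposal is correct and follows essentially the same route as the paper: the paper defines $\zeta_t(W)=\bigl(\tilde{g}(W)-g(t,W)\bigr)^\top(W-W^*)$, proves it is bounded and \emph{approximately} Lipschitz in $W$ — using exactly your two-part split of $g=\delta\cdot\nabla_W\hat{Q}$, with the ReLU gradient discontinuity absorbed through the initialization-anchored bound of Lemma \ref{1223124}, yielding the same additive error $O(B^{7/3}m^{-1/6}H^{9/2}\log^{3/2}m)$ — and then invokes Lemmas 10 and 11 of \cite{bhandari2018finite} for the mixing-time decoupling. The only difference is one of packaging: you inline the $\tau=\Theta(\log T)$ drift-plus-total-variation argument that the paper delegates to the cited lemmas, arriving at the identical leading term $O(B^2H^5\log^2 m\cdot\log T/\sqrt{T})$.
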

\begin{proof}
For notational simplicity, we define
\$
g(t,W)=\delta(x_t, r_t, x_{t+1};W)\cdot \nabla_W \hat{Q}(x_t,W),\quad &\tilde{g}(W)=\EE_\mu[\delta(x, r, x';W) \cdot\nabla_W \hat{Q}(x;W)],\\
g_0(t,W)=\delta_0(x_t, r_t, x_{t+1};W)\cdot \nabla_W \hat{Q}_0(x_t,W),\quad &\tilde{g}_0(W)=\EE_\mu[\delta_0(x, r, x';W)\cdot \nabla_W \hat{Q}_0(x;W)].
\$
In the following, we prove that the function $\zeta_t(W)=(\tilde{g}(W)-g(t,W))^\top(W-W^*)$ is bounded and approximately Lipschitz continuous. Then Lemma \ref{1225741} is a direct application of Lemmas 10 and 11 in \cite{bhandari2018finite}.

For any $W\in S_B$, by the definition of $\zeta_t(W)$ and the Cauchy-Schwarz inequality, we have
\$
| \zeta_t(W) | &= \bigl|\bigl(\tilde{g}(W)-g(t,W)\bigr)^\top(W-W^*)\bigr| \\
&\le \|\tilde{g}(W)-g(t,W)\|_2\cdot \|W-W^*\|_2
=O(B^2H^3\log m).
\$
Here we use the fact that $\|W-W^*\|\le 2BH^{1/2}$ and $\|\tilde{g}(W)-g(t,W)\|_2=O(BH^{5/2}\log m)$, which follows from the same proof of \eqref{20030612443} in Lemma \ref{1223523}.

Also, for any $W, W'\in S_B$, using the triangle inequality and the Cauchy-Schwarz inequality, we have
\#\label{1225841}
&| \zeta_t(W) - \zeta_t(W') |  \notag\\
&\quad= \bigl|\bigl(\tilde{g}(W)-g(t,W)\bigr)^\top(W-W^*) - \bigl(\overline{g}(W')-g(t,W')\bigr)^\top(W'-W^*)\bigr| \notag\\
&\quad= \bigl|\bigl(\tilde{g}(W)-g(t,W)\bigr)^\top(W-W') \notag\\
&\quad\qquad + \bigl(\tilde{g}(W)-g(t,W) - \overline{g}(W') + g(t,W')\bigr)^\top(W'-W^*)\bigr| \notag\\
&\quad\le \| \tilde{g}(W)-g(t,W)\|_2\cdot \|W-W'\|_2 \\
&\quad\qquad + \bigl( \|\tilde{g}(W)-\overline{g}(W')\|_2 +  \| g(t,W) - g(t,W')\|_2\bigr)\cdot \|W'-W^*\|_2, \notag
\#
where $\| \tilde{g}(W)-g(t,W)\|_2=O(BH^{5/2}\log m)$ following the same proof of \eqref{20030612443} in Lemma \ref{1223523}. Meanwhile, we have
\$
 &\|g(t,W)-g(t,W')\|_2 \notag\\
 &\quad = \| \delta(x_t, r_t, x_{t+1};W)\cdot \nabla_W \hat{Q}(x_t,W) - \delta(x_t, r_t, x_{t+1};W') \cdot\nabla_W \hat{Q}(x_t,W') \|_2 \notag\\
 &\quad \le
 \| \delta(x_t, r_t, x_{t+1};W) - \delta(x_t, r_t, x_{t+1};W')  \|_2 \cdot \| \nabla_W \hat{Q}(x_t,W) \|_2 \notag\\
 &\quad\qquad + \| \delta(x_t, r_t, x_{t+1};W') \|_2 \cdot \| \nabla_W \hat{Q}(x_t,W)  - \nabla_W \hat{Q}(x_t,W')  \|_2 \notag\\
 &\quad = O(H^2)\cdot \|W-W'\|_2 + O(BH^{3/2}\log m) \cdot O(B^{1/3}m^{-1/6}H^{5/2}\log^{1/2} m) \notag\\
 &\quad = O(H^2)\cdot \|W-W'\|_2 + O(B^{4/3}m^{-1/6}H^{4}\log^{3/2} m),
\$
where the first inequality follows from the triangle inequality and the Cauchy-Schwarz inequality, while the second equality is implied by Lemma \ref{1223124}. 
The same upper bound holds for $\|\tilde{g}(W)-\overline{g}(W')\|_2$, plugging which into \eqref{1225841} yields
\$
| \zeta_t(W) - \zeta_t(W') | \le O(BH^{5/2}\log m) \cdot \|W-W'\|_2 + O(B^{7/3}m^{-1/6}H^{9/2}\log^{3/2} m).
\$
Following the proof of the first inequality of Lemma 11 in \cite{bhandari2018finite}, we conclude the proof of Lemma \ref{1225741}. The only difference with Lemma 11 in \cite{bhandari2018finite} is that $\zeta_t(W)$ is only approximately Lipschitz continuous rather than exactly Lipschitz continuous, which incurs the additional term $O(B^{7/3}H^{9/2}m^{-1/6}\log^{3/2} m)$ in Lemma \ref{1225741}.
\end{proof}

Applying Lemma \ref{1225741} to \eqref{1225933}, we obtain the following theorem under Markov sampling.
\begin{theorem}[Convergence of Stochastic Update]\label{12271201}
Let $m=\Omega(d^{3/2}\log^{3/2}(m^{1/2}/B)/(BH^{3/2}))$ and $B=O(m^{1/2}H^{-6}\log^{-3} m)$. We set $\eta=1/\sqrt{T}$ and $H=O(T^{1/4})$ in Algorithm \ref{td0}, where the tuples $(x_t,r_t,x_{t+1})$ are sampled from a Markov chain satisfying Assumption \ref{asmpm}. With probability at least $1-e^{-\Omega(\log^2 m)}$ with respect to the random initialization,  the output $\hat{Q}_{\text{out}}$ of Algorithm \ref{td0} satisfies
\$
\EE_{\overline{W},\mu}\bigl[
\bigl(\hat{Q}_{\text{out}}(x)-\hat{Q}_0(x;W^*)\bigr)^2 \bigr]=O\bigl((B^2H^5/\sqrt{T}+B^{8/3}m^{-1/6}H^8 )\cdot \log^3 m\cdot\log T\bigr).
\$
\end{theorem}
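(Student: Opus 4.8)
The plan is to assemble the theorem from the per-step inequality that has already been telescoped in \eqref{1225933}, together with the bound on the Markov sampling bias supplied by Lemma \ref{1225741}. Concretely, \eqref{1225933} already expresses the averaged linearized error $\frac1T\sum_{t=0}^{T-1}\EE_\mu[(\hat{Q}_0(x;W(t))-\hat{Q}_0(x;W^*))^2]$ as the sum of a ``clean'' rate $O(B^2HT^{-1/2}+B^{8/3}H^8m^{-1/6}\log^3m)$ and a residual Markov-bias average $\frac1T\sum_t\EE[(\overline{g}(t)-g(t))^\top(W(t)-W^*)]$. The first step is therefore simply to substitute the bound from Lemma \ref{1225741} into that residual, which replaces it by $O(B^2H^5T^{-1/2}\log^2m\log T+B^{7/3}m^{-1/6}H^{9/2}\log^{3/2}m)$.

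Before collecting the orders, I would check the one quantitative point that makes the telescoping in \eqref{1225429}--\eqref{1225933} legitimate: the descent coefficient $2\eta(1-\gamma)-O(\eta^2H^2)$ must stay bounded below by $\Omega(1/\sqrt T)$. With $\eta=1/\sqrt T$ and $H=O(T^{1/4})$ one has $\eta^2H^2=O(T^{-1}\cdot T^{1/2})=O(T^{-1/2})=O(\eta)$, so provided the constant hidden in $H=O(T^{1/4})$ is small enough relative to $2(1-\gamma)$, the coefficient is $\Omega(\eta)=\Omega(1/\sqrt T)$, exactly as used in \eqref{1223804}. Granting this, I would merge the two $O(\cdot)$ bundles; the dominant survivors are $B^2H^5/\sqrt T$ (from the Markov bias) and $B^{8/3}m^{-1/6}H^8$ (from the local-linearization error), and under the standing scaling of $m$ and $B$ the remaining terms, including the $B^{7/3}m^{-1/6}H^{9/2}$ piece, are absorbed into the stated compact rate $O((B^2H^5/\sqrt T+B^{8/3}m^{-1/6}H^8)\log^3m\log T)$.

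It remains to pass from the averaged \emph{linearized} error at the iterates to the error of the actual network output $\hat{Q}_{\text{out}}(\cdot)=\hat{Q}(\cdot;\overline{W})$ at the averaged weight $\overline{W}$, which I would do exactly as in the proofs of Theorems \ref{mainthm2} and \ref{1223956}. Since $\hat{Q}_0(\cdot;W)$ is affine in $W$ by \eqref{200305624} and $\overline{W}$ is the running average of $\{W(t)\}$, Jensen's inequality converts the averaged bound into a bound on $\EE_\mu[(\hat{Q}_0(x;\overline{W})-\hat{Q}_0(x;W^*))^2]$. A triangle-inequality split $\hat{Q}_{\text{out}}(x)-\hat{Q}_0(x;W^*)=(\hat{Q}(x;\overline{W})-\hat{Q}_0(x;\overline{W}))+(\hat{Q}_0(x;\overline{W})-\hat{Q}_0(x;W^*))$ then reduces the remaining work to controlling the linearization gap; since $\overline{W}\in S_B$ by convexity of $S_B$, estimate \eqref{20030612441} of Lemma \ref{1223523} bounds its square by $O(B^{8/3}m^{-1/3}H^6\log m)$, which is dominated by the target rate.

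The genuinely hard analytic work is not in this final assembly but is already isolated in Lemma \ref{1225741}, where the Markov-sampling bias is tamed by showing that $\zeta_t(W)=(\tilde g(W)-g(t,W))^\top(W-W^*)$ is uniformly bounded and approximately Lipschitz and then invoking the $\beta$-mixing machinery of \cite{bhandari2018finite}. At the level of the theorem itself, the only points requiring care are bookkeeping rather than ideas: collapsing the many $O(\cdot)$ terms into one compact bound, and ensuring that the high-probability conclusions of Lemmas \ref{1223124}, \ref{1223523}, and \ref{1225741} all hold simultaneously on a single event of probability at least $1-e^{-\Omega(\log^2 m)}$, so that the final statement holds on that event.
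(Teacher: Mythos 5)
Your proposal is correct and follows essentially the same route as the paper's proof: substitute the Markov-bias bound of Lemma \ref{1225741} into the telescoped inequality \eqref{1225933}, collect the orders (noting the $B^{7/3}m^{-1/6}H^{9/2}$ term is absorbed), and then pass to $\hat{Q}_{\text{out}}$ via Jensen's inequality, the triangle inequality, and the linearization bound \eqref{20030612441} of Lemma \ref{1223523}, exactly as in the proofs of Theorems \ref{mainthm2} and \ref{1223956}. Your additional checks — that the descent coefficient $2\eta(1-\gamma)-O(\eta^2H^2)$ remains $\Omega(1/\sqrt{T})$ under $H=O(T^{1/4})$, and that the high-probability events of the supporting lemmas can be taken on a single common event — are sound bookkeeping that the paper leaves implicit.
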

\begin{proof}
By Lemma  \ref{1225741}, we have that the left-hand side of \eqref{1225933} satisfies
\$
&\frac{1}{T}\sum_{t=0}^{T-1}\EE_{\mu}\Bigl[
\Bigl(\hat{Q}_0\bigl(x;W(t)\bigr)-\hat{Q}_0(x;W^*)\Bigr)^2\Bigr]  \\
&\quad=O(B^2H/\sqrt{T}+B^{8/3}m^{-1/6}H^8  \log^3 m+B^2H^5/\sqrt{T}\cdot\log^2 m\cdot\log T + B^{7/3}m^{-1/6}H^{9/2}\log^{3/2} m)\\
&\quad = O\bigl((B^2H^5/\sqrt{T}+B^{8/3}m^{-1/6}H^8 )\cdot \log^3 m\cdot\log T\bigr).
\$
Hence, similar to the proof of Theorem \ref{mainthm2}, using the triangle inequality and the upper bound of $|\hat{Q}(x;W) - \hat{Q}_0(x;W)|$ given by \eqref{20030612441} in Lemma \ref{1223523}, we conclude the proof of Theorem \ref{12271201}.
\end{proof}
%\end{flushleft}

%%%%%%%%%%%%%%%%%%%%%%%%%%%%%%%%%%%%%%%%%%%%%%%%%%

\section{Auxiliary Lemmas}\label{auxiliarylemmas}
Under Assumption \ref{asmp1}, we establish the following auxiliary lemmas on the random initialization $W(0)$ and the stationary distribution $\mu$, which plays a key role in quantifying the error of local linearization. 
\begin{lemma}\label{mulm1}
There exists a constant $c_1>0$ such that for any random vector $W$ with $\|W-W(0)\|_2\le B$, it holds that
\$
\EE_{\text{init},\mu}\Bigl[ \frac{1}{m}\sum_{r=1}^m \ind\{|W_r(0)^\top x|\le  \|W_r-W_r(0)\|_2\} \Bigr]\le c_1B\cdot m^{-1/2}.
\$
\end{lemma}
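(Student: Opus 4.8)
The plan is to integrate out the fresh sample $x$ first, apply Assumption~\ref{asmp1} to each summand after rescaling the random initial weight $W_r(0)$ to a unit direction, and then convert the resulting sum into the claimed $m^{-1/2}$ rate via Cauchy--Schwarz against the constraint $\sum_r\|W_r-W_r(0)\|_2^2\le B^2$.

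I would write $\beta_r=\|W_r-W_r(0)\|_2$ and let $\mathcal{G}$ be the $\sigma$-algebra generated by all the randomness other than $x$, so that $W_r(0)$ and $\beta_r$ are $\mathcal{G}$-measurable while $x\sim\mu$ is independent of $\mathcal{G}$. For each $r$, since $W_r(0)\neq 0$ almost surely, set $u_r=W_r(0)/\|W_r(0)\|_2$; conditionally on $\mathcal{G}$ this is a fixed unit vector, and $|W_r(0)^\top x|\le\beta_r$ is equivalent to $|u_r^\top x|\le \beta_r/\|W_r(0)\|_2$. Applying Assumption~\ref{asmp1} conditionally on $\mathcal{G}$ with $w=u_r$ and $\tau=\beta_r/\|W_r(0)\|_2$ yields
\[
\PP\bigl(|W_r(0)^\top x|\le\beta_r \,\big|\,\mathcal{G}\bigr)\le c_0\cdot\frac{\beta_r}{\|W_r(0)\|_2}.
\]
By the tower property, the Cauchy--Schwarz inequality applied to the sum over $r$ (pathwise, using $\sum_r\beta_r^2\le B^2$), and then Jensen's inequality, this gives
\[
\EE_{\text{init},\mu}\Bigl[\tfrac{1}{m}\sum_{r=1}^m\ind\{|W_r(0)^\top x|\le\beta_r\}\Bigr]\le \frac{c_0 B}{m}\,\EE\Bigl[\Bigl(\sum_{r=1}^m\tfrac{1}{\|W_r(0)\|_2^2}\Bigr)^{1/2}\Bigr]\le \frac{c_0 B}{m}\Bigl(m\,\EE\bigl[\|W_1(0)\|_2^{-2}\bigr]\Bigr)^{1/2}.
\]

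The remaining step is the second-moment computation for the inverse norm. Since $W_1(0)\sim N(0,I_d/d)$, we have $\|W_1(0)\|_2^{-2}=d\,\|Z\|_2^{-2}$ for $Z\sim N(0,I_d)$, and $\|Z\|_2^2\sim\chi^2_d$ so that $\EE[\|Z\|_2^{-2}]=1/(d-2)$, which is finite exactly because $d>2$ --- this is the single place the standing assumption $d>2$ is used. Hence $\EE[\|W_1(0)\|_2^{-2}]=d/(d-2)$, and substituting this bounds the left-hand side by $c_0\sqrt{d/(d-2)}\cdot B\,m^{-1/2}$, so the claim holds with $c_1=c_0\sqrt{d/(d-2)}$. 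I expect the only delicate point to be the normalization trick that turns the random Gaussian direction into a unit vector so that Assumption~\ref{asmp1} applies; this introduces the factor $\|W_r(0)\|_2^{-1}$, whose second moment must be controlled (forcing $d>2$), and the $m^{-1/2}$ rate then emerges automatically from pairing $(\beta_r)_r$ with $(\|W_r(0)\|_2^{-1})_r$ in Cauchy--Schwarz.
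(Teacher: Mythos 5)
Your proof is correct and follows essentially the same route as the paper's: condition on everything but $x$, apply Assumption \ref{asmp1} after normalizing $W_r(0)$ to a unit direction (picking up the factor $\|W_r(0)\|_2^{-1}$), then pair $(\|W_r-W_r(0)\|_2)_r$ against $(\|W_r(0)\|_2^{-1})_r$ via Cauchy--Schwarz and use Jensen to reach $c_0 B m^{-1/2}\cdot \EE_{w\sim N(0,I_d/d)}[1/\|w\|_2^2]^{1/2}$. The only difference is cosmetic: you evaluate the inverse second moment explicitly as $d/(d-2)$ via the $\chi^2_d$ distribution (making the role of $d>2$ explicit), whereas the paper simply absorbs $\EE_{w\sim N(0,I_d/d)}[1/\|w\|_2^2]^{1/2}$ into the constant $c_1$.
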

\begin{proof}
By Assumption \ref{asmp1}, we have
\#\label{57129}
&\EE_{\text{init},\mu}\Bigl[ \frac{1}{m}\sum_{r=1}^m \ind\{|W_r(0)^\top x|\le  \|W_r-W_r(0)\|_2\} \Bigr]\notag\\
&\quad\le 
\EE_{\text{init}}\Bigl[ \frac{1}{m}\sum_{r=1}^m  c_0\cdot\|W_r-W_r(0)\|_2/\|W_r(0)\|_2 \Bigr].
\#
Applying H\"{o}lder's inequality to the right-hand side, we obtain
\#\label{4101136}
&\EE_{\text{init},\mu}\Bigl[ \frac{1}{m}\sum_{r=1}^m \ind\{|W_r(0)^\top x|\le  \|W_r-W_r(0)\|_2\} \Bigr]\notag\\
&\quad\le{c_0}/{m}\cdot \EE_{\text{init}}\Bigl[ \Bigl(\sum_{r=1}^m \|W_r-W_r(0)\|_2^2 \Bigr)^{1/2}\cdot\Bigl( \sum_{r=1}^m \frac{1}{\|W_r(0)\|_2^2}\Bigr)^{1/2}\Bigr]\notag\\
&\quad\le c_0 B \cdot m^{-1/2}\cdot \EE_{w\sim N(0,I_d/d)}\bigl[1/\|w\|_2^2\bigr]^{1/2},
\#
where the second inequality follows from
\#\label{4101247}
\EE_{\text{init}}\Bigl[ \Bigl(
\sum_{r=1}^m \frac{1}{\|W_r(0)\|_2^2}\Bigr)^{1/2}
\Bigr]\le \EE_{\text{init}}\Bigl[ 
\sum_{r=1}^m \frac{1}{\|W_r(0)\|_2^2}
\Bigr]^{1/2}=\sqrt{m}\cdot\EE_{w\sim N(0,I_d/d)}\bigl[1/\|w\|_2^2\bigr]^{1/2}.
\#
Setting $c_1=c_0\cdot \EE_{w\sim N(0,I_d/d)}[1/\|w\|_2^2]^{1/2}$, we complete the proof of Lemma \ref{mulm1}.
\end{proof}

\begin{lemma}\label{mulm2}
There exists a constant $c_2>0$ such that for any random vector $W$ with $\|W-W(0)\|_2\le B$, it holds that
\$
\EE_{\text{init}}\biggl[ \EE_{\mu}\bigl[\hat{Q}_0(x)^2\bigr]\cdot\EE_{\mu}\Bigl[\frac{1}{m}\sum_{r=1}^m \ind\{|W_r(0)^\top x|\le  \|W_r-W_r(0)\|_2\} \Bigr]\biggr]\le c_2B\cdot m^{-1/2}.
\$
\end{lemma}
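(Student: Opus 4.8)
The plan is to decouple the two $\mu$-expectations by applying the Cauchy--Schwarz inequality over the randomness of the initialization. Writing $A = \EE_{\mu}[\hat{Q}_0(x)^2]$ and $C = \EE_{\mu}[\frac1m\sum_{r=1}^m \ind\{|W_r(0)^\top x|\le \|W_r-W_r(0)\|_2\}]$, both of which are measurable with respect to $(b,W(0))$ and $W$, the quantity to bound is $\EE_{\text{init}}[A\cdot C]$, and Cauchy--Schwarz gives
\[
\EE_{\text{init}}[A\cdot C] \le \bigl(\EE_{\text{init}}[A^2]\bigr)^{1/2}\bigl(\EE_{\text{init}}[C^2]\bigr)^{1/2}.
\]
I would then bound the two factors separately, deliberately keeping the entire dependence on $B$ inside the second factor.

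For the first factor, Jensen's inequality yields $A^2\le \EE_{\mu}[\hat{Q}_0(x)^4]$, so that $\EE_{\text{init}}[A^2]\le \EE_{\text{init},\mu}[\hat{Q}_0(x)^4]$. Fixing $x$ with $\|x\|_2=1$ and recalling $\hat{Q}_0(x)=m^{-1/2}\sum_r b_r\sigma(W_r(0)^\top x)$, the summands $Y_r:=b_r\sigma(W_r(0)^\top x)$ are i.i.d.\ and mean-zero, with all odd moments vanishing because $b_r$ is symmetric and independent of $W_r(0)$. Expanding the fourth moment of the normalized sum then gives $\EE_{\text{init}}[\hat{Q}_0(x)^4]=m^{-1}\EE[Y_1^4]+3(1-1/m)(\EE[Y_1^2])^2$. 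Using $Y_1^2\le (W_1(0)^\top x)^2\le \|W_1(0)\|_2^2$ together with the Gaussian moments $\EE[\|W_1(0)\|_2^2]=1$ and $\EE[\|W_1(0)\|_2^4]=1+2/d=O(1)$, I obtain $\EE_{\text{init},\mu}[\hat{Q}_0(x)^4]=O(1)$ uniformly in $x$, hence $\EE_{\text{init}}[A^2]=O(1)$.

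For the second factor, I would repeat the first step in the proof of Lemma \ref{mulm1}: by Assumption \ref{asmp1}, conditioning on $(W,W(0))$ and integrating out the independent sample $x\sim\mu$ gives the per-neuron bound $\EE_{\mu}[\ind\{|W_r(0)^\top x|\le \|W_r-W_r(0)\|_2\}]\le c_0\|W_r-W_r(0)\|_2/\|W_r(0)\|_2$, so that $C\le (c_0/m)\sum_r \|W_r-W_r(0)\|_2/\|W_r(0)\|_2$. The crucial point is to square \emph{before} taking the expectation: applying Cauchy--Schwarz to the sum over $r$ and using $\sum_r\|W_r-W_r(0)\|_2^2=\|W-W(0)\|_2^2\le B^2$ yields the pointwise bound $C^2\le (c_0^2B^2/m^2)\sum_r \|W_r(0)\|_2^{-2}$. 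Taking $\EE_{\text{init}}$ and using that $\EE_{w\sim N(0,I_d/d)}[\|w\|_2^{-2}]$ is finite (since $d>2$) gives $\EE_{\text{init}}[C^2]\le c_0^2\,\EE_{w}[\|w\|_2^{-2}]\cdot B^2/m=O(B^2/m)$. Combining the two factors produces $\EE_{\text{init}}[A\cdot C]\le O(1)\cdot O(Bm^{-1/2})=O(Bm^{-1/2})$, which is the claim with a suitable $c_2$.

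The main obstacle is the correct allocation of the $B$- and $m$-dependence across the two factors. The naive estimate $\EE_{\text{init}}[C^2]\le \EE_{\text{init}}[C]\le c_1Bm^{-1/2}$ (valid since $C\in[0,1]$ and by Lemma \ref{mulm1}), combined with the $O(1)$ bound on $\EE_{\text{init}}[A^2]$, would only give the suboptimal rate $O(B^{1/2}m^{-1/4})$. Obtaining the sharp linear-in-$B$, order-$m^{-1/2}$ rate forces the stronger estimate $\EE_{\text{init}}[C^2]=O(B^2/m)$, which in turn requires squaring the Assumption \ref{asmp1} bound first and only then taking expectation, so that the factor $\sum_r\|W_r-W_r(0)\|_2^2\le B^2$ can be extracted via Cauchy--Schwarz rather than being bounded by the weaker $\sum_r\|W_r-W_r(0)\|_2\le B\sqrt{m}$.
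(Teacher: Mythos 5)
Your proposal is correct, and it reaches the stated bound by a route that differs from the paper's in how the factor $\EE_{\mu}[\hat{Q}_0(x)^2]$ is handled. You decouple the two factors at the outset via Cauchy--Schwarz over the initialization, $\EE_{\text{init}}[A\cdot C]\le(\EE_{\text{init}}[A^2])^{1/2}(\EE_{\text{init}}[C^2])^{1/2}$, and then control $\EE_{\text{init}}[A^2]$ through a fourth-moment computation $\EE_{\text{init},\mu}[\hat{Q}_0(x)^4]=O(1)$ (using that the $Y_r=b_r\sigma(W_r(0)^\top x)$ are i.i.d., mean-zero, and symmetric), while your bound $\EE_{\text{init}}[C^2]=O(B^2/m)$ squares the per-neuron Assumption \ref{asmp1} estimate before integrating, exactly as you note is necessary to keep the rate at $Bm^{-1/2}$ rather than $B^{1/2}m^{-1/4}$. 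The paper instead keeps the product $A\cdot C$ intact: it bounds the indicator factor pointwise by $c_0B/m\cdot(\sum_r\|W_r(0)\|_2^{-2})^{1/2}$ (the same Assumption \ref{asmp1} plus Cauchy--Schwarz-over-$r$ step you use), expands $\EE_{\mu}[\hat{Q}_0(x)^2]$ into diagonal and cross terms, kills the cross terms via $\EE_{\text{init}}[b_rb_s]=0$ (possible because, after the $W$-dependent factor is bounded by $B$, the remaining coefficient depends only on $W(0)$), bounds the diagonal by $\sum_r\|W_r(0)\|_2^2$, and only then applies Cauchy--Schwarz over the initialization to $\EE_{\text{init}}[(\sum_r\|W_r(0)\|_2^2)\cdot(\sum_r\|W_r(0)\|_2^{-2})^{1/2}]$. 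Your version is more modular --- the two factors are estimated completely independently, and the uniform $O(1)$ bound on the fourth moment of $\hat{Q}_0$ is a reusable fact --- at the price of a slightly longer moment expansion; the paper's version needs only second moments of $\hat{Q}_0$ but leans on the sign cancellation of the $b_r$'s and on ordering the bounds so that the cancellation argument remains valid. Both arguments ultimately rest on the same two ingredients: the per-neuron density bound with Cauchy--Schwarz over neurons extracting $B\cdot(\sum_r\|W_r(0)\|_2^{-2})^{1/2}$, and finiteness of $\EE_{w\sim N(0,I_d/d)}[\|w\|_2^{-2}]$, which requires $d>2$.
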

\begin{proof}
By the definition of $\hat{Q}_0(x) = \hat{Q}_0(x; W(0))$ in \eqref{q0}, we have
\$
\EE_{\mu}\bigl[\hat{Q}_0(x)^2\bigr]
=1/m\cdot \EE_{\mu}\Bigl[
\sum_{r=1}^{m} \sigma\bigl(W_r(0)^\top x\bigr)^2+\sum_{r\neq s} b_rb_s \sigma\bigl(W_r(0)^\top x\bigr)\sigma\bigl(W_s(0)^\top x\bigr)
 \Bigr]. 
\$
Following the same derivation of \eqref{57129} and \eqref{4101136}, we have
\$
&\EE_{\text{init}}\biggl[ \EE_{\mu}\bigl[\hat{Q}_0(x)^2\bigr]\cdot\EE_{\mu}\Bigl[\frac{1}{m}\sum_{r=1}^m \ind\{|W_r(0)^\top x|\le  \|W_r-W_r(0)\|_2\} \Bigr]\biggr]\\
&\quad\le \EE_{\text{init}}\biggl[ 1/m\cdot 
\EE_{\mu}\Bigl[
\sum_{r=1}^{m} \sigma\bigl(W_r(0)^\top x\bigr)^2+\sum_{r\neq s} b_rb_s \sigma\bigl(W_r(0)^\top x\bigr)\sigma\bigl(W_s(0)^\top x\bigr)
 \Bigr] \\
 &\quad\qquad\qquad\cdot c_0/m\cdot \Bigl( \sum_{r=1}^m \|W_r-W_r(0)\|_2^2 \Bigr)^{1/2}\cdot \Bigl( \sum_{r=1}^m \frac{1}{\|W_r(0)\|_2^2}\Bigr)^{1/2}
\biggr].
\$
Note that $b_r$ and $b_s$ are independent of $W(0)$ and $\EE_{\text{init}}[b_rb_s]=0$. Thus, we obtain
\$
&\EE_{\text{init}}\biggl[ \EE_{\mu}\bigl[\hat{Q}_0(x)^2\bigr]\cdot\EE_{\mu}\Bigl[\frac{1}{m}\sum_{r=1}^m \ind\{|W_r(0)^\top x|\le  \|W_r-W_r(0)\|_2\} \Bigr]\biggr]\\
&\quad\le c_0B/m^2 \cdot \EE_{\text{init}}\biggl[ 
\EE_{\mu}\Bigl[
\sum_{r=1}^{m} \sigma\bigl(W_r(0)^\top x\bigr)^2
 \Bigr] \cdot \Bigl( \sum_{r=1}^m \frac{1}{\|W_r(0)\|_2^2}\Bigr)^{1/2}
\biggr].
\$
By the definition of $\sigma(W_r(0)^\top x)$ and the fact that $\|x\|_2=1$, we have
\$
\EE_{\mu}\Bigl[
\sum_{r=1}^{m} \sigma\bigl(W_r(0)^\top x\bigr)^2
 \Bigr]\le
 \sum_{r=1}^m \|W_r(0)\|_2^2.
\$
Hence, it holds that
\#\label{518530}
&\EE_{\text{init}}\biggl[ \EE_{\mu}\bigl[\hat{Q}_0(x)^2\bigr]\cdot\EE_{\mu}\Bigl[\frac{1}{m}\sum_{r=1}^m \ind\{|W_r(0)^\top x|\le  \|W_r-W_r(0)\|_2\} \Bigr]\biggr]\notag\\
&\quad\le c_0B/m^2 \cdot \EE_{\text{init}}\Bigl[ 
\Bigl( \sum_{r=1}^m \|W_r(0)\|_2^2\Bigr)\cdot \Bigl( \sum_{r=1}^m \frac{1}{\|W_r(0)\|_2^2}\Bigr)^{1/2}
\Bigr]\notag\\
&\quad\le
 c_0B/m^2 \cdot \EE_{\text{init}}\Bigl[ 
\Bigl( \sum_{r=1}^m \|W_r(0)\|_2^2\Bigr)^2\Bigr]^{1/2}\cdot \EE_{\text{init}}\Bigl[ \sum_{r=1}^m \frac{1}{\|W_r(0)\|_2^2}\Bigr]^{1/2}.
\#
By \eqref{4101247} and the fact that 
\$
\EE_{\text{init}}\Bigl[ 
\Bigl( \sum_{r=1}^m \|W_r(0)\|_2^2\Bigr)^2\Bigr]=m\cdot \EE_{w\sim N(0,I_d/d)}\bigl[\|w\|_2^4\bigr]+m(m-1)\cdot \EE_{w\sim N(0,I_d/d)}\bigl[\|w\|_2^2\bigr]^2=O(m^2),
\$
the right-hand side of \eqref{518530} is $O(Bm^{-1/2})$. Setting 
\$c_2=c_0\cdot
\Bigl( \EE_{w\sim N(0,I_d/d)}\bigl[\|w\|_2^4\bigr]+\EE_{w\sim N(0,I_d/d)}\bigl[\|w\|_2^2\bigr]^2\Bigr)^{1/2}\cdot\EE_{w\sim N(0,I_d/d)}\bigl[1/\|w\|_2^2\bigr]^{1/2}, 
\$
 we complete the proof of Lemma \ref{mulm2}.
\end{proof}

\begin{lemma}\label{mulm3}
For any random vector $W$ with $\|W-W(0)\|_2\le B$, we have
\#\label{44146}
\EE_{\text{init}}\biggl[ \EE_{s\sim\mu}\bigl[\max_{a\sim\cA}\hat{Q}_0(s,a)^2\bigr]\cdot\EE_{\mu}\Bigl[\frac{1}{m}\sum_{r=1}^m \ind\{|W_r(0)^\top x|\le  \|W_r-W_r(0)\|_2\} \Bigr]\biggr]=O(Bm^{-1/2}),\notag\\
\EE_{\text{init}}\biggl[ \EE_{s\sim\mu}\bigl[\softmax_{a\sim\cA}\hat{Q}_0(s,a)^2\bigr]\cdot\EE_{\mu}\Bigl[\frac{1}{m}\sum_{r=1}^m \ind\{|W_r(0)^\top x|\le  \|W_r-W_r(0)\|_2\} \Bigr]\biggr]=O(Bm^{-1/2}).\notag
\#
\end{lemma}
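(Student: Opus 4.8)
The plan is to reduce both inequalities to the estimate already established in Lemma \ref{mulm2}, exploiting that $\cA$ is finite. The essential new difficulty, relative to Lemma \ref{mulm2}, is that the action attaining $\max_{a\in\cA}\hat{Q}_0(s,a)$ depends on the random signs $b=(b_1,\ldots,b_m)$, which appears to destroy the orthogonality relation $\EE_{\text{init}}[b_rb_s]=0$ that was the crux of that lemma. I would circumvent this by first bounding the maximum by a sum over the finitely many actions, thereby decoupling the problem into a fixed collection of per-action terms to which the Lemma \ref{mulm2} argument applies verbatim.

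Concretely, for the first inequality I would start from the elementary bound $\max_{a\in\cA}\hat{Q}_0(s,a)^2 \le \sum_{a\in\cA}\hat{Q}_0(s,a)^2$, valid since $\cA$ is finite, so that $\EE_{s\sim\mu}[\max_{a\in\cA}\hat{Q}_0(s,a)^2] \le \sum_{a\in\cA}\EE_{s\sim\mu}[\hat{Q}_0(s,a)^2]$. Substituting this into the left-hand side and interchanging the finite sum with $\EE_{\text{init}}$, it suffices to bound, for each fixed $a\in\cA$, the quantity $\EE_{\text{init}}[\EE_{s\sim\mu}[\hat{Q}_0(s,a)^2]\cdot C_m]$, where $C_m=\EE_{\mu}[\frac{1}{m}\sum_{r=1}^m\ind\{|W_r(0)^\top x|\le\|W_r-W_r(0)\|_2\}]$ denotes the indicator average. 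This is exactly the object controlled in Lemma \ref{mulm2} (with the fixed feature $\psi(s,a)$ in place of $x$): after applying Assumption \ref{asmp1} and H\"older's inequality to $C_m$ as in \eqref{4101136}, one replaces $C_m$ by the $b$-independent majorant $(c_0B/m)(\sum_r\|W_r(0)\|_2^{-2})^{1/2}$, at which point expanding $\hat{Q}_0(s,a)^2=\frac{1}{m}\sum_r\sigma(W_r(0)^\top\psi(s,a))^2+\frac{1}{m}\sum_{r\neq s}b_rb_s\sigma(\cdots)\sigma(\cdots)$ and using $\EE_{\text{init}}[b_rb_s]=0$ annihilates the cross terms, leaving precisely the bound derived in \eqref{518530}. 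Summing the resulting $O(Bm^{-1/2})$ estimates over the finitely many $a\in\cA$ introduces only the constant factor $|\cA|$ and yields the first claim.

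For the softmax case I would additionally invoke the variational identity \eqref{518135}, which gives $|\softmax_{a\in\cA}\hat{Q}_0(s,a)|\le\max_{a\in\cA}|\hat{Q}_0(s,a)|+\beta^{-1}\log|\cA|$ and hence $(\softmax_{a\in\cA}\hat{Q}_0(s,a))^2 \le 2\max_{a\in\cA}\hat{Q}_0(s,a)^2 + 2\beta^{-2}\log^2|\cA|$. The first term is handled exactly as above. For the additive constant $2\beta^{-2}\log^2|\cA|$, which is independent of $m$ and of the randomness, multiplying by $C_m$ and taking $\EE_{\text{init}}$ reduces to $\EE_{\text{init}}[C_m]=O(Bm^{-1/2})$ directly from Lemma \ref{mulm1}. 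Combining the two contributions gives the second claim.

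The step I expect to require the most care is the decoupling described in the second paragraph: one must upper-bound the indicator average $C_m$ by its $b$-free H\"older majorant \emph{before} taking the expectation over $b$, since $W$, and hence $C_m$ itself, may depend on $b$; only then does the orthogonality $\EE_{\text{init}}[b_rb_s]=0$ legitimately eliminate the cross terms, mirroring the mechanism in Lemma \ref{mulm2}. Everything else is a routine repetition of the Gaussian moment computations already carried out there.
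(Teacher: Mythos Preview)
Your proposal is correct and follows essentially the same route as the paper: bound the maximum (respectively softmax) over the finite action set by a sum over actions plus an $m$-independent constant, then apply the Lemma \ref{mulm2} argument to each fixed-action term and Lemma \ref{mulm1} to the constant. Your explicit remark that one must replace $C_m$ by its $b$-free H\"older majorant \emph{before} invoking $\EE_{\text{init}}[b_rb_s]=0$ is exactly the point the paper handles implicitly, and your softmax bound via $(a+b)^2\le 2a^2+2b^2$ is a slightly more careful variant of the paper's terse estimate.
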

\begin{proof}
The proof mirrors that of Lemma \ref{mulm2}. We utilize the fact that $\cA$ is finite, so that the expectation of the maximum can be upper bounded by a finite sum of expectations,
\$
 &\EE_{s\sim\mu}\bigl[\max_{a\in\cA}\hat{Q}_0(s,a)^2\bigr]\le
  \EE_{s\sim\mu}\Bigl[\sum_{a\in\cA}\hat{Q}_0(s,a)^2\Bigr]\le \sum_{a\in\cA} \EE_{s\sim\mu}\bigl[\hat{Q}_0(s,a)^2\bigr].\\
  &\EE_{s\sim\mu}\bigl[\softmax_{a\in\cA}\hat{Q}_0(s,a)^2\bigr]\le \sum_{a\in\cA} \EE_{s\sim\mu}\bigl[\hat{Q}_0(s,a)^2\bigr]+\beta^{-1}\cdot\log|\cA|.
\$
For each expectation $\EE_{s\sim\mu}[\hat{Q}_0(s,a)^2]$, note that the distribution of $(s, a)$ is independent of the initialization. Hence, the same proof of Lemma \ref{mulm2} is applicable, as $\EE_{s\sim\mu}[\hat{Q}_0(s,a)^2]$ plays the same role of $\EE_{\mu}[\hat{Q}_0(x)^2]$. Thus, we obtain the same upper bound in Lemmas \ref{mulm1} and \ref{mulm2} except for an extra factor involving $|\cA|$, which however does not change the order of $m$.
\end{proof}

\subsection{Proof of Lemma \ref{maxqdiff}}\label{proof:aux}
\begin{proof}
In the proof of Lemma \ref{qdiff}, we show that, for any $s\in\cS$ and $a\in\cA$,
\$
\bigl|\hat{Q}_t(s,a)-\hat{Q}_0\bigl(s,a;W(t)\bigr)\bigr|^2&\le 
\frac{4B^2}{m} \sum_{r=1}^m \ind\{|W_r(0)^\top x|\le  \|W_r(t)-W_r(0)\|_2\}.
\$
Taking maximum over $a$, we obtain
\$
\max_{a\in\cA}\bigl|\hat{Q}_t(s,a)-\hat{Q}_0\bigl(s,a;W(t)\bigr)\bigr|^2&\le 
\frac{4B^2}{m} \sum_{r=1}^m \max_{a\in\cA}\ind\{|W_r(0)^\top \psi(s,a)|\le  \|W_r(t)-W_r(0)\|_2\}.
\$
Taking expectation with respect to the random initialization and the stationary distribution of $s$, we obtain
\$
&\EE_{\text{init},s\sim\mue}\Bigl[ \max_{a\in\cA}\bigl|\hat{Q}_t(s,a)-\hat{Q}_0\bigl(s,a;W(t)\bigr)\bigr|^2\Bigr]\\
&\quad
\le \EE_{\text{init},s\sim\mue}\Bigl[ \frac{4B^2}{m} \sum_{r=1}^m \max_{a\in\cA}\ind\{|W_r(0)^\top \psi(s,a)|\le  \|W_r(t)-W_r(0)\|_2\}\Bigr].
\$
By Assumption \ref{asmp2}, it holds that
\$
&\EE_{\text{init},s\sim\mue}\Bigl[ \max_{a\in\cA}\bigl|\hat{Q}_t(s,a)-\hat{Q}_0\bigl(s,a;W(t)\bigr)\bigr|^2\Bigr] \le
\EE_{\text{init}}\Bigl[ 
\frac{4B^2}{m} \sum_{r=1}^m  c_3 \cdot \|W_r(t)-W_r(0)\|_2/\|W_r(0)\|_2
\Bigr].
\$
Applying H\"{o}lder's inequality to the right-hand side, we obtain
\$
&\EE_{\text{init},s\sim\mue}\Bigl[ \max_{a\in\cA}\bigl|\hat{Q}_t(s,a)-\hat{Q}_0\bigl(s,a;W(t)\bigr)\bigr|^2\Bigr]\\
&\quad\le{4B^2c_3}/{m}\cdot\EE_{\text{init}}\Bigl[ \Bigl(\sum_{r=1}^m \|W_r-W_r(0)\|_2^2 \Bigr)^{1/2}\cdot\Bigl( \sum_{r=1}^m \frac{1}{\|W_r(0)\|_2^2}\Bigr)^{1/2}\Bigr]\\
&\quad\le {4B^3c_3}\cdot m^{-1/2}\cdot \EE_{w\sim N(0,I_d/d)}\bigl[1/\|w\|_2^2\bigr]^{1/2}.
\$
Setting $c_4=4c_3\cdot \EE_{w\sim N(0,I_d/d)}[1/\|w\|_2^2]^{1/2}$, we finish the proof of Lemma \ref{maxqdiff}.
\end{proof}

\end{appendix}

\end{document}